\documentclass[11pt]{article}
\usepackage{graphicx} 
\usepackage{multirow} 
\newcommand{\whiteboxTV}{\mathsf{LogitAccessTV}}
\newcommand{\Osamp}{\calO_{\mathrm{samp}}}
\newcommand{\Odist}{\calO_{\mathrm{logit}}}
\newcommand{\Osketch}{\calO_{\mathrm{ndist}}}
\newcommand{\Onoisy}{\calO_{\mathrm{ndist}}}
\newcommand{\softmax}{\operatorname{softmax}}

\usepackage{fullpage} 
\usepackage[english]{babel}
\usepackage[utf8x]{inputenc}
\usepackage[T1]{fontenc}
\usepackage{amsmath}
\usepackage{amssymb}
\usepackage{amsthm}
\usepackage{bbm,bm}
\usepackage{bbold}
\usepackage{parskip}
\usepackage{thm-restate}
\usepackage{booktabs}
\usepackage{array}
\usepackage{algorithm}
\usepackage{algpseudocode}
\usepackage{graphicx}
\usepackage{mathtools}
\usepackage{xcolor}
\usepackage[margin=1in]{geometry}
\usepackage{hyperref}
\hypersetup{
	colorlinks=true,
	linkcolor=blue!70!black,
	citecolor=blue!70!black,
	urlcolor=blue!70!black
}
\usepackage{cleveref}
\newtheorem{theorem}{Theorem}
\newtheorem{lemma}{Lemma}

\newtheorem{definition}{Definition}
\newtheorem{corollary}{Corollary}
\newtheorem{proposition}{Proposition}

\newtheorem{problem}{Problem}

\newtheorem{remark}{Remark}

\newcommand{\defeq}{:=}

\newcommand{\eps}{\varepsilon}

\newcommand{\R}{\mathbb{R}}

\newcommand{\N}{\mathbb{N}}

\newcommand{\half}{\frac{1}{2}}

\newcommand{\E}{\mathbb{E}}

\newcommand{\Var}{\textup{Var}}

\newcommand{\dd}{\textup{d}}

\newcommand{\Par}[1]{\left(#1\right)}
\newcommand{\Brack}[1]{\left[#1\right]}

\newcommand{\Abs}[1]{\left|#1\right|}

\newcommand{\ind}{\mathbb{I}}

\newcommand{\supp}{\mathrm{supp}}

\newcommand{\sig}{\sigma}

\newcommand{\csketch}{\mathsf{CoupledIncrement}}
\newcommand{\tvnoise}{\mathsf{NoisyDistAccessTV}}

\newcommand{\KL}[2]{\mathrm{KL}\left(#1\|#2\right)}
\newcommand{\CS}[2]{\chi^2\left(#1\|#2\right)}
\newcommand{\TV}[1]{\mathrm{TV}\left(#1\right)}

\newcommand{\hp}{\widehat{p}}
\newcommand{\hpi}{\widehat{\pi}}
\newcommand{\hmu}{\widehat{\mu}}
\newcommand{\hZ}{\widehat{Z}}

\newcommand{\Unif}{\mathrm{Unif}}
\newcommand{\Bern}{\mathrm{Bern}}

\newcommand{\simiid}{{\overset{\text{i.i.d.}}{\sim}}}
\newcommand{\var}[1]{\mathrm{\bbV ar}\left[#1\right]}

\newcommand{\Prob}{\mathbb{P}}
\renewcommand{\Pr}{\Prob}

\newcommand{\calA}{{\mathcal{A}}}

\newcommand{\calE}{{\mathcal{E}}}

\newcommand{\calG}{{\mathcal{G}}}

\newcommand{\calO}{{\mathcal{O}}}
\newcommand{\calP}{{\mathcal{P}}}

\newcommand{\calZ}{{\mathcal{Z}}}

\newcommand{\bbE}{{\mathbb{E}}}

\newcommand{\bbV}{{\mathbb{V}}}

\newcommand{\paren}[1]{\left(#1\right)}
\newcommand{\bra}[1]{\left[#1\right]}
\newcommand{\cbra}[1]{\left\{#1\right\}}

\crefname{assumption}{assumption}{assumptions}

\definecolor{burntorange}{rgb}{0.8, 0.33, 0.0}

\usepackage{natbib}
\title{Total Variation Distance Estimation in Autoregressive Models}
\author{Eric Price\thanks{University of Texas at Austin, \texttt{ecprice@cs.utexas.edu}.} \and Kevin Tian\thanks{University of Texas at Austin, \texttt{kjtian@cs.utexas.edu}.} \and Zhiyang Xun\thanks{University of Texas at Austin, \texttt{zxun@cs.utexas.edu}.} \and Yusong Zhu\thanks{University of Texas at Austin, \texttt{zhuys@utexas.edu}.}}

\date{}

\begin{document}

\maketitle{\let\thefootnote\relax\footnotetext{Authors are ordered alphabetically.}}
\begin{abstract}
Modern LLM deployments use a number of implementation choices and inference optimizations (e.g., batching, custom kernels, quantization) on top of fixed weights, so two engines serving "the same model" can produce meaningfully different distributions.  We study the problem of estimating the total variation (TV) distance between two length-$n$ autoregressive distributions to additive error $\eps$, under three access models:
\begin{itemize}
\item Under \emph{sample} access, we use $\widetilde{O}(\frac{n^2 K}{\eps^2})$ queries, where $K$ is the maximum support of the next-token distribution. This improves upon the $\widetilde{O}(\frac{n^3 m}{\eps^5})$-query estimator of \cite{meel2025distance}, where $m \ge K$ is the total size of the token alphabet.
\item Under \emph{logit} access, we use $O(\frac{n}{\eps^2})$ queries, and this is tight.
\item Under \emph{noisy logit} access, we can smoothly interpolate between the above two guarantees: if probability values are given to $\sigma$ relative error, we use $\widetilde{O}(\frac{n + n^2\sigma^2}{\eps^2})$ queries.
\end{itemize}
We complement our theoretical results with an empirical evaluation of our algorithms, for example measuring the distance between \texttt{sglang} and \texttt{vllm} serving identical weights. 
Our experiments highlight the robustness and practicality of estimating the total variation distance, which remains estimable where the KL divergence is infinite. Our code is available at
\href{https://github.com/XunZhiyang/llm-tv-estimation}
{\texttt{https://github.com/XunZhiyang/llm-tv-estimation}}.
\end{abstract}
\section{Introduction}

While the behavior of a large language model (LLM) is often idealized as being determined by its model weights, in practice a wide variety of choices --- GPUs, batching, kernels, and low-level numerical effects --- can noticeably change its output behavior~\cite{he2025nondeterminism}.  Other inference-time optimizations, e.g., weight quantization \cite{frantar2022gptq, lin2024awq}, KV quantization \cite{LiuYJZXBC024, HooperKMMSKG24}, and lossy speculative decoding methods~\cite{ZhouLRMRKKA24, LiWZZ24}, deliberately introduce approximation errors in exchange for greater efficiency. Other times, it has been documented that model providers simply serve with bugs~\cite{ChuTLI25}. More broadly, different providers of the same open-weight models can exhibit significant differences in latency, throughput, cost, and performance~\cite{artificialanalysis_gptoss120b,clark2025exacto}. 

These considerations motivate the problem of eliciting differences between two LLMs. We approach this issue from the perspective of \emph{distribution testing}: under an appropriate notion of API access to two distributions $\mu, \pi$ over $\Sigma^n$, representing LLMs generating length-$n$ sequences over a token alphabet $\Sigma$, can we estimate the distance between $\mu, \pi$? From a practical perspective, such an estimator could then be used by a user for comparing model providers (how close is the cheap provider to the expensive one?), and conversely, a model provider can use the estimator to test how well a candidate more-efficient inference stack matches their old stack.

In this paper, to evaluate such distributional changes, we focus on measuring the \emph{total variation (TV) distance} between two LLMs. The TV distance, whose definition is recalled in \eqref{eq:distance_def}, is a convenient and interpretable statistical distance. For example, bounding the TV distance also directly implies bounds on the change to any distinguishing test, e.g., the loss on a fixed evaluation metric.

Prior work on evaluating distributional changes in LLMs \cite{fireworks_quantization,DuttaKKR24} often measure the KL divergence, whose definition is also recalled in \eqref{eq:distance_def}. From a practical standpoint, this  has several disadvantages. The KL divergence between distributions is unbounded when their supports are disjoint, so it is not possible to estimate in general to $\eps$ precision from finite samples, as this would require support recovery.
Smoothed approximations to the KL can be used by e.g., introducing pseudocounts, but these strategies do not have a uniform approximation guarantee and require hyperparameter selection. In contrast, the TV distance is bounded in $[0, 1]$, and therefore it inherits an interpretable scale. Moreover, the TV distance is robust to small implementation differences, such as restricting next-token distributions to their top-$k$ elements for $k = 20$ vs.\ $k = 30$, or censoring swear words. Such differences alter the support and thus cause KL to diverge. Measuring KL has thus far remained the practitioner default largely because of the chain rule: the sequence-level KL divergence is the sum of the expected per-token KL divergences, so per-step logit comparisons give unbiased estimators.  TV has no such decomposition, which is the technical obstacle this paper addresses.  
For further discussion on the impact of the choice of distance metric, see Appendix~\ref{sec:TVKL}.

\subsection{Our results}

We describe results under three models of accessing LLMs: sample access, logit access, and noisy distribution access. These are formalized in Definitions~\ref{def:next_token_logprob},~\ref{def:next_token_sample}, and~\ref{def:conditional-sketch}, but we informally describe them here. Throughout, we fix two distributions $\pi$, $\mu$ over $\Sigma^n$ whose TV distance we want to estimate, and we define $m \defeq |\Sigma|$. We also let $K \in [m]$ denote the maximum support size of any next-token distribution $\pi(x_i \mid x_{<i})$ or $\mu(x_i \mid x_{<i})$, where $x_{<i} \in \Sigma^{i - 1}$ is a specified prefix. As one motivation, LLM APIs typically support specifying a $k\in \N^+$ and sampling from the top-$k$ tokens. Due to nondeterminism in which tokens reach the threshold, the total support size $K$ is somewhat larger than $k$, but we find it to be well below $2k$ in practice (Figure~\ref{fig:topk-union} in Appendix~\ref{app:exp-topk-union}).

\paragraph{Sample access.} The simplest and most general access model we consider is \emph{sample access}: we can query any prefix $x_{<i}$ and get a sample from the next-token distribution $\pi(x_i \mid x_{<i})$. This oracle is explicitly studied by \cite{adar2024improved, meel2025distance}, and has an intellectual history in the recent theoretical computer science literature on conditional property testing \cite{chakraborty2013power, acharya2014chasm, canonne2015testing, bhattacharyya2018property}.

\begin{theorem}[Informal, cf.\ Corollary~\ref{cor:blackbox-mlmc}, Corollary~\ref{cor:sampler-lower}]\label{thm:n2upper}
There is an algorithm that estimates $\TV{\pi, \mu}$ up to error $\eps$ with probability $\ge \frac 9 {10}$, using $O(\frac{n^2 K}{\eps^2} \cdot \log^2(\frac 1 \eps))$ queries to prefix sampling oracles to both $\pi$ and $\mu$. Moreover, any algorithm achieving the same guarantee requires $\Omega(\frac{nK}{\eps^2 \log K})$ queries.
\end{theorem}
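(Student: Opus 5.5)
The upper bound rests on the identity $\TV{\pi,\mu} = \E_{x\sim\pi}\bigl[f(R(x))\bigr]$, where $f(r) \defeq (1-r)_+$ and
\[
R(x) \defeq \frac{\mu(x)}{\pi(x)} = \prod_{i=1}^n \frac{\mu(x_i\mid x_{<i})}{\pi(x_i\mid x_{<i})}.
\]
Sampling $x\sim\pi$ costs $n$ prefix queries. With logit access $R(x)$ is exact and $O(1/\eps^2)$ outer samples suffice. With only sample access I will replace each factor by a noisy estimate and use multilevel Monte Carlo to control the bias introduced by the nonlinear $f$.

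\textbf{Factor estimator.} For each prefix $x_{<i}$ I draw $N$ fresh samples from both $\mu(\cdot\mid x_{<i})$ and $\pi(\cdot\mid x_{<i})$. From these I form an estimate $\widehat{r}_i$ of the ratio whose relative error has second moment $O(K/N)$ on average over $x_i\sim\pi(\cdot\mid x_{<i})$. The average over $x_i$ is the point: summing $p(1-p)/(Np)$ over a support of size $K$ gives $O(K/N)$. To keep the product nearly unbiased, I will use independent estimators for numerator and denominator. For the denominator I would try an inverse-probability estimator, for instance waiting times until $x_i$ reappears, truncated so that its variance stays bounded. The product $\widehat{R}=\prod_i \widehat{r}_i$ then has $\E\bigl|\widehat{R}-R\bigr|^2 \lesssim nK/N$, in the regime $N\gtrsim nK$ where the errors stay small and do not compound. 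Since $f$ is $1$-Lipschitz, this gives $\E\bigl|f(\widehat{R})-f(R)\bigr| \lesssim \sqrt{nK/N}$.

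\textbf{Multilevel scheme.} Set $N_\ell = 2^\ell$ for $\ell\le L$, with $L=\log_2(nK/\eps^2)+O(1)$ so that the finest-level bias is at most $\eps/2$. At level $\ell$ I couple a fine estimate using $N_\ell$ samples per factor with two coarse estimates built from the two halves of those samples. Each level difference then has variance $O(nK/N_\ell)$ and costs $O(nN_\ell)$ queries, so variance times cost is $O(n^2K)$ uniformly in $\ell$. Allocating accuracy $\eps/\sqrt{L}$ to each level and summing the costs gives $O(n^2K L^2/\eps^2) = O\bigl(\frac{n^2K}{\eps^2}\log^2\frac1\eps\bigr)$, up to the $\log(nK)$ terms hidden in $L$. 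The main obstacle is the variance bound for the coupled level differences when some $\widehat{r}_i$ is far off, since heavy tails of the product could break the $nK/N_\ell$ scaling. I would handle this by truncating $\widehat{R}$ at a constant, which is harmless because $f$ is constant for $r\ge1$, and by bounding fourth moments of $\log\widehat{r}_i$ via a martingale argument along the sequence.

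\textbf{Lower bound.} I would reduce from single-step TV estimation over a support of size $K$, which needs $\Omega(K/(\eps^2\log K))$ samples by the known Valiant--Valiant / Jiao et al.\ bounds. To gain the factor $n$, I would hide the single informative step at a uniformly random position among $n$ positions, with all other conditionals identical and deterministic. Each query of an algorithm that does not know the position then probes the relevant conditional with probability about $1/n$. A direct-sum or averaging argument over the hidden position should then give $\Omega(nK/(\eps^2\log K))$. Making this averaging rigorous against adaptive prefix choices is the delicate step. I would first try embedding the position into a hidden random prefix that the algorithm can reach only by sampling.
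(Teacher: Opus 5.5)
\textbf{Upper bound.} The step that fails is your per-factor claim that $\widehat r_i$ has relative second moment $O(K/N)$ on average over $x_i\sim\pi(\cdot\mid x_{<i})$. With it goes the bound $\E|\widehat R-R|^2\lesssim nK/N$.

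Write $p=\pi(\cdot\mid x_{<i})$ and $q=\mu(\cdot\mid x_{<i})$. Your computation $\sum_a \frac{p(a)(1-p(a))}{Np(a)}\le\frac KN$ is valid only for a distribution's own conditional evaluated at its own sample. The numerator $\widehat q(x_i)$ is evaluated at $x_i\sim p$, so its relative variance is $\sum_a p(a)\frac{1-q(a)}{Nq(a)}$. This is unbounded whenever $q$ puts much less mass than $p$ on some token. Moreover, even with uniformly small relative errors, $\E_{x\sim\pi}|\widehat R-R|^2$ scales with $\E_{x\sim\pi}[R^2]=1+\chi^2(\mu\|\pi)$, which is also unbounded. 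Your repair via fourth moments of $\log\widehat r_i$ cannot work either, because an empirical frequency is $0$ with positive probability.

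The missing idea is to measure each product estimator only against its own distribution. The paper does this with the mixture representation $\TV{\pi,\mu}=\E_{X\sim M}\bigl[\frac{|\pi(X)-\mu(X)|}{\pi(X)+\mu(X)}\bigr]$, where $M=\frac{\pi+\mu}{2}$. This functional is bounded and scale-free. It needs only the unbiased forward products $\hpi_r(x),\hmu_r(x)$, so no inverse-probability estimation is required at all. The stability bound $|\phi(c,d)-\phi(a,b)|\le 4\frac{|c-a|+|d-b|}{a+b}$ then gives $\E_{X\sim M}(\hZ-Z)^2\le 16\chi^2(\hpi_r\|\pi)+16\chi^2(\hmu_r\|\mu)$, and the sequential bound $(1+\frac Kr)^n-1$ applies to each term.

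Your likelihood-ratio route may be salvageable by a change of measure. Since $f$ is flat on $[1,\infty)$, one can check that $\pi(x)\bigl(f(\widehat Q/\pi(x))-f(\mu(x)/\pi(x))\bigr)^2\le 2(\widehat Q-\mu(x))^2/\mu(x)$, where $\widehat Q$ is the product of numerator estimates. But that is not the argument you give, and you would still need a nearly unbiased inverse-probability estimator with controlled cost.

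Finally, starting the levels at $N_0=1$ only yields $\log^2(nK/\eps)$. To get the stated $\log^2(1/\eps)$, start at $N_0\approx nK$, as the paper does with $r_0\approx cn\sigma^2$.

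\textbf{Lower bound.} Hiding one informative step at an unknown position cannot give the factor $n$, because the cost of finding it is additive, not multiplicative. With all other conditionals deterministic and shared, an adaptive algorithm can sample one trajectory and re-query its prefixes. This locates the informative prefix with $O(n)$ queries. After that, every query at that prefix returns a full draw from $P$ (resp.\ $Q$), so your instance is solvable with $O(n+\frac{K}{\eps^2\log K})$ queries. Hiding the position behind a random prefix reachable by sampling has the same defect: any prefix reached once can be re-queried at unit cost.

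The missing idea is the paper's rare-escape embedding, which spreads the information evenly over all prefixes. At every alive prefix $0^i$, the next token is the stay token $0$ with probability $1-\frac1n$ and a draw from $P$ (resp.\ $Q$) with probability $\frac1n$; after an escape everything is deterministic. Then $\TV{\pi_P,\mu_Q}=(1-(1-\frac1n)^n)\TV{P,Q}$. Every sampler query, at any prefix and however adaptively chosen, can be simulated with a $\Bern(\frac1n)$-distributed number of fresh i.i.d.\ samples from $P$ or $Q$. So $B$ queries consume $B/n$ samples in expectation. After Markov truncation, the Jiao et al.\ bound forces $B=\Omega(\frac{nK}{\eps^2\log K})$, and there is no adaptivity issue left to handle.
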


The most directly-related result to our upper bound in Theorem~\ref{thm:n2upper} is by \cite{meel2025distance}, whose Theorem 1 uses $O(\frac{n^3 m}{\eps^5})$ prefix sampling oracle queries.  Theorem~\ref{thm:n2upper} is better in terms of sequence length $n$, accuracy $\eps$, and it saves an $\frac{m}{K}$ factor when the support is sparse.\footnote{Depending on $m$ rather than $K$ appears inherent to the \cite{meel2025distance} strategy, which involved ``taming'' distributions by padding with a $\frac 1 m$-scaled uniform distribution (see discussion after their Theorem 4.8).}  This $\frac{m}{K}$ factor can be quite large: for reference, Qwen has $m$=151,643 tokens and recommends sampling with $k=20$~\cite{qwen3.5}.

\paragraph{Logit queries.} Beyond providing sample access, many LLM APIs also return the logits: we can query any prefix $x_{<i}$ and see the full vector $\pi(x_i \mid x_{<i})$.  This is much more powerful --- for instance, it lets us compute $\pi(x)$ with $n$ queries --- leading to a better sample complexity.

\begin{theorem}[Informal, cf.\ Theorems~\ref{thm:whitebox_lr},~\ref{thm:prefix-conditional-lower-bound}]\label{thm:n1upper}
There is an algorithm that estimates $\TV{\pi, \mu}$ up to error $\eps$ with probability $\ge \frac 9 {10}$, using $O(\frac n {\eps^2})$ queries to prefix logit oracles to both $\pi$ and $\mu$. Moreover, this oracle query complexity is tight up to constant factors for any algorithm.
\end{theorem}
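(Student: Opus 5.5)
For the upper bound in Theorem~\ref{thm:n1upper} we use the likelihood-ratio identity
\[
\TV{\pi,\mu} \;=\; \E_{x\sim \pi}\Brack{\Par{1-\frac{\mu(x)}{\pi(x)}}_+},
\]
together with symmetrized variants. The random variable $Y(x) \defeq (1-\mu(x)/\pi(x))_+$ always lies in $[0,1]$. With logit access, $\pi(x)$ and $\mu(x)$ can each be computed exactly with $n$ queries, by chaining $\pi(x)=\prod_i \pi(x_i\mid x_{<i})$. A sample $x\sim\pi$ can also be generated with $n$ logit queries, since we see the full next-token vector and can sample from it ourselves.

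The algorithm draws $N=O(1/\eps^2)$ i.i.d.\ samples $x^{(j)}\sim\pi$, computes $Y(x^{(j)})$ for each, and outputs their empirical mean. Hoeffding's inequality gives additive error $\eps$ with probability $\ge 9/10$. Each sample costs $O(n)$ queries to the oracle for $\pi$ (to sample and evaluate $\pi(x)$) and $n$ queries to the oracle for $\mu$. The total is $O(n/\eps^2)$.

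For the matching lower bound, the natural route is a reduction from a simple coin-type problem in which each query reveals little information. The idea is to hide the discrepancy between $\pi$ and $\mu$ at a uniformly random position, or in a random branch, so that any single logit query is uninformative unless it lands on the right prefix.

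One concrete construction goes as follows. Let $\pi$ be uniform over $\{0,1\}^n$. Let $\mu$ be obtained by choosing a hidden string $s$, and then at each step $i$ perturbing the next-token distribution by bias $\Theta(\eps/\sqrt n)$ in the direction $s_i$. For product-like perturbations, the Hellinger/TV distance over $n$ steps accumulates like $\sqrt n\cdot(\eps/\sqrt n)=\eps$, so we compare two hypotheses whose TV distances differ by $\Theta(\eps)$. Each logit query reveals one coordinate's bias, i.e., one $\pm\eps/\sqrt n$ coin observation. Distinguishing the hypotheses then requires estimating an aggregate of $n$ such biases to accuracy comparable to their sum, which costs $\Omega(n/\eps^2)$ observations.

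The main obstacle is making this rigorous, because logit queries return exact probabilities and not noisy ones. The hiding therefore has to come from randomness in the instance, not from noise in the answers. To handle this, I would use a randomized family: at each prefix, the perturbed coordinate is determined by a fresh random choice, for instance conditional probabilities of the form $\tfrac12\pm \eps/\sqrt n$ with independent random signs per prefix. Each query then reveals one independent sign, and TV is governed by the empirical sign balance along typical paths. Estimating that balance to the needed precision is then a counting problem over $\Omega(n/\eps^2)$ independent prefixes. I expect relating the TV distance of this random instance to the sign statistics to require care, and I would try a concentration argument over paths first.
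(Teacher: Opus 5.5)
Your upper bound is the same as the paper's Algorithm~\ref{alg:tv-test-white-box}, and that half is fine. The lower bound has a genuine gap: the proposed construction is not hard at all. Take $\pi$ uniform on $\{0,1\}^n$ and $\mu_s(x_i=1\mid x_{1:i-1})=\frac12+\delta\, s(x_{1:i-1})$, with arbitrary signs $s(\cdot)\in\{\pm1\}$. This covers both your per-depth hidden string and your independent signs per prefix. Define $y$ from $x$ by $y_i=x_i$ if $s(x_{1:i-1})=+1$ and $y_i=1-x_i$ otherwise. This map is a bijection of $\{0,1\}^n$ (invert it coordinate by coordinate). It preserves $\pi$, and it satisfies $\mu_s(x)=\mu_+(y)$, where $\mu_+$ is the all-plus instance. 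Hence $\TV{\pi,\mu_s}=\TV{\pi,\mu_+}$ for every sign pattern. The TV is a deterministic function of $n$ and $\delta$, and a single logit query at the root reveals $\delta$ exactly. So there is no ``empirical sign balance'' for the algorithm to estimate. Likewise, any two hypotheses that differ only in the bias magnitude (for example $\mu=\pi$ versus the perturbed $\mu$) are separated by one query.

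The underlying error is the heuristic that a logit query yields ``one $\pm\eps/\sqrt n$ coin observation,'' requiring about $n/\eps^2$ such observations. That is the sampling-oracle picture. A logit query returns the conditional probabilities exactly, so weak per-query signal cannot supply the factor $n$. It has to come from the cost of \emph{locating} informative prefixes.

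The paper builds this in as follows. It splits $x=u\oplus z$ with block index $u\in\{0,1\}^r$ and $r=O(\log\frac1\eps)$. In each block, all conditionals are deterministic along a hidden path $u\oplus Z_u$, with $Z_u$ uniform on $\{0,1\}^{t-1}$. Only the final bit at $u\oplus Z_u$ reveals a coin $X_u\sim\Bern(p)$, which decides whether $\mu_u,\pi_u$ coincide or are disjoint point masses. This gives $\TV{\mu,\pi}=\frac1L\sum_u X_u$ exactly, which concentrates at $p$. Each oracle answer carries $O(1)$ bits about $(Z_u)_u$, while revealing a block requires learning all $t-1$ bits of $Z_u$. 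A mutual-information argument therefore bounds the number of revealed blocks after $B$ queries by $\E[|\calG|]=O(B/t)$. Lazily sampling the $X_u$ then turns a $B$-query TV estimator into a distinguisher between $\Bern(\frac12)$ and $\Bern(\frac12+4\eps)$ that uses $O(B/t)$ coin flips. This forces $B=\Omega(t/\eps^2)=\Omega(n/\eps^2)$.
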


To our knowledge, we are the first paper to formalize the corresponding TV distance estimation problem under logit access. As discussed, the logit variant of the problem is natural in practical settings where the entire next-token sampling distribution can be queried, and Theorem~\ref{thm:prefix-conditional-lower-bound} provides matching upper and lower bounds for its query complexity.

Our upper bound is proven using a simple observation: $O(n)$ queries to prefix logits allows exact computation of a likelihood-ratio $\frac{\pi(x)}{\mu(x)}$, for any sequence $x \in \Sigma^n$. This allows us to repurpose a TV estimator from \cite{canonne2014testing}, which uses $O(\frac 1 {\eps^2})$ likelihood-ratio queries. We complement this algorithm with a matching distribution over hard instances $\pi$ and $\mu$, indexed by a hidden parameter $p \in [0, 1]$. For this construction, we reduce estimating $\TV{\pi, \mu}$ to learning $p$ up to additive error $\eps$. By carefully defining the hard instance, we show that $\approx n$ prefix logit oracle queries are needed to simulate one toss $\sim \Bern(p)$, at which point $\approx \frac 1 {\eps^2}$ tosses are necessary.

\begin{figure}[t]
    \centering
    \includegraphics[width=\linewidth]{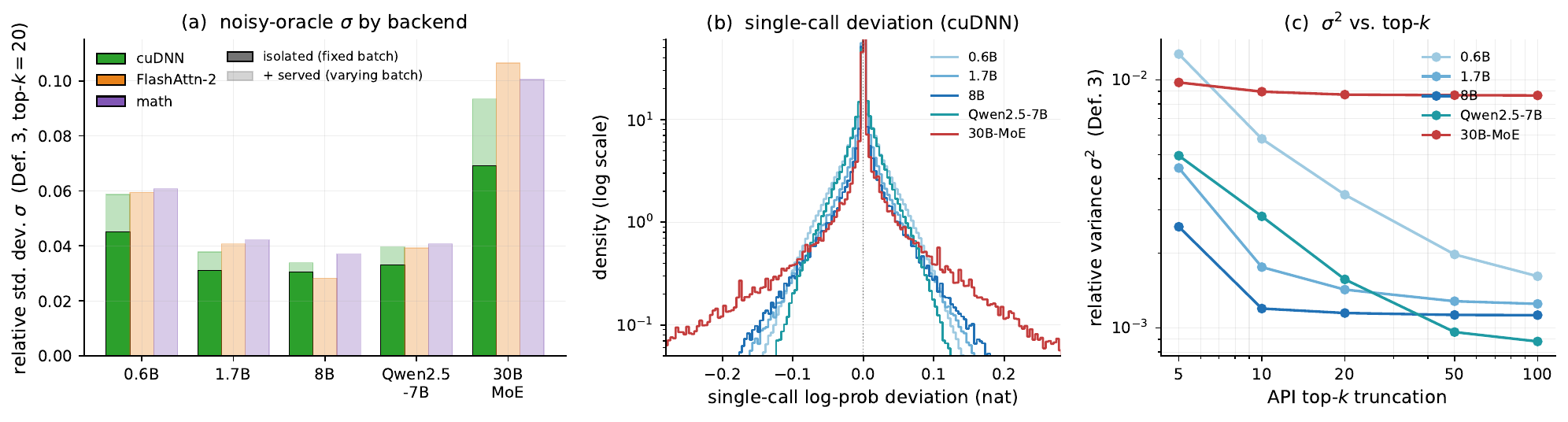}
    \caption{\textbf{Inference engines as noisy logit oracles.} A served model is treated as a per-token log-probability oracle: querying a fixed prefix repeatedly, we measure the relative standard deviation $\sigma$ of the returned probabilities ($\sigma^2$ is the relative-variance parameter of Definition~\ref{def:conditional-sketch}; five Qwen models, bf16, top-$k=20$; calibration in Appendix~\ref{app:exp-sigma-cal}). (a) $\sigma$ by attention backend under two batching conditions: \emph{isolated} (the query alone in a fixed batch; solid) and \emph{served} (the query batched with other, varying requests, as in production; full bar height). cuDNN is noisy even in isolation; FlashAttention-2 and math are deterministic in isolation and acquire noise only when served. (b) The distribution of single-call log-probability deviations. (c) $\sigma^2$ against the API top-$k$ truncation: $\sigma^2 \ll K$ throughout, so the noisy-logit bound $(n + n^2\sigma^2)/\eps^2$ of Theorem~\ref{thm:conditional-sketch-mlmc-intro} is far below the sampling bound $n^2 K/\eps^2$ of Theorem~\ref{thm:n2upper}.}
    \label{fig:oracle-sigma}
\end{figure}

\paragraph{Noisy logits.} Unfortunately, while LLM APIs do return logits, those logits are often not reliable. The issues (batching, kernels, precision) that motivate this paper by making LLM implementations unreliable can also make the computation of logits non-deterministic.  Figure~\ref{fig:oracle-sigma} measures this noise across five models and three attention backends.

To model this, for any given prefix $x_{<i}$ we can no longer query the vector $p := \pi(x_i \mid x_{<i})$ directly.  Instead, we suppose that we can sample from an unbiased estimator with bounded relative variance; that is, we can sample a random vector $\overline{p}$ such that $\E[\overline{p}] = p$ and
\begin{align}
\E_{a \sim p}\Brack{\var{\frac{\overline{p}_a}{p_a}}} &\leq \sigma^2.\label{eq:var}
\end{align}
This is implied, for instance, by estimating the final logits (before the softmax) to additive $\Theta(\sigma^2)$-variance Gaussian noise, see Lemma~\ref{lem:gaussian-logit-relative-variance}.

We believe this to be a fairly reasonable model. The challenge and non-determinism in sampling from an LLM comes from computing the logits, not in taking the final softmax, and whatever process produces the logits and corresponding $\overline{p}$, the average of $\overline{p}$ really will be the true $p$ --- unbiasedness is thus a matter of definition rather than an empirical assumption, and what the experiments document is the noise itself.  The main caveat is that we need independence across samples.

Figure~\ref{fig:denoise} previews the estimator developed in this paper
under this noise model. The two backend pairs in the figure produce
per-token deviations of nearly identical width and cannot be separated by
single queries. Averaging over repeated queries separates them: the
estimated TV between auto and cuDNN decreases with the repetition count
until it reaches the self-noise floor, identifying the two backends as
serving the same distribution (auto dispatches to the cuDNN kernel),
while the estimated TV between math and FlashAttention-2 converges to a
positive value that further repetition does not reduce.

\begin{figure}[t]
    \centering
    \includegraphics[width=0.92\linewidth]{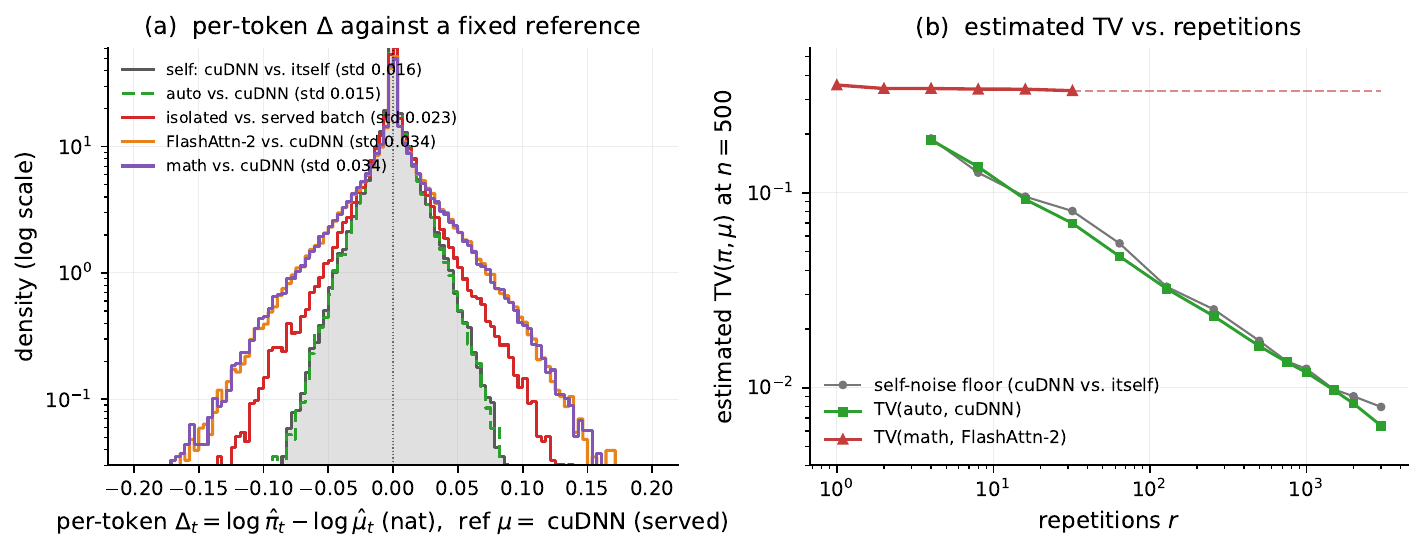}
    \caption{\textbf{Denoising separates true distance from oracle noise.}
(a) Per-token log-probability differences against a fixed reference
(cuDNN, served): the oracle against itself and auto (SDPA's automatic
backend selection) against cuDNN yield the same zero-centered spike, while
pairs that genuinely differ (a changed batching condition;
FlashAttention-2; math) yield broadened distributions. (b) The same
pairs under the TV estimator: as the repetition count $r$ grows,
$\TV{\mathrm{auto}, \mathrm{cuDNN}}$ decays along the \emph{self-noise
floor} (the estimated TV between an oracle and an independent copy of
itself), whereas $\TV{\mathrm{math}, \mathrm{FlashAttention}\text{-}2}$
stabilizes.}
    \label{fig:denoise}
\end{figure}

The ``noisy prefix distribution model'' \eqref{eq:var} (see also Definition~\ref{def:conditional-sketch}) \emph{smoothly interpolates} between the previous models (Lemma~\ref{lem:oracle_reduction}).  If $\sigma=0$, it is precisely the logit query model. With just sample access, we can consider the sample $a \sim p$ as an unbiased one-hot distribution $e_a$, with relative variance
\[
\sigma^2 := \sum_{a\in \text{supp}(p)}  \frac{1}{p_a} \Var[\overline{p}_a] = \sum_{a\in \text{supp}(p)} (1 - p_a) = |\text{supp}(p)|-1 < K.
\]

Our last main result gives an efficient query complexity bound under noisy logit access.
\begin{theorem}[Informal, cf.\ Theorems~\ref{thm:conditional-sketch-mlmc},~\ref{thm:noisy_lb_general}]
\label{thm:conditional-sketch-mlmc-intro}
There is an algorithm that estimates $\TV{\pi, \mu}$ up to error $\eps$ with probability $\ge \frac 9 {10}$, using $O(\frac{n + n^2\sig^2}{\eps^2}\log^2(\frac 1 \eps))$ queries to noisy prefix distribution oracles for both $\pi$ and $\mu$. Moreover, any algorithm achieving the same guarantee requires $\Omega(n\cdot \frac{1\lor \sig^2}{\eps^2 \log K})$ queries.
\end{theorem}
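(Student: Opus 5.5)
The plan is to write $\TV{\pi,\mu} = \E_{x\sim\pi}[(1 - \mu(x)/\pi(x))_+]$ and estimate it by Monte Carlo over $x \sim \pi$, replacing the unknown ratio $R(x) = \prod_{i=1}^n \mu(x_i\mid x_{<i})/\pi(x_i\mid x_{<i})$ with a noisy estimate built from the oracle. Sampling $x\sim\pi$ is possible from the noisy oracle, which gives unbiased vectors $\overline{p}$ that can be used to draw coordinates; alternatively we use the sampling reduction of Lemma~\ref{lem:oracle_reduction}. For each position $i$ we query both oracles at $x_{<i}$. The per-step quantities $\overline{\mu}_{x_i}$ and $\overline{\pi}_{x_i}$ are independent unbiased estimates with relative variance $\sigma^2$ in expectation over $x_i$.

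The difficulty is that $(1-\cdot)_+$ is nonlinear. Plugging in a product of $n$ noisy factors therefore introduces bias. By \eqref{eq:var}, the log-ratio $\log R$ is a sum of $n$ terms whose noise has total variance about $n\sigma^2$. One sample of this estimate biases the hinge by roughly $\sqrt{n}\sigma$. The remedy is to average $r$ independent oracle calls per position, which cuts the per-step relative variance to $\sigma^2/r$.

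To avoid paying $n\sigma^2/\eps^2$ repetitions for every sample, I would use multilevel Monte Carlo (MLMC). This is the source of the $\log^2(1/\eps)$ factor.
\begin{itemize}
\item Level $\ell$ uses $r_\ell = 2^\ell$ repetitions per position and forms the estimate $Y_\ell$ of $(1-\widehat R_\ell)_+$.
\item The estimator is $\sum_\ell (Y_\ell - Y_{\ell-1})$, where level $\ell$ and the coarser level share coupled samples. The name $\csketch$ suggests $Y_{\ell-1}$ reuses half of the $2^\ell$ calls, or averages two halves (an antithetic coupling).
\item Each difference $Y_\ell - Y_{\ell-1}$ is evaluated on $N_\ell$ independent draws of $x$.
\end{itemize}
The key estimates are that the bias at level $\ell$ is $O(n\sigma^2/r_\ell)$ and that $\Var[Y_\ell - Y_{\ell-1}] \lesssim \min(1, n\sigma^2/r_\ell)$. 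The hinge is 1-Lipschitz, so the variance is controlled by the relative-error variance of $\widehat R_\ell$, and the coupling makes the difference of order that error. Truncating at $L=O(\log(n\sigma^2/\eps))$ levels and choosing $N_\ell \propto \sqrt{V_\ell/C_\ell}$ with cost $C_\ell = n r_\ell$ gives total cost $\eps^{-2}(\sum_\ell \sqrt{V_\ell C_\ell})^2 \approx \eps^{-2}(n+n^2\sigma^2)L^2$.

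The step I expect to be the main obstacle is bounding the variance of the product estimator. Relative variances multiply, so $\prod(1+\delta_i)$ has variance $\prod(1+\sigma_i^2/r)-1$. This is only $O(n\sigma^2/r)$ once $r \gtrsim n\sigma^2$; below that it can blow up, and the condition \eqref{eq:var} controls $\sigma^2$ only in expectation over $a\sim p$.

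I would handle this in three ways.
\begin{itemize}
\item Cap everything using boundedness of the hinge, noting that $(1-R)_+\in[0,1]$ even when $R$ is huge, and bound the level difference by $\min(1,|\widehat R_\ell-\widehat R_{\ell-1}|)$.
\item Express the squared error telescopically as a sum over positions of per-step martingale increments. Conditioning on the prefix, each step contributes expected relative variance $\sigma^2/r$ under $x_i\sim\pi$; I must check the weighting by the current ratio, which forces analyzing a change of measure to $\mu$ at intermediate steps.
\item Start MLMC at $r_0 \approx 1$. Levels with $r_\ell < n\sigma^2$ then have $V_\ell\le 1$, and their contribution $\sum_{r_\ell< n\sigma^2}\sqrt{n r_\ell}$ is still $O(n\sigma)$.
\end{itemize}

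For the lower bound I would reuse the hard family from Theorem~\ref{thm:prefix-conditional-lower-bound}, which handles the $n/\eps^2$ term. I would then embed sample-access hardness from Corollary~\ref{cor:sampler-lower} with effective support $\approx\sigma^2$, by letting the noisy oracle reveal the one-hot sample mixed with the truth. This yields the $n\sigma^2/(\eps^2\log K)$ term.
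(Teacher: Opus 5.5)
\textbf{Upper bound.} There is a genuine gap at the step you leave open, and the tools you name do not close it. You draw $x\sim\pi$, but the oracle guarantee for $\mu$ bounds relative variance only on average over $a\sim\mu_s$. It therefore says nothing about $\hmu_r(x)/\mu(x)$ along $\pi$-paths, so ``the relative-error variance of $\widehat R_\ell$'' is not controlled. The cap $\min(1,|\widehat R-R|)$, with $|\widehat R-R|\approx|\hmu_r(x)-\mu(x)|/\pi(x)$, only yields $\E_{x\sim\pi}[\min(1,|\widehat R-R|)^2]\le\sum_x|\hmu_r(x)-\mu(x)|\le\sqrt{\chi^2(\hmu_r\|\mu)}$. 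This loses a square root: a single point with $\mu(x)=\frac12$ and $\pi(x)\approx\sqrt{\sigma^2/r}$ already makes the bound $\Theta(\sqrt{\sigma^2/r})$. Your $V_\ell$ would then decay only like $2^{-\ell/2}$, and the MLMC sum would return roughly the naive $n^2\sigma^2/\eps^4$.

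The missing ingredient is a stability lemma that normalizes each error by the right mass, so that the squared error is bounded by $\chi^2(\hpi_r\|\pi)+\chi^2(\hmu_r\|\mu)$, each divergence taken against its own measure. The paper gets this from the mixture representation. Sample $X\sim M=\frac{\pi+\mu}{2}$, estimate $Z=\phi(\pi,\mu)$ with $\phi(u,v)=\frac{|u-v|}{u+v}$, and use $|\phi(c,d)-\phi(a,b)|\le 4\frac{|c-a|+|d-b|}{a+b}$. This gives $\E_{X\sim M}[(\hZ-Z)^2]\le 16\chi^2(\hpi_r\|\pi)+16\chi^2(\hmu_r\|\mu)$ with no change of measure. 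Conditioning sequentially along a path then gives $1+\E[\chi^2(\hpi_r\|\pi)]\le(1+\sigma^2/r)^n$, which also settles your ``only in expectation over $a\sim p$'' worry. Your one-sided hinge can be rescued too: since it is flat for $R\ge 1$, the per-point error is $\lesssim\min\{1,\frac{|\hpi_r-\pi|}{\pi}+\frac{|\hmu_r-\mu|}{\max(\pi,\mu)}\}$. Some lemma of this kind is indispensable either way.

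Separately, the bias is not $O(n\sigma^2/r_\ell)$. If $\pi=\mu$, every $x$ sits at the kink $R=1$, and the estimator's mean is of the order of the fluctuation of $\widehat R$, namely $\sqrt{n\sigma^2/r}$. The right bound is the Cauchy--Schwarz one, so you need $r_L\gtrsim n\sigma^2/\eps^2$ rather than $n\sigma^2/\eps$. This changes $L$ only by a constant factor. The rest of your MLMC bookkeeping (starting at $r_0=1$, optimal allocation) is fine; the paper instead starts at $r_0\approx n\sigma^2$ and couples levels only through the shared $X$.

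\textbf{Lower bound.} This half is circular. In the paper, Corollary~\ref{cor:sampler-lower} is derived \emph{from} Theorem~\ref{thm:noisy_lb_general}, so it cannot supply the $n\sigma^2/(\eps^2\log K)$ term, and you give no independent argument for an $\Omega(nK/(\eps^2\log K))$ sampler bound. If such a bound were available, restricting to instances with next-token support about $\sigma^2+1$, where a single one-hot draw is already a valid noisy answer, would be a clean reduction. An answer that mixes the draw with the true $p$, however, cannot be produced from sampler access.

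What is missing is the mechanism that makes each prefix query worth only a $\frac1n$ fraction of a sample. The paper embeds an arbitrary pair $P,Q$ on $K-1$ symbols into rare-escape processes. At every all-zero prefix they emit a stay token with probability $1-\frac1n$ and otherwise one draw from $P$ (resp.\ $Q$), followed by zeros, so that $\TV{\pi_P,\mu_Q}=(1-(1-\frac1n)^n)\TV{P,Q}=\Theta(\TV{P,Q})$. A noisy query at an alive prefix is answered by the empirical distribution of $M=\lceil (K-1)/\sigma^2\rceil$ draws from $(1-\frac1n)e_0+\frac1n P$. This answer is unbiased, has relative variance at most $(K-1)/M\le\sigma^2$, and consumes only $M/n$ samples of $P$ in expectation. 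After Markov truncation, a $B$-query estimator becomes an i.i.d.\ estimator using $O(BK/(n\sigma^2))$ samples. The finite-domain lower bound $\Omega(K/(\eps^2\log K))$ of \cite{jiao2018minimax} then forces $B=\Omega(n\sigma^2/(\eps^2\log K))$. The $n/\eps^2$ term comes, as you say, from Theorem~\ref{thm:prefix-conditional-lower-bound} together with Remark~\ref{rem:lower-bound-weaker-oracles}.
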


Theorem~\ref{thm:conditional-sketch-mlmc-intro} precisely recovers Theorem~\ref{thm:n2upper}, and it recovers Theorem~\ref{thm:n1upper} up to $\log^2 \frac{1}{\eps}$.

To prove Theorem~\ref{thm:conditional-sketch-mlmc-intro}, we first consider an alternative representation of the TV distance (cf.\ \eqref{eq:tv-mixture-representation} in Lemma~\ref{lem:tv-representations}), which allows us to avoid the ``distribution taming'' step caused by the unbounded estimators used by the \cite{meel2025distance} algorithm. As we discuss in Section~\ref{ssec:est_construct_blackbox}, this simple change already improves the query complexity of \cite{meel2025distance} to $\approx \frac{n^2 K}{\eps^4}$. To achieve our tightest result in Section~\ref{ssec:var_reduce_blackbox}, we use a variance reduction scheme patterned off of multilevel Monte Carlo (MLMC) estimation, a classical technique in statistics \cite{heinrich2001multilevel, giles2015multilevel}. Our approach is particularly inspired by an MLMC scheme from \cite{giles2008multilevel}, which trades off cheaper query complexities with larger variances at different levels of estimation, using a telescoping sum.

Our lower bound is based on a rare-escape embedding of a hard instance for finite-domain TV estimation. Specifically, we embed a hard pair $(P,Q)$ into autoregressive distributions \(\pi_P,\mu_Q\) that usually emit deterministic stay tokens, but with probability \(\Theta(\frac 1 n)\) escape and reveal one draw from \(P\) or \(Q\). This preserves TV distance up to constants, while making each prefix query contain only a \(\Theta(\frac 1 n)\) fraction of the effective information about the underlying finite-domain instance. Since noisy prefix queries to the embedded processes can be simulated from i.i.d.\ samples of \(P,Q\), classical finite-domain lower bound given by \cite{jiao2018minimax} implies that the dependence on support size or noise is unavoidable in our setting. The remaining main gap is therefore in the sequence-length dependence: our lower and upper bounds differ by one additional factor of \(n\).

We summarize our theoretical results in Table~\ref{tab:bounds-summary}, which illustrates the tradeoff between Theorems~\ref{thm:n2upper} and~\ref{thm:n1upper} that is interpolated in Theorem~\ref{thm:conditional-sketch-mlmc-intro}: the stronger logit query model improves the query complexity by a factor of $n$ and eliminates the dependence on $K$. An intriguing question which we leave open for future work is whether this separation is inherent, i.e., if superlinear-in-$n$ lower bounds or subquadratic-in-$n$ upper bounds can be proven for the prefix sampling oracle access model.

\textbf{Experiments.} Beyond the oracle characterization of Figures~\ref{fig:oracle-sigma} and~\ref{fig:denoise}, Section~\ref{sec:case-study} presents a case study of independent interest: the TV distance between two production inference engines, \texttt{vllm} and \texttt{sglang}, serving identical \texttt{Qwen3-0.6B} weights, including how the estimate depends on the serving configuration, and a validation of the multilevel schedule of Algorithm~\ref{alg:blackbox-tv} on real noisy oracles. Appendix~\ref{app:experiments} collects the supporting material: a synthetic validation with known ground truth (Appendix~\ref{app:exp-synthetic}), the calibration of $\sigma$ (Appendix~\ref{app:exp-sigma-cal}), the construction of a faithful scoring oracle and the bias of standard sequence-scoring APIs (Appendix~\ref{app:exp-replay}), and the dependence of the measured distance on the top-$k$ truncation and on the prompt (Appendices~\ref{app:exp-topk-dependence} and~\ref{app:exp-entropy}), as well as an empirical account of the cost of sample-only access (Appendix~\ref{app:exp-access}).

\begin{table}[t]
\centering
\begin{tabular}{lcccc}
\toprule
\textbf{Oracle}&\textbf{Prior upper}&\textbf{Our upper}&\textbf{Prior lower}&\textbf{Our lower}
\\
\midrule
Sampling
&$\displaystyle \frac{n^3 m}{\eps^5}$ \cite{meel2025distance}&$\displaystyle \frac{n^2 K}{\eps^2}$&$\displaystyle \frac{n}{\log n}$ \cite{canonne2020testingbayesiannetworks}&$\displaystyle \frac{nK}{\eps^2}$\\
Logit
&--&$\displaystyle \frac{n}{\eps^2}$&--&$\displaystyle \frac{n}{\eps^2}$
\\[0.6em]

Noisy logit
&--&$\displaystyle \frac{n + n^2 \sigma^2}{\eps^2}$&$\displaystyle \frac{n}{\log n}$ \cite{canonne2020testingbayesiannetworks}&$\displaystyle \frac{n + n\sig^2}{\eps^2}$\\
\bottomrule
\end{tabular}
\vspace{.5em}
\caption{Prefix query complexity bounds for TV estimation over $\Sigma^n$
with logarithmic factors omitted. Here $m \defeq |\Sigma|$, and
$K \le m$ is the maximum support size of any next-token distribution.}
\label{tab:bounds-summary}
\end{table}

\subsection{Related work}
\label{ssec:related_work}

\textbf{TV estimation under conditional access.}
TV estimation is a quantitative analogue of distribution testing: an additive estimator immediately yields tolerant and non-tolerant testers by thresholding. With i.i.d.\ samples, even non-tolerant identity testing over $\{0,1\}^n$ requires $\Omega(2^{n/2})$ samples~\citep{chan2014optimal,valiant2017automatic}, which motivates stronger sampling interfaces. Conditional sampling oracles have been extensively studied in theoretical computer science (TCS), from full conditioning on arbitrary events~\citep{chakraborty2013power} to more restricted oracles such as pair~\citep{acharya2014chasm}, interval~\citep{canonne2015testing,narayanan2020distribution}, subcube~\citep{bhattacharyya2018property, canonne2021random, chakrabarty2025monotonicity}, and prefix~\citep{bhattacharyya2024testing, adar2024improved} oracles. Subcube and prefix models are closest to the autoregressive sampling setting. However, most prior work focuses on non-tolerant testing rather than distance estimation; see~\citep{canonne2020survey} for a broader survey.

The closest theoretical works to ours study TV estimation under prefix sampler access. On the upper-bound side, \cite{meel2025distance,bhattacharyya2024testing, adar2025tight} consider essentially the same sampler oracle as ours. \cite{meel2025distance} show that $O(n^3 m/\eps^5)$ queries suffice, while \cite{bhattacharyya2024testing} improve the bound to $O(n^2 m/\eps^4)$ when one of $\pi,\mu$ is known. Using a different approach, \cite{adar2025tight} obtain an $O(n^2/\eps^4)$ bound, but only for $m=2$. On the lower-bound side, \cite{canonne2020testingbayesiannetworks} prove an $\Omega(n/\log n)$ lower bound for non-tolerant testing, which also implies hardness for TV estimation. In addition, \cite{adar2025tight} prove an $\Omega(n^2/\eps^2)$ lower bound for elementwise probability mass estimation. Although this does not rule out an $o(n^2)$ TV estimator, it suggests that sharper bounds may require new techniques. A separate line of work studies TV estimation with numerical probability access: \citep{canonne2014testing,bhattacharyya2020efficient} assume direct queries to $P(x)$ and $Q(x)$ and show that $\Theta(1/\eps^2)$ queries are necessary and sufficient. Their algorithms can be adapted to our prefix-logit setting, yielding an $O(n/\eps^2)$-query algorithm, but their lower bounds do not apply because their hard instances do not capture  prefix structures.

\textbf{Auditing distribution shifts in LLMs.}
Auditing whether an LLM API serves the model it claims to provide has become a central question in trustworthy model deployment. Some works approach this problem through downstream task performance~\citep{eyuboglu2024model, chen2024chatgpt}. However, task accuracy is only an indirect proxy for model substitution: two models may achieve similar correctness while inducing substantially different output distributions, leading to changes in token usage, response style, or generation quality. A more direct approach is therefore to audit the model-induced output distribution itself, where the core question becomes whether, and by how much, the served distribution differs from the claimed one. Several recent works take this distributional perspective. \cite{gaomodel} proposes a model-equality test, corresponding to non-tolerant testing in TCS, and uses MMD as the test statistic. While empirically promising, this approach lacks theoretical guarantees and was later found to be less effective under subtle substitutions, such as lower-precision inference or partial-time model replacement~\citep{cai2025you}. \cite{zhu2025auditing} strengthens detection via rank-based nonparametric tests, but its complexity and significance analyses rely on assumptions on the log-ranks of model outputs. Finally, \cite{amini2025better} proposes an algorithm for estimating KL divergence between LLMs, but its analysis assumes bounded estimator variance, which can be very large or even infinite.
\section{Preliminaries}
In this section, we introduce preliminaries used throughout the paper. Section~\ref{ssec:notation} fixes notation; Section~\ref{ssec:oracle_def} defines the oracles used to describe autoregressive models; and Section~\ref{ssec:tv_prop} collects basic properties of TV distance used in our analysis.
\subsection{Notation}
\label{ssec:notation}
\paragraph{Probability.} For a state space $\Omega$, we let $\calP(\Omega)$ denote all probability measures over $\Omega$. For an event $\calE \subseteq \Omega$, we let $\ind(\calE)$ denote the corresponding $0$-$1$ indicator random variable, and $\mu(\calE)$ denote the probability of the event.
We let $\E[\cdot]$ and $\Var[\cdot]$ denote the expectation and variance. We frequently use the following distances (and divergences) between two distributions $\mu, \pi \in \calP(\Omega)$, where $\dd \omega$ denotes a counting measure if $\Omega$ is discrete:
\begin{equation}\label{eq:distance_def}
\begin{aligned}
\TV{\mu, \pi}
&\defeq \half \int_\Omega \Abs{\mu(\omega) - \pi(\omega)} \dd \omega\\
&= \sup_{\calE \subseteq \Omega} \mu(\calE) - \pi(\calE),\\
\KL{\mu}{\pi} &\defeq
    \int_\Omega \log\Par{\frac{\dd \mu}{\dd \pi}\Par{\omega}}
    \mu(\omega)\dd\omega,\\
\CS{\mu}{\pi} &\defeq
    \int_\Omega \Par{\frac{\mu(\omega)}{\pi(\omega)} - 1}^2
    \pi(\omega) \dd \omega.
\end{aligned}
\end{equation}
\paragraph{Autoregressive models.}
Throughout, we let $\Sigma$ be a finite alphabet with $m \defeq |\Sigma|$, and let $\Sigma^n$ denote the space of length-$n$ sequences. In an LLM setting, $\Sigma$ represents a tokenizer vocabulary, and $n$ is a sequence length. When comparing two distributions $\pi$ and $\mu$, we reserve use of the letter $K \in [m]$ to denote the maximum support size of any (prefix-conditional) next-token distribution, i.e., $\pi(X_{i + 1} = \cdot \mid X_1, \ldots, X_{i})$ or $\mu(X_{i + 1} = \cdot \mid X_1, \ldots, X_{i})$. Next-token distributions in $\calP(\Sigma)$ are typically implicitly specified as $\softmax(\ell)$ for logits $\ell \in \R^m$, where $\softmax(\ell) \propto \exp(\ell)$.

Finally we introduce some notation for manipulating sequences. For $x\in\Sigma^n$ and $0\le i\le n$, we write $x_{1:i}=(x_1,\ldots,x_i)$, with $x_{1:0}=\emptyset$. For $\pi\in\mathcal P(\Sigma^n)$, let $\pi_{1:i}$ denote the marginal distribution of $X_{1:i}$. We also use $\pi_{x_{1:i}}(a)$ as shorthand for the next-token distribution $\pi(X_{i + 1} = a \mid X_{1:i} = x_{1:i})$.
For sequences \(s\in\Sigma^m\) and \(s'\in\Sigma^n\), let \(s \oplus s'\in\Sigma^{m+n}\) denote their concatenation. When the domain is clear, we use $e_a$ to denote the $a^{\text{th}}$ basis vector; typically, this is used when $a \in \Sigma$ so that $e_a \in \calP(\Sigma)$.

\subsection{Oracle definitions}
\label{ssec:oracle_def}
We introduce three oracle models for autoregressive distributions, corresponding to different levels of access to an autoregressive model. The strongest model is exact prefix logit access. This corresponds to a white-box setting: one can run the model forward pass at a given prefix and read off the next-token log-probabilities, as if the model weights and inference procedure were directly available.

\begin{definition}[Prefix logit oracle] \label{def:next_token_logprob} Let $\pi\in\mathcal P(\Sigma^n)$. For any prefix $x_{1:i}\in\Sigma^i$ with $0\le i<n$, the prefix logit oracle returns the vector of next-token log-probabilities \footnote{If $\pi(X_{1:i}=x_{1:i})=0$, the conditional distribution is undefined; in this case, one may define the oracle to return a distinguished symbol $\bot$. Throughout the paper, unless explicitly specified, our analysis will not invoke it.} 
\begin{equation} 
\label{eq:exact-logit-query-vec}
\Odist^\pi(x_{1:i})
:= \bigl( \log \pi(X_{i+1}=a\mid X_{1:i}=x_{1:i}) \bigr)_{a\in\Sigma}.
\end{equation} 
We also use the notation 
\begin{equation} 
\label{eq:exact-logit-query-token} 
\Odist^\pi(x_{1:i},a)
:= \log \pi(X_{i+1}=a\mid X_{1:i}=x_{1:i}), \qquad a\in\Sigma.
\end{equation} 
\end{definition}

It is clear that one query to the vector oracle \eqref{eq:exact-logit-query-vec} simulates one scalar oracle query \eqref{eq:exact-logit-query-token} at no additional cost. In the reverse direction, a scalar oracle simulates the full vector oracle using \(m\) scalar queries.  We ignore this alphabet-size distinction in our main complexity statements under prefix logit oracle access.  This convention is natural for LLMs, where a single forward pass produces all logits simultaneously, and log-probabilities differ from logits only by a softmax normalization.

The second model is prefix sampling access. This model is a special case of the more general conditional sampling oracle model that has received substantial attention in the theoretical computer science literature \cite{chakraborty2013power, acharya2014chasm, canonne2015testing, bhattacharyya2018property}. Under this model,
the algorithm cannot directly inspect the model logits, but it can request a token sampled from the model's next-token conditional distribution.  This is also a natural abstraction for closed-source or commercial language models that do not allow explicit access under Definition~\ref{def:next_token_logprob}.

\begin{definition}[Prefix sampler oracle] \label{def:next_token_sample} 
Let $\pi\in\mathcal P(\Sigma^n)$. For any prefix $x_{1:i}\in\Sigma^i$ with $0\le i<n$, the prefix sampler oracle returns 
\[ 
\Osamp^\pi(x_{1:i}) \sim \pi(\cdot\mid X_{1:i}=x_{1:i}). 
\] 
\end{definition}
Finally, we introduce a noisy prefix logit oracle in Definition~\ref{def:conditional-sketch}. This is the most general oracle considered in this paper, subsuming both exact prefix logit queries and prefix sampling queries as special cases. The oracle returns an unbiased random sketch of the next-token distribution, with a noise level parameterized by relative variance. Beyond being a convenient parameterization for analysis, this model captures self-inconsistency in LLM inference, where inference-time randomness can perturb the logits and induce different next-token distributions for the same prefix. 
We substantiate these interpretations in Section~\ref{ssec:relative-variance-motiv}, and highlight the practical relevance of modeling query noise in Figure~\ref{fig:denoise}, where the estimated distance between an autoregressive model and itself is positive until the query noise is averaged away.


\begin{definition}[Noisy prefix distribution oracle] \label{def:conditional-sketch} 
Let $\pi\in\mathcal P(\Sigma^n)$. For any prefix $x_{1:i} \in \Sigma^i$ with $0 \le i < n$, the noisy prefix distribution oracle with relative variance $\sigma^2$ returns an unbiased vector \(\Onoisy^\pi(x_{1:i})\in\calP(\Sigma)\). Writing \(p_i=\pi_{x_{1:i}}\) and \(\widehat p=\Onoisy^\pi(x_{1:i})\), the oracle satisfies $\mathbb E[\widehat p]=p_i$ and
\[
\begin{aligned}
\bbE\Brack{\CS{\widehat p}{p_i}}
&=
\bbE_{a \sim p_i}
\bra{
  \Var\!\left(
    \frac{\widehat p(a)}{p_i(a)}
  \right)
}\\
&\le \sig^2,
\end{aligned}
\]
where $\E$ on the left and $\Var$ on the right are over the internal randomness of $\Onoisy$.
\end{definition}

Lemma~\ref{lem:oracle_reduction} proves that Definition~\ref{def:next_token_sample} is a case of Definition~\ref{def:conditional-sketch} with $\sigma^2 \le K$. Notably, Definition~\ref{def:next_token_sample} is the hardest case of Definition~\ref{def:conditional-sketch}, so we may always assume that $\sigma^2 \in [0, K]$. This is because we can always simulate Definition~\ref{def:next_token_sample} by querying $a \sim \Onoisy(x_{1:i})$ and outputting $e_a$, because $\E[\Onoisy(x_{1:i})] = \pi_{x_{1:i}}$.
We also give broader motivation for our relative variance parameterization in Lemma~\ref{lem:gaussian-logit-relative-variance}.

We use the following unified formulation for our main distance estimation problem under all three oracle models. The
distributions $\pi$ and $\mu$ are only accessed through their corresponding
oracles, and the goal is to estimate their TV distance to additive accuracy
$\varepsilon$.

\begin{problem}[TV estimation]
\label{prob:tv-est}
Let $\pi,\mu\in\mathcal P(\Sigma^n)$.
Assume access to appropriate prefix oracles for $\mu$ and $\pi$, denoted $\calO^\pi, \calO^\mu$. Given $\varepsilon,\delta\in(0,1)$, design an algorithm that outputs $\widehat d$ such that
\[
    \Pr\left[
        \left|\widehat d-\TV{\mu,\pi}\right|\le \varepsilon
    \right]\ge 1-\delta.
\]
The query complexity is the total number of calls to $\calO^\pi$ and $\calO^\mu$.
\end{problem}

Across our analysis, we instantiate $\calO^\pi$ and $\calO^\mu$ with the access models in Definitions~\ref{def:next_token_logprob},~\ref{def:next_token_sample}, and~\ref{def:conditional-sketch}.
\subsection{Properties of TV distance}
\label{ssec:tv_prop}
Here we derive some equivalent formulations of the TV distance used in our estimation algorithms. We use the notation $(x)_+ \defeq \max(x, 0)$.
\begin{lemma}[Equivalent representations of total variation]
\label{lem:tv-representations}
Let $\pi, \mu \in \calP(\Sigma^n)$, then $\TV{\pi, \mu}$ admits the following equivalent representations:
\begin{align}
    \TV{\pi,\mu}
    &=
    \mathbb E_{X\sim\pi}
    \left[
        \left(1-\frac{\mu(X)}{\pi(X)}\right)_+
    \right],
    \label{eq:tv-lr-representation}
    \\[1ex]
    \TV{\pi,\mu}
    &=
    \mathbb E_{X\sim \frac{\pi+\mu}{2}}
    \left[
        \frac{|\pi(X)-\mu(X)|}{\pi(X)+\mu(X)}
    \right].
    \label{eq:tv-mixture-representation}
\end{align}
\end{lemma}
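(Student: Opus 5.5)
Both identities follow directly from the definition $\TV{\pi,\mu} = \half \sum_{x\in\Sigma^n} |\pi(x)-\mu(x)|$ in \eqref{eq:distance_def}, since $\Sigma^n$ is finite and the integral is a finite sum. The plan is to rewrite each expectation on the right-hand side as a finite sum and match it with this definition. The only point needing care is the handling of points where a denominator vanishes, which we address through the conventions below.

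For \eqref{eq:tv-lr-representation}, we first use $\sum_x \pi(x) = \sum_x \mu(x) = 1$. This gives $\sum_x (\pi(x)-\mu(x))_+ = \sum_x (\mu(x)-\pi(x))_+$, and both equal $\half\sum_x|\pi(x)-\mu(x)|$, because $(t)_+ + (-t)_+ = |t|$ and $(t)_+ - (-t)_+ = t$ sums to zero over $x$. Hence $\TV{\pi,\mu} = \sum_x (\pi(x)-\mu(x))_+$.

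The terms with $\pi(x)=0$ contribute $(-\mu(x))_+=0$, so we may restrict the sum to $\supp(\pi)$. On that set we factor out $\pi(x)$, using positive homogeneity of $(\cdot)_+$: $(\pi(x)-\mu(x))_+ = \pi(x)\,(1-\mu(x)/\pi(x))_+$. The resulting sum is exactly $\E_{X\sim\pi}[(1-\mu(X)/\pi(X))_+]$. Note that the expectation only ever evaluates $X\in\supp(\pi)$, so the ratio is always well defined.

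For \eqref{eq:tv-mixture-representation}, write $\nu = \frac{\pi+\mu}{2}$. Then
\[
\E_{X\sim\nu}\Brack{\frac{|\pi(X)-\mu(X)|}{\pi(X)+\mu(X)}} = \sum_{x\in\supp(\nu)} \frac{\pi(x)+\mu(x)}{2}\cdot\frac{|\pi(x)-\mu(x)|}{\pi(x)+\mu(x)} = \half\sum_{x\in\supp(\nu)}|\pi(x)-\mu(x)|.
\]
For points outside $\supp(\nu)$ we have $\pi(x)=\mu(x)=0$, so they contribute nothing to the TV sum. The last expression therefore equals $\TV{\pi,\mu}$.

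There is no real obstacle. The only subtle step is stating the support conventions explicitly, so that the ratios $\mu(X)/\pi(X)$ and $|\pi(X)-\mu(X)|/(\pi(X)+\mu(X))$ are evaluated only almost surely under the sampling distribution. This is also the feature that makes \eqref{eq:tv-mixture-representation} bounded in $[0,1]$, which is useful later.
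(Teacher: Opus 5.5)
Your proposal is correct and follows essentially the same route as the paper: rewrite $\TV{\pi,\mu}$ as $\sum_x(\pi(x)-\mu(x))_+$, restrict to $\supp(\pi)$ and factor out $\pi(x)$ for the first identity, and cancel the mixture weight $\frac{\pi(x)+\mu(x)}{2}$ against the denominator for the second. You additionally justify the step $\half\sum_x|\pi(x)-\mu(x)| = \sum_x(\pi(x)-\mu(x))_+$, which the paper states without comment.
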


\begin{proof}
By definition,
\begin{equation*}
    \TV{\pi,\mu}
    =
    \frac12\sum_{x\in\Sigma^n}|\mu(x)-\pi(x)|
    =
    \sum_{x\in\Sigma^n}(\pi(x)-\mu(x))_+.
\end{equation*}

We first prove~\eqref{eq:tv-lr-representation} by noting that  
\[
    \mathbb E_{X\sim\pi}
\left[
        \left(1-\frac{\mu(X)}{\pi(X)}\right)_+
    \right]
    =
    \sum_{x\in\supp(\pi)}    \pi(x)
    \left(1-\frac{\mu(x)}{\pi(x)}\right)_+ = \sum_{x \in \Sigma^n}(\pi(x)-\mu(x))_+
    =
    \TV{\pi,\mu}.
\]
A similar direct calculation yields~\eqref{eq:tv-mixture-representation}:
\begin{align*}
\mathbb E_{X\sim \frac{\pi+\mu}{2}}
    \left[
        \frac{|\pi(X)-\mu(X)|}{\pi(X)+\mu(X)}
    \right]
    =
    \frac12\sum_{x\in\Sigma^n}|\pi(x)-\mu(x)|
    =
    \TV{\pi,\mu}.
\end{align*}
\end{proof}
\section{TV Estimation via Prefix Logit Queries}
\label{sec:white_box_tv}
In this section, we study Problem~\ref{prob:tv-est} under access to the prefix logit oracles $(\Odist^\pi,\Odist^\mu)$. We first present our distance estimator and prove its
query-complexity upper bound in Section~\ref{ssec:upper_white_box}. The key observation is that exact prefix logit access simulates an exact log-probability oracle over full trajectories:
for any $x\in\Sigma^n$,
\begin{equation}
    \label{eq:chain_rule_log_prob}
    \ell_\pi(x)
    :=\log \pi(x_{1:n})
    =
    \sum_{i=0}^{n-1}
    \log \pi(x_{i+1}\mid x_{1:i}),
\end{equation}
and a similar expression holds for $\mu$. Hence, density evaluations for $\pi$ or $\mu$ cost $n$
prefix logit oracle queries. We adapt the likelihood-ratio estimator of
\cite{canonne2014testing}, which is a bounded, unbiased estimator for $\TV{\pi, \mu}$ leveraging query access to $\frac{\mu}{\pi}$ via \eqref{eq:tv-lr-representation}, achieving query complexity \(O\left(\frac{n}{\varepsilon^2}\log\frac{1}{\delta}\right).\)

In Section~\ref{ssec:lower_whit_box}, we show that this reduction is tight in the worst case for any algorithm with prefix logit oracle access. In particular, our lower bound construction implies that, when $n\gg 1$, any algorithm solving Problem~\ref{prob:tv-est} with access to $(\Odist^\pi, \Odist^\mu)$ requires \(\Omega(\tfrac{n}{\varepsilon^2})\) queries.

\subsection{Likelihood-ratio estimator}
\label{ssec:upper_white_box}
Our estimator in Algorithm~\ref{alg:tv-test-white-box} follows from the likelihood-ratio representation in equation~\eqref{eq:tv-lr-representation}: it is the empirical mean of a truncated likelihood-ratio functional evaluated on samples from one of the two distributions. To implement Algorithm~\ref{alg:tv-test-white-box}, we need two primitives: (a) sampling a trajectory from one distribution, and (b) evaluating the probability of a given trajectory under either $\pi$ or $\mu$. Both primitives are available from accessing prefix logit oracles. Firstly, sampling can be performed sequentially by querying the prefix logit oracle, and drawing the next token appropriately. Moreover, given a
trajectory $x_{1:n}$, its log-probability is computed via~\eqref{eq:chain_rule_log_prob}.  Each primitive therefore requires $n$ queries. Since the estimator averages bounded independent summands, standard concentration inequalities recalled in Theorem~\ref{thm:whitebox_lr} give the desired high-probability guarantee.
\begin{algorithm}[h]
\caption{$\whiteboxTV(\Odist^\pi,\Odist^\mu,\varepsilon,\delta)$}
    \label{alg:tv-test-white-box}
    \begin{algorithmic}[1]
        \State $N \gets \left\lceil \frac{\log \frac 2 \delta}{2\varepsilon^2}\right\rceil$
        \State Generate $X_1,\ldots,X_N \simiid \pi$ via $\Odist^\pi$
        \For{$i=1,\ldots,N$}
            \State $L_{\pi,i}\gets \log \pi(X_i)$,  $L_{\mu,i} \gets \log \mu(X_i)$ \Comment{Evaluate using $n$ prefix logit oracle queries via \eqref{eq:chain_rule_log_prob}.}
            \State $R_i \gets \left(1-\exp(L_{\mu,i}-L_{\pi,i})\right)_+$
        \EndFor
        \State $\hat d \gets \frac{1}{N}\sum_{i=1}^N R_i$
        \State \Return $\hat d$
    \end{algorithmic}
\end{algorithm}

\begin{theorem}
\label{thm:whitebox_lr}
Algorithm~\ref{alg:tv-test-white-box} solves Problem~\ref{prob:tv-est} with $O(\frac{n}{\varepsilon^2} \log \frac{1}{\delta})$ queries to $(\Odist^\pi, \Odist^\mu)$.
\end{theorem}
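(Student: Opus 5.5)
The argument has three parts: show each $R_i$ is an unbiased, bounded estimator of $\TV{\pi,\mu}$; apply Hoeffding's inequality to their average; and count oracle queries.

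\emph{Unbiasedness.} The trajectory $X_i$ is generated autoregressively. For $j=0,\dots,n-1$ we query $\Odist^\pi(x_{1:j})$, exponentiate to get the next-token distribution $\pi_{x_{1:j}}$, and draw $x_{j+1}\sim\pi_{x_{1:j}}$. By the chain rule, $X_i\sim\pi$ exactly, and the $X_i$ are independent across $i$. We reach only prefixes with positive $\pi$-probability, so the $\bot$ case of Definition~\ref{def:next_token_logprob} never occurs for $\pi$.

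For $\mu$ there is one subtlety. If $\mu(x_{1:j})=0$ for some prefix, then $\mu(X_i)=0$. In that case we set $L_{\mu,i}=-\infty$, so $R_i=1$. This matches $(1-\mu(X_i)/\pi(X_i))_+$, and the value can be detected by stopping at the first prefix where $\Odist^\mu$ returns $\bot$ or $-\infty$.

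Since $\pi(X_i)>0$ almost surely, $\exp(L_{\mu,i}-L_{\pi,i})=\mu(X_i)/\pi(X_i)$ by \eqref{eq:chain_rule_log_prob}. Hence $R_i=(1-\mu(X_i)/\pi(X_i))_+$, and by \eqref{eq:tv-lr-representation} of Lemma~\ref{lem:tv-representations} we get $\E[R_i]=\TV{\pi,\mu}$.

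\emph{Concentration.} Each $R_i$ is the positive part of $1$ minus a nonnegative number, so $R_i\in[0,1]$. The $R_i$ are i.i.d., so Hoeffding's inequality gives
\[
\Pr\Brack{\Abs{\hat d-\TV{\pi,\mu}}>\eps}\le 2\exp(-2N\eps^2).
\]
With $N=\lceil \log(2/\delta)/(2\eps^2)\rceil$, the right-hand side is at most $\delta$.

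\emph{Query count.} Each sample uses $n$ queries to $\Odist^\pi$ to generate it. The same queries also give $L_{\pi,i}$, and at most $n$ further queries to $\Odist^\mu$ give $L_{\mu,i}$. The per-sample cost is therefore at most $2n$, and the total is $2nN=O(\frac{n}{\eps^2}\log\frac1\delta)$.

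The only step needing real care is the zero-probability case for $\mu$ in the unbiasedness argument. It is handled by the convention $L_{\mu,i}=-\infty$, and everything else is routine.
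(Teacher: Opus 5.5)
Your proof is correct and follows the paper's argument. Both use the convention $\log 0=-\infty$ to identify $R_i$ with $(1-\mu(X_i)/\pi(X_i))_+$, invoke \eqref{eq:tv-lr-representation} for unbiasedness, apply Hoeffding to the bounded i.i.d.\ summands, and count queries. Your count of $2nN$, which reuses the generation queries to read off $L_{\pi,i}$, is slightly tighter than the paper's $3nN$, but this does not change the $O(\frac{n}{\eps^2}\log\frac1\delta)$ bound.
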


\begin{proof}
Letting $\log 0 :=-\infty$, we obtain that  \(R_i=(1-\frac{\mu(X_i)}{\pi(X_i)})_+\). 
By Lemma~\ref{lem:tv-representations}, we know that \(\bbE[R_i]=\TV{\pi, \mu}\). Moreover, since $\cbra{R_i}_{i \in [N]} \in [0, 1]$ are independent, Hoeffding's inequality gives
\[
    \Pr\left[
        \left|
            \frac1N\sum_{i=1}^N R_i-\TV{\pi, \mu}
        \right|
        \ge \varepsilon
    \right]
    \le
    2\exp(-2N\varepsilon^2).
\]
Taking \( N\) as stated gives the desired probability bound. Each sequence log-likelihood evaluation and sample generation uses $n$ queries, which gives us $3nN = O(\frac{n}{\varepsilon^2} \log \frac{1}{\delta})$ query complexity bound.
\end{proof}
\newcommand{\Cond}{\mathsf{Cond}}
\subsection{Lower bound}
\label{ssec:lower_whit_box}
In this subsection, we take $\Sigma=\{0,1\}$ to be a binary alphabet. We provide a hard instance showing that, for constant $\delta$, Theorem~\ref{thm:whitebox_lr} achieves the optimal query complexity up to a constant factor. Intuitively, this hard instance reduces to estimating a parameter $p \in (0, 1)$ up to error $\eps$, in a way such that it takes $\approx n$ prefix logit queries to reveal one sample $\sim \Bern(p)$.

\textbf{Hard instance construction.} Fix $r\in[1,n-2]$, let $t=n-r$, and set $L\defeq |\Sigma^r|=2^r$.  We divide any $x\in\Sigma^n$ into two parts: \(x=u\oplus z\), where \(u\in\Sigma^r,z\in\Sigma^t.\) The prefix $u$ serves as a block index, while the suffix $z$ specifies a
leaf inside that block.  
In each block, we randomly choose whether $\mu_u, \pi_u \in \calP(\Sigma^t)$ are identical or separated. If they are separated, they share the same $(t-1)$-bit prefix inside the block and differ only at the final bit, so the discrepancy takes $t$ queries to reveal.

Formally, for $p\in[0,1]$, define $(\mu,\pi)\sim\mathcal D_p$ as follows. For each $u\in\Sigma^r$, draw
\[
    X_u\simiid\Bern(p),
    \qquad
    Z_u\simiid\Unif(\{0,1\}^{t-1}),
    \qquad
    B_u\simiid\Unif(\{0,1\}).
\]
Let \(a_u\defeq Z_u \oplus B_u\), \(b_u\defeq Z_u\oplus (1-B_u),\) be sibling leaves.  Define distributions $\mu_u,\pi_u \in \calP(\Sigma^t)$ as 
\[
    (\mu_u,\pi_u)
    =
    \begin{cases}
    \left(
        \frac12\delta_{a_u}+\frac12\delta_{b_u},
        \frac12\delta_{a_u}+\frac12\delta_{b_u}
    \right),
    & X_u=0,\\[1em]
    \left(
        \delta_{a_u},
        \delta_{b_u}
    \right),
    & X_u=1.
    \end{cases}
\]
Finally, let \(\mu(u \oplus z)\coloneqq \frac1L \mu_u(z)\), \(\pi(u \oplus z)\coloneqq \frac1L \pi_u(z)\). A sample from $\mu$ is generated by drawing $u\sim\Unif(\Sigma^r)$ and then drawing $z\sim \mu_u$; a sample from $\pi$ is generated analogously using $\pi_u$.

This construction makes the TV distance easy to compute.  Since different prefixes $u$ define disjoint blocks, the global distance decomposes as the average of the suffix distances:
\begin{equation}
    \label{eq:tv-rep-hard-instance}
    \TV{\mu,\pi}
    =
    \frac1L\sum_{u\in\Sigma^r}\TV{\mu_u,\pi_u}=
    \frac1L\sum_{u\in\Sigma^r}X_u.
\end{equation}
The above display shows that, under $(\mu,\pi)\sim\mathcal D_p$, $\TV{\mu,\pi}$ is exactly the empirical average of $L$ i.i.d.\ Bernoulli variables.  Hence, $\TV{\mu,\pi}$ is close to $p$ with high probability.  Therefore, we can show that an accurate estimate of $\TV{\mu,\pi}$ distinguishes the two hypotheses
\[
    H_0:(\mu,\pi)\sim\mathcal D_{p_0},
    \qquad
    H_1:(\mu,\pi)\sim\mathcal D_{p_1},
\]
whenever $p_1-p_0$ is sufficiently large. 
\begin{lemma}
\label{lem:tv-est-to-coin-test}
Let $0\le p_0<p_1\le 1$ and set \(\Delta \coloneqq \frac{p_1-p_0}{4}.\) Suppose $\calA$ estimates $\TV{\mu,\pi}$ within additive error $\Delta$ with probability at least $\frac 2 3$ for every fixed pair $(\mu,\pi)$. Then there is a tester distinguishing $\mathcal D_{p_0}$ from $\mathcal D_{p_1}$ with success probability at least \(\frac23 - 2\exp(-2L\Delta^2)\).
\end{lemma}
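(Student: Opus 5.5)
The tester runs $\calA$ on the hidden pair $(\mu,\pi)$, obtaining an estimate $\hat d$. It outputs $H_1$ if $\hat d \ge \frac{p_0+p_1}{2}$ and $H_0$ otherwise. The threshold sits at distance $2\Delta$ from each of $p_0$ and $p_1$. The plan is to show the tester errs only when one of two events occurs: $\calA$ fails, or the realized TV distance deviates from $p_j$ by at least $\Delta$. A union bound over these two events then gives the claim.

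First, control the realized TV. Under $\mathcal D_{p_j}$, \eqref{eq:tv-rep-hard-instance} gives $\TV{\mu,\pi} = \frac1L\sum_u X_u$, where the $X_u$ are i.i.d.\ $\Bern(p_j)$. Hoeffding's inequality then yields
\[
\Pr\Brack{\Abs{\TV{\mu,\pi}-p_j} \ge \Delta} \le 2\exp(-2L\Delta^2).
\]

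Second, handle the estimator. Its guarantee holds for every fixed pair $(\mu,\pi)$. Conditioning on the draw of $(\mu,\pi)$ from $\mathcal D_{p_j}$ (the algorithm's internal randomness is independent of this draw), we get $\Abs{\hat d - \TV{\mu,\pi}} \le \Delta$ with conditional probability at least $\frac23$. Averaging over the draw gives the same unconditional bound.

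On the intersection of the two good events, $\Abs{\hat d - p_j} \le 2\Delta$. The slack is tight only in the boundary case $\hat d = p_j \pm 2\Delta$, which must not land on the wrong side of the threshold. To avoid the tie I would use strict inequalities: define the Hoeffding bad event as deviation $\ge \Delta$, so that on the good event the deviation is strictly less than $\Delta$. Then $\hat d > p_0 + ... $ is avoided, that is, $\hat d < p_0 + 2\Delta$ under $H_0$ and $\hat d > p_1 - 2\Delta$ under $H_1$. The tester is therefore correct, and the union bound gives success probability at least $1 - \frac13 - 2\exp(-2L\Delta^2)$.

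The only step needing care is this boundary and conditioning bookkeeping. Everything else is a direct application of Hoeffding's inequality and the decomposition \eqref{eq:tv-rep-hard-instance}.
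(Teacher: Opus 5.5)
Your proposal is correct and follows the same route as the paper: threshold $\hat d$ at $\frac{p_0+p_1}{2}$, apply Hoeffding to $\TV{\mu,\pi}=\frac1L\sum_u X_u$, and take a union bound with $\calA$'s failure probability. Your version is slightly more careful. Taking the Hoeffding bad event as deviation $\ge\Delta$ gives $|\hat d-p_j|<2\Delta$ strictly, which handles the tie $\hat d=\frac{p_0+p_1}{2}$ under $\mathcal D_{p_0}$. The paper's version (bad event $>\Delta$, conclusion $\hat d\le\frac{p_0+p_1}{2}$) technically misclassifies that tie, since its tester outputs $p_1$ there.
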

\begin{proof}
By~\eqref{eq:tv-rep-hard-instance} and Hoeffding's inequality, we know that for $(\mu, \pi) \sim \mathcal D_p$, 
\[
    \Pr\left[
        |\TV{\mu,\pi}-p|>\Delta
    \right]
    \le
    2\exp(-2L\Delta^2).
\]
The tester runs $\calA$ and outputs $p_1$ iff
\(
    \hat d \ge \tfrac{p_0+p_1}{2}.
\)
Conditioned on the success of the tester and $|\TV{\mu,\pi}-p| \le \Delta$, we have
\(
    |\hat d-p|\le 2\Delta=\frac{p_1-p_0}{2}.
\)
Therefore, if $p=p_0$, then $\widehat d\le \frac{p_0+p_1} 2$; if $p=p_1$, then $\widehat d\ge \frac{p_0+p_1} 2$.  Hence the tester succeeds on these two events.
Applying a union bound gives a success probability of at least \(\frac23 - 2\exp(-2L\Delta^2)\).
\end{proof}

Lemma~\ref{lem:tv-est-to-coin-test} shows it is enough to lower bound the oracle query complexity of estimating $\TV{\mu, \pi}$. Intuitively, we wish to show that many $(u, X_u)$ must be revealed to perform this estimate. 

We then introduce Lemma~\ref{lem:few-revealed-blocks} showing that with only $B$ prefix queries, any adaptive algorithm can reveal at most $O(\frac B t)$ such blocks $u$ in expectation. 
We formalize this through the query-answer transcript. A possibly randomized adaptive algorithm chooses each query $W_i\in\Sigma^{<n}$ as a function of its private randomness and the previous query-answer pairs. We write
\[
    \mathsf{Tr}
    =
    \bigl((W_1,\calO(W_1)),\ldots,(W_T,\calO(W_T))\bigr)
\]
for the full transcript, where $\calO(W_i)$ is the oracle answer and $T \le B$ is the stopping time. For simplicity, we assume that $\calO(W_i)$ reveals both $\Odist^\pi(W_i)$ and $\Odist^\mu(W_i)$, which is without loss of generality up to a constant factor in the oracle query complexity.

The crux of Lemma~\ref{lem:few-revealed-blocks}'s argument is that $X_u$ can affect the transcript only if the algorithm queries the hidden parent $u \oplus Z_u$.  Hence, before learning anything about $X_u$, the algorithm must locate the hidden string
$Z_u\in\{0,1\}^{t-1}$.  We capture this notion formally by bounding the mutual information between the transcript and the initially-random set of strings
$\mathcal Z=(Z_u)_{u\in\Sigma^r}$.

\begin{lemma}
\label{lem:few-revealed-blocks}
Consider any adaptive algorithm interacting with the oracle in Definition~\ref{def:next_token_logprob} on an instance $(\mu, \pi)$ drawn from $\mathcal D_p$. Suppose its worst-case query complexity is $B$. Let $\calG \subseteq \Sigma^r$ be the set of $u \in \Sigma^r$ for which $W_j = u \oplus Z_u$ for some $j \le T$.  There is a universal constant
$C>0$ such that
\[
    \mathbb E[|\mathcal G|]\le C \cdot \frac{B}{t}.
\]
\end{lemma}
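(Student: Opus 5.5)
The plan is to prove Lemma~\ref{lem:few-revealed-blocks} by a per-block counting argument rather than a global mutual-information bound. For each block $u$, let $N_u$ be the number of queries $W_j$ of the form $u\oplus w$, so $\sum_u N_u\le B$ deterministically. Split according to whether $N_u$ is large:
\[
\mathbb E[|\mathcal G|]=\sum_u \Pr[u\in\mathcal G]\le \sum_u \Pr\!\left[N_u\ge \tfrac{t-1}{4}\right]+\sum_u \Pr\!\left[u\in\mathcal G,\ N_u<\tfrac{t-1}{4}\right].
\]
By Markov, the first sum is at most $\frac{4}{t-1}\,\mathbb E\sum_u N_u\le \frac{4B}{t-1}$. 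It therefore remains to show that locating $Z_u$ with few queries is unlikely. The target is a bound of order $2^{-(t-1)/2}$ times a factor linear in the number of queries, so that summing over blocks still gives $O(B/t)$.

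The first step records what a single answer can reveal about block $u$. Take a query $u\oplus w$ with $|w|=j$.
\begin{itemize}
\item If $j\le t-2$ and $w$ is a prefix of $Z_u$, then both $\mu$ and $\pi$ put conditional mass $1$ on the next bit of $Z_u$, so the answer is that bit.
\item If $w$ is not such a prefix, the answer is $\bot$ or a fixed uniform response that does not depend on the instance.
\item At $j=t-1$ (the parent $u\oplus Z_u$), the answer additionally depends on $(X_u,B_u)$.
\item Queries of length $<r$ give the uniform distribution over block indices and carry no information.
\end{itemize}
Hence each answer about block $u$ takes one of $O(1)$ values, say at most $c_0$, and it is a deterministic function of $(w,Z_u,X_u,B_u)$.

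The second step uses independence across blocks. Since the triples $(Z_u,X_u,B_u)$ are independent over $u$, fix block $u$ and condition on everything about the other blocks and on the algorithm's private randomness. Answers to queries outside block $u$ can then be simulated, and the interaction with block $u$ becomes a deterministic adaptive decision tree whose internal nodes are queries into block $u$ with branching at most $c_0$.

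Truncate this tree at depth $N<\frac{t-1}{4}$ queries into the block. It has at most $c_0^{N}$ root-to-leaf paths, and each path contains at most $N$ depth-$(t-1)$ queries $u\oplus w$ with $|w|=t-1$. The event $\{u\in\mathcal G,\ N_u<\frac{t-1}{4}\}$ forces some such query on the realized path to equal $u\oplus Z_u$. A union bound over paths and over queries on each path, using $Z_u\sim\Unif(\{0,1\}^{t-1})$, gives
\[
\Pr\!\left[u\in\mathcal G,\ N_u<\tfrac{t-1}{4}\,\middle|\,\text{rest}\right]\le N c_0^{N}2^{-(t-1)}\le 2^{-(t-1)/2}
\]
once $\log_2 c_0\le 2$ and $t$ is large. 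For small $t$ the lemma is trivial, since $|\mathcal G|\le B$.

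The main obstacle is that this bound is per block, while there are $L=2^r$ blocks, which could be far larger than $B$. A block can only enter $\mathcal G$ if it is queried at all, so I will refine the bound to $\Pr[u\in\mathcal G,\ N_u<\frac{t-1}{4}]\le \mathbb E[\min(N_u,1)]\cdot 2^{-(t-1)/2}$. To get this, condition on the first time block $u$ is touched: before that time the realized path is empty, so the union bound only starts once $N_u\ge 1$. Summing over $u$ then gives at most $B\,2^{-(t-1)/2}\le B/t$. Combined with the Markov term, this yields $\mathbb E[|\mathcal G|]\le C\,\frac{B}{t}$.

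I expect the care to go into justifying the decision-tree reduction under full adaptivity: the algorithm may interleave blocks, and its choices inside block $u$ depend on other blocks. The independence of blocks lets us fix the other blocks' randomness inside the conditional probability, and I would check this step carefully.
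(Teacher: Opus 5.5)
Your argument is correct, but it takes a different route from the paper. The paper uses a global entropy count. Conditioned on the private seed, every answer takes $O(1)$ values, so $I(\mathcal Z;\mathsf{Tr})\le h_0B$. Conversely, each $u\in\mathcal G$ has $Z_u$ read off from the transcript, so $H(\mathcal Z\mid \mathsf{Tr})\le (t-1)(L-\mathbb E|\mathcal G|)$. Together these give $I(\mathcal Z;\mathsf{Tr})\ge (t-1)\,\mathbb E|\mathcal G|$ and hence $\mathbb E|\mathcal G|\le h_0B/(t-1)$. In that argument, adaptivity and interleaving across blocks are absorbed by the information accounting, with no per-block reduction.

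Your proof works block by block instead. Blocks queried at least $(t-1)/4$ times number at most $4B/(t-1)$ (deterministically, in fact). A lightly queried block can only be located by an adaptive tree with $O(1)$ branching and fewer than $(t-1)/4$ levels, which covers fewer than $4^{N}<2^{(t-1)/2}$ candidate parents, so the hit probability is $2^{-\Omega(t)}$. Your observation that $\{N_u\ge 1\}$ is determined by the other blocks and the seed correctly limits this to at most $B$ touched blocks, so the second term is negligible. This route avoids information theory and makes explicit that essentially only heavily queried blocks get found. The cost is the extra bookkeeping you flag: the decision-tree reduction, which is valid because the triples $(Z_u,X_u,B_u)$ are independent across $u$, and the measurability of the first-touch event. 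It also needs the side remark that the lemma is trivial for bounded $t$, since $|\mathcal G|\le B$.

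One harmless slip: with $c_0=4$ (the parent query has four possible answers), the displayed bound $Nc_0^N2^{-(t-1)}\le 2^{-(t-1)/2}$ is off by the factor $N$ when $N$ is close to $(t-1)/4$. Union-bounding over the fewer than $c_0^N$ tree nodes, rather than over paths times positions, removes that factor. Alternatively, keep it, since $B\,t\,2^{-(t-1)/2}$ is still $O(B/t)$.
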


\begin{proof}
Throughout, let $q$ denote a random seed specifying internal randomness, which is drawn and fixed before interactions with $\calO$. Thus, the algorithm is deterministic conditioned on $q$.

Let $\ell=t-1 \ge \frac t 2$ and let $\mathcal Z=(Z_u)_{u\in\Sigma^r}$. Inside any block $u \in \Sigma^r$, both $\mu_u$ and $\pi_u$ are supported on the two sibling leaves $Z_u\oplus B_u$ and $Z_u\oplus(1-B_u)$. Hence, before $u \oplus Z_u$ is queried, the oracle only reveals whether the queried prefix is consistent with $Z_u$ and, if so, reveals the next bit of $Z_u$.\footnote{If not, the model returns the distinguished symbol $(\perp, \perp)$, which is included in the transcript and carries only \(O(1)\) bits.} When the hidden $u \oplus Z_u$ is queried, the answer is $(\half(e_0 + e_1),\half(e_0 + e_1))$ if $X_u=0$, and is $(\delta_{B_u},\delta_{1-B_u})$ if $X_u=1$. Therefore $X_u$ can affect the oracle answer only at the query $u \oplus Z_u$.

Let $\mathsf{Tr}$ be the full transcript. We first upper bound the mutual information
$I(\mathcal Z;\mathsf{Tr} \mid s)$.  As shown above, each oracle answer lies in a constant-sized set so each query contributes at most $h_0 = O(1)$ bits of mutual information about $\mathcal Z$. Because the algorithm makes at most $B$ queries, we have \(I(\mathcal Z;\mathsf{Tr}\mid q) \le h_0 B\). Averaging over the internal randomness $q$, we obtain $I(\calZ; \mathsf{Tr}) \le h_0 B$.

Conversely, if $u\in\mathcal G$, then the transcript contains a query of the form $u \oplus Z_u$, so it determines the full string $Z_u$.  Thus, after observing the transcript, revealed blocks $u$ have no remaining entropy in their hidden parents, while each unrevealed block $u$ has at most $\ell$ bits of remaining entropy. Hence
\[
    H(\mathcal Z\mid \mathsf{Tr})
    \le
    \ell\bigl(L-\mathbb E[|\mathcal G|]\bigr).
\]
Since the unconditional entropy $H(\mathcal Z)=L\ell$, we get
\[
    I(\mathcal Z;\mathsf{Tr})
    =
    H(\mathcal Z)-H(\mathcal Z\mid \mathsf{Tr})
    \ge
    \ell\,\mathbb E[|\mathcal G|].
\]
Combining the upper and lower bounds gives
\[
    \mathbb E[|\mathcal G|]
    \le
    \frac{h_0B}{\ell}
    \le
    2h_0\frac{B}{t},
\]
where we used $\ell=t-1\ge \frac t 2$.
\end{proof}
We are now ready to prove the lower bound. Lemma~\ref{lem:tv-est-to-coin-test} shows that a TV estimator for our random instance can distinguish whether the underlying bias is \(p_0\) or \(p_1\).  In the construction, this bias appears only through the variables \(X_u\sim\Bern(p)\); conditioned on these variables, the remaining randomness in \(Z_u\) and \(B_u\) is independent of \(p\).

Thus, given an unknown coin of bias \(p\in\{p_0,p_1\}\), we can simulate the oracle for \((\mu,\pi)\sim\mathcal D_p\) by lazily drawing a coin sample only when the transcript reveals some \(X_u\).  By Lemma~\ref{lem:few-revealed-blocks}, a cost-\(B\) transcript reveals only \(O(\frac B t)\) such blocks in expectation. Hence a cost-\(B\) TV estimator would yield a coin distinguisher using only \(O(\frac B t)\) coin samples. Combining this result with classic sample complexity lower bounds for Bernoulli parameter estimation yields Theorem~\ref{thm:prefix-conditional-lower-bound}.
\begin{theorem}
\label{thm:prefix-conditional-lower-bound}
Fix any $\eps \in (0, \frac 1 8)$, and let $n = \Omega(\log \frac{1}{\eps})$ for a sufficiently large constant. Any algorithm that solves Problem~\ref{prob:tv-est} with probability at least $\frac 2 3$ requires $ \Omega(\tfrac{n}{\eps^2})$ queries to $(\Odist^\pi, \Odist^\mu)$.
\end{theorem}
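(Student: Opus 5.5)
We take the hard distribution $\mathcal D_p$ with two biases $p_0=\frac14$ and $p_1=\frac14+4\eps$. Then $\Delta=\frac{p_1-p_0}{4}=\eps$. We choose $r$ so that $L=2^r$ satisfies $2\exp(-2L\eps^2)\le\frac1{12}$; it suffices to take $r=\lceil\log_2(C_0/\eps^2)\rceil=O(\log\frac1\eps)$. The hypothesis $n=\Omega(\log\frac1\eps)$ with a large constant then gives $r\le n/2$, so $t=n-r\ge n/2$.

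Now suppose $\calA$ solves Problem~\ref{prob:tv-est} with accuracy $\eps$ and success probability $\frac23$ using at most $B$ queries. By Lemma~\ref{lem:tv-est-to-coin-test}, $\calA$ yields a tester that distinguishes $\mathcal D_{p_0}$ from $\mathcal D_{p_1}$ with success probability at least $\frac23-\frac1{12}=\frac7{12}$.

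Next we convert this tester into a coin tester. We are given i.i.d.\ tosses of a coin with unknown bias $p\in\{p_0,p_1\}$. The coin tester samples all $Z_u,B_u$ itself and simulates the logit oracle for $(\mu,\pi)\sim\mathcal D_p$ lazily, consuming a fresh coin toss as $X_u$ only the first time some query equals $u\oplus Z_u$. This is a faithful simulation: as noted in the proof of Lemma~\ref{lem:few-revealed-blocks}, oracle answers depend on $X_u$ only at the query $u\oplus Z_u$. The number of tosses used is exactly $|\calG|$, and Lemma~\ref{lem:few-revealed-blocks} gives $\E|\calG|\le CB/t\le 2CB/n$.

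To obtain a worst-case bound, we truncate. The coin tester aborts and outputs a random guess if it would need more than $M=24\cdot 2CB/n$ tosses. By Markov's inequality this abort occurs with probability at most $\frac1{24}$. Hence the resulting coin tester uses at most $M$ tosses and still succeeds with probability at least $\frac7{12}-\frac1{24}>\frac12+\frac1{24}$.

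We then invoke the classical bound for distinguishing $\Bern(\frac14)$ from $\Bern(\frac14+4\eps)$. Since $\KL{\Bern(p_1)}{\Bern(p_0)}=O(\eps^2)$ (here $\eps<\frac18$ keeps $p_1$ bounded away from $1$), the KL divergence over $M$ tosses is $O(M\eps^2)$. Pinsker's inequality then shows that any tester achieving constant advantage over $\frac12$ needs $M=\Omega(1/\eps^2)$. Consequently $B=\Omega(nM)=\Omega(n/\eps^2)$.

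The main subtlety is this conversion from an expected toss count to a worst-case count, together with checking that the lazy simulation reproduces the joint law of the transcript exactly. The first is handled by Markov truncation and the second by the locality of $X_u$. Everything else is bookkeeping of constants.
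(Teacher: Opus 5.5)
Your proposal is correct and follows the paper's proof essentially step for step. You reduce via Lemma~\ref{lem:tv-est-to-coin-test} to distinguishing $\mathcal D_{p_0}$ from $\mathcal D_{p_1}$, simulate the oracle from an unknown coin by lazily sampling $X_u$, bound the number of tosses with Lemma~\ref{lem:few-revealed-blocks} plus Markov truncation, and finish with the $\Omega(1/\eps^2)$ Bernoulli distinguishing bound. The only differences are cosmetic: you take $p_0=\frac14$ instead of $\frac12$, use different truncation constants, and derive the Bernoulli lower bound via KL and Pinsker rather than citing it.
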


\begin{proof}
Set \(p_0 = \frac12, p_1=\frac12+4\varepsilon\). By assumption we can set $r = \Omega(\log \frac 1 \eps)$ large enough so that $n \ge 2r$ and the error term in Lemma~\ref{lem:tv-est-to-coin-test} satisfies \(2\exp(-2L\varepsilon^2)\le \frac1{100}\). Suppose that there is an estimator $\calA$ with worst-case prefix-query cost $B$.  By Lemma~\ref{lem:tv-est-to-coin-test}, $\calA$ yields a distinguisher between
\[
    (\mu,\pi)\sim\mathcal D_{p_0}
    \qquad\text{and}\qquad
    (\mu,\pi)\sim\mathcal D_{p_1}
\]
with success probability at least $\frac 2 3 - \frac 1 {100}> \frac 3 5$. 

We now simulate this distinguisher in an equivalent way, using an unknown coin of bias $p\in\{p_0,p_1\}$.  The simulator samples all of the hidden $Z_u$ and $B_u$ using its own randomness, but does not sample $X_u$ in advance.  Whenever the distinguishing algorithm reveals an $X_u$, equivalently when some $W_i = u \oplus Z_u$ in the transcript, the simulator flips the unknown coin and stores it as $X_u$. By lazy sampling, the simulated transcript has exactly the same law as the true transcript under $\mathcal D_p$.

Let $N$ be the number of coin samples used by the simulator.  Since each coin
sample corresponds to one revealed block, Lemma~\ref{lem:few-revealed-blocks}
implies \(\mathbb E[N]\le C\frac{B}{t}\). By Markov's inequality, we may truncate the simulation so that it uses at most $100 C\tfrac{B}{t}$ coin samples while losing only $\frac 1 {100}$ of success probability. Thus, we obtain a tester distinguishing $\Bern(p_0)$ from $\Bern(p_1)$ with success probability $> \frac 3 5$ using $O(\frac B t)$ coin samples. Classical results, e.g., \cite{canonne2022short}, imply that distinguishing $\Bern(p_0)$ from $\Bern(p_1)$ requires \(\Omega\left(\tfrac1{\varepsilon^2}\right)\) samples, which implies $B=\Omega\left(\frac{t}{\varepsilon^2}\right) = \Omega\paren{\frac{n}{\varepsilon^2}}$.
\end{proof}
\begin{remark}[Extension to weaker prefix oracles]
\label{rem:lower-bound-weaker-oracles}
The same lower bound also applies to weaker access models, such as noisy prefix
distribution or prefix sampler oracles. The only oracle-specific part of the proof is Lemma~\ref{lem:few-revealed-blocks}, where we show that a transcript of \(B\) prefix queries can reveal at most \(O(B/t)\) hidden blocks in expectation. This step is information-theoretic: the bottleneck is locating the hidden parent \(u\oplus Z_u\), before which the transcript contains no information about the block variable \(X_u\). Since noisy distribution access and sampler access are no more informative than exact prefix-logit access, the same mutual-information
bound on the revealed set \(\calG\) continues to hold. The remaining reduction from TV estimation to distinguishing \(\mathrm{Bern}(p_0)\) from \(\mathrm{Bern}(p_1)\) is unchanged, and therefore these weaker oracle models inherit the same \(\Omega(n/\varepsilon^2)\) lower bound.
\end{remark}
\section{TV Estimation via Noisy Prefix Distribution Queries}
\label{sec:blackbox-prefix-tv}

In this section, we study Problem~\ref{prob:tv-est} under access to the noisy prefix distribution oracles $(\Osketch^\pi,\Osketch^\mu)$. We begin by motivating the noisy prefix distribution oracle definition in Definition~\ref{def:conditional-sketch} and justifying the claims from Section~\ref{ssec:oracle_def}, particularly, that our results for Definition~\ref{def:conditional-sketch} with $\sigma^2 = k$ capture the prefix sampling oracle in Definition~\ref{def:next_token_sample}. We then present our basic estimator and its approximation guarantee in Section~\ref{ssec:est_construct_blackbox}, followed by an improved variance reduction argument based on multilevel Monte Carlo in Section~\ref{ssec:var_reduce_blackbox} that yields our final query complexity bound. In Section~\ref{ssec:lb_general}, we show that the dependence on the noise level $\sig^2$ and the effective support size $K$ in the upper bound is unavoidable by providing a hard construction, leaving open the dependence on $n$. 

\subsection{Motivation for the relative variance model}
\label{ssec:relative-variance-motiv}
Here we give two basic justifications for the relative variance parameterization used in our noisy prefix distribution oracle (Definition~\ref{def:conditional-sketch}). First, under this parameterization, Definition~\ref{def:conditional-sketch} subsumes the two oracle models introduced earlier: exact logit queries correspond to zero-noise sketches, while prefix-conditional samples correspond to random one-hot probability vectors with $\sigma^2 \le k$. 

\begin{lemma}
    \label{lem:oracle_reduction}
    A prefix logit oracle (Definition~\ref{def:next_token_logprob}) is a noisy prefix distribution oracle (Definition~\ref{def:conditional-sketch}) with $\sigma^2 = 0$. Moreover, drawing $a \in \Sigma$ from a prefix sampler oracle (Definition~\ref{def:next_token_sample}) and outputting $e_a \in \calP(\Sigma)$ is a noisy prefix distribution oracle with $\sigma^2 = k$. 
\end{lemma}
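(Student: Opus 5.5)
The plan is to check the two conditions of Definition~\ref{def:conditional-sketch}, unbiasedness and the relative-variance bound, separately for each of the two oracles. Throughout, fix a prefix $x_{1:i}$ with $\pi(X_{1:i}=x_{1:i})>0$ and write $p \defeq p_i = \pi_{x_{1:i}}$. We read $k$ in the statement as the maximum next-token support size, i.e.\ $K$ from Section~\ref{ssec:notation}.

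\textbf{Exact logit oracle.} From $\Odist^\pi(x_{1:i})$ we can form the deterministic vector $\widehat p = \exp(\Odist^\pi(x_{1:i})) = p$, exponentiating coordinatewise. This vector lies in $\calP(\Sigma)$, and since it is deterministic we have $\E[\widehat p]=p$. Moreover $\Var(\widehat p(a)/p(a)) = 0$ for every $a\in\supp(p)$. Hence $\E[\CS{\widehat p}{p}] = 0$, so the bound holds with $\sigma^2=0$.

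\textbf{Sampler oracle.} Draw $A \sim \Osamp^\pi(x_{1:i})$, so that $A\sim p$, and output $\widehat p = e_A \in \calP(\Sigma)$. Unbiasedness holds because $\E[e_A](a) = \Pr[A=a] = p(a)$. For the variance, fix $a\in\supp(p)$. Then $\widehat p(a) = \ind(A=a)$ is a $\Bern(p(a))$ variable, so
\[
\Var\!\left(\frac{\widehat p(a)}{p(a)}\right) = \frac{p(a)(1-p(a))}{p(a)^2} = \frac{1-p(a)}{p(a)}.
\]
Averaging over $a\sim p$ gives
\[
\E_{a\sim p}\Brack{\Var\!\left(\frac{\widehat p(a)}{p(a)}\right)} = \sum_{a\in\supp(p)} (1-p(a)) = |\supp(p)| - 1 \le K-1 \le K.
\]

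To connect this with the $\chi^2$ form used in Definition~\ref{def:conditional-sketch}, observe that $\widehat p$ is supported inside $\supp(p)$, which holds trivially in both cases. Then $\CS{\widehat p}{p} = \sum_{a\in\supp(p)} (\widehat p(a)-p(a))^2/p(a)$. Taking expectations and using $\E[\widehat p(a)] = p(a)$ gives $\sum_a \Var(\widehat p(a))/p(a) = \sum_a p(a)\,\Var(\widehat p(a)/p(a))$, which is the quantity computed above. This bookkeeping about supports is the only point that needs care; the rest is routine.
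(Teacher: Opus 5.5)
Your proof is correct and follows the paper's argument: both check unbiasedness directly and compute the relative variance of the one-hot output $e_A$ as $|\supp(p)|-1\le K$. The only cosmetic difference is that you sum per-coordinate Bernoulli variances (the right-hand form in Definition~\ref{def:conditional-sketch}, which is also the calculation in the paper's introduction), while the formal proof computes $\E_{a\sim\pi_s}[\CS{e_a}{\pi_s}]$ directly. These are the same double sum taken in the other order, and your reading of $k$ as $K$ is what the paper intends.
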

\begin{proof}
The first statement is trivial. For the second, consider any $\pi \in \calP(\Sigma^n)$ and any prefix $s \in \Sigma^{i}$ with $i \in [n-1]$. Then letting $a \sim \Osamp^\pi(s)$, we have $\E e_a = \pi_{s}$ by definition, and
\begin{align*}
\bbE_{a \sim \pi_s}\Brack{\CS{e_a}{\pi_s}} &= \E_{a \sim \pi_s}\Brack{\sum_{b \in \Sigma: [\pi_s]_b > 0} \frac{(\ind_{a = b}- \pi_s(b))^2}{\pi_s(b)}} \\
&= \E_{a \sim \pi_s}\Brack{\frac{(1 - \pi_s(a))^2}{\pi_s(a)}  + 1 - \pi_s(a)} = \E_{a \sim \pi_s}\Brack{\frac{1 - \pi_s(a)}{\pi_s(a)} } \le \Abs{\supp(\pi_s)} \le k.
\end{align*}
\end{proof}

Second, Definition~\ref{def:conditional-sketch} also captures a simple form of noisy logit access with linear noise. Since language models produce logits before applying the softmax, a natural abstraction of inference-time instability is as an additive perturbation to the logits. Lemma~\ref{lem:gaussian-logit-relative-variance} shows that small (additive) Gaussian logit perturbations induce bounded relative variance at the distributional level.

\begin{lemma}
\label{lem:gaussian-logit-relative-variance}
Fix $\ell \in \R^m$, let $s(\xi) := \softmax(\ell+\xi) \in \calP(\Sigma)$, and let $s_a(\xi) = \softmax(\ell+\xi)_a$, for all $a \in \Sigma$. Let $p:=\bbE_{\xi}[s(\xi)]$, where $\xi\sim N(0,\gamma^2 I_m)$ is an isotropic Gaussian. If $\gamma^2<\half$, then
\[
    V :=\bbE_{\xi \sim N(0, \gamma^2 I_m)}\bra{\sum_{a \in \Sigma}
    \frac{(s_a(\xi)-p_a)^2}{p_a}}
    \le
    \frac{2\gamma^2}{1-2\gamma^2}.
\]
\end{lemma}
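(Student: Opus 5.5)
The plan is to bound each coordinate's variance by a multiple of $p_a^2$ using the Gaussian Poincar\'e inequality, and then sum. First note that $p_a = \E[s_a(\xi)]$, so
\[
V = \sum_{a\in\Sigma} \frac{\Var(s_a(\xi))}{p_a}.
\]
It therefore suffices to prove, for every $a$,
\begin{equation}
\label{eq:plan-var-target}
\Var(s_a(\xi)) \le \frac{2\gamma^2}{1-2\gamma^2}\, p_a^2 .
\end{equation}
Summing \eqref{eq:plan-var-target} against $1/p_a$ then gives $V \le \frac{2\gamma^2}{1-2\gamma^2}\sum_a p_a = \frac{2\gamma^2}{1-2\gamma^2}$, which is the claim. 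Every $p_a$ is strictly positive, since softmax outputs are positive, so no division issues arise.

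To get \eqref{eq:plan-var-target}, I would apply the Gaussian Poincar\'e inequality to the smooth bounded function $\xi \mapsto s_a(\xi)$ with $\xi\sim N(0,\gamma^2 I_m)$:
\[
\Var(s_a(\xi)) \le \gamma^2\, \E\bigl[\|\nabla_\xi s_a(\xi)\|_2^2\bigr].
\]
The softmax Jacobian gives $\nabla_\xi s_a = s_a(e_a - s)$, where $s = s(\xi)$. Hence
\[
\|\nabla_\xi s_a\|_2^2 = s_a^2\Bigl((1-s_a)^2 + \sum_{b\neq a} s_b^2\Bigr).
\]
Since $\sum_{b\ne a} s_b^2 \le \bigl(\sum_{b\neq a} s_b\bigr)^2 = (1-s_a)^2$, the bracket is at most $2(1-s_a)^2 \le 2$. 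This yields
\[
\Var(s_a) \le 2\gamma^2\, \E[s_a^2] = 2\gamma^2\bigl(\Var(s_a) + p_a^2\bigr).
\]

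Finally, because $\gamma^2 < \frac12$, I would rearrange to obtain $(1-2\gamma^2)\Var(s_a) \le 2\gamma^2 p_a^2$, which is exactly \eqref{eq:plan-var-target}. This self-bounding step is also where the hypothesis $\gamma^2<\half$ and the form $\frac{2\gamma^2}{1-2\gamma^2}$ of the bound come from.

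The step I expect to need the most care is choosing the right tool to relate the variance to $p_a^2$. A direct approach would compare $s_a$ with $\softmax(\ell)_a$ and use Jensen's inequality or log-normal moment computations. That route produces bounds of the form $e^{c\gamma^2}-1$ with a loose constant $c$, and it does not seem to reach the stated constant. The Poincar\'e route avoids this: the only computations are the softmax gradient and the elementary inequality $\sum_{b\ne a}s_b^2\le(1-s_a)^2$. The remaining point to check is that Poincar\'e applies, which holds because $s_a$ is smooth with bounded gradient.
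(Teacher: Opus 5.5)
Your proposal is correct and follows essentially the same route as the paper: you apply Gaussian Poincar\'e to $s_a$ with $\nabla s_a = s_a(e_a - s)$ and $\|e_a - s\|_2^2 \le 2$ to get $\Var(s_a) \le 2\gamma^2 \E[s_a^2]$, then finish with a self-bounding rearrangement. The only difference is that you rearrange coordinate by coordinate before summing, which gives the slightly stronger per-token bound $\Var(s_a) \le \frac{2\gamma^2}{1-2\gamma^2}p_a^2$; the paper instead sums first to get $V \le 2\gamma^2(V+1)$ and then rearranges.
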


\begin{proof}
By the Gaussian Poincar\'e inequality (Corollary 2.17, \cite{van2014probability}), for any smooth (infinitely-differentiable) function $f$ on $\R^{m}$, if $\xi \sim N(0, \gamma^2 I_m)$, 
\[
    \Var[f(\xi)]
    \le
    \gamma^2\mathbb E\|\nabla f(\xi)\|_2^2 .
\]
Applying this fact to the scalar function $f(\xi)=s_a(\xi)$ for each $a \in \Sigma$, and using the fact that $\nabla_\xi s_a(\xi)=s_a(e_a-s)$ with $\|e_a-s\|_2^2\le 2$, gives
\(
    \Var[s_a(\xi)]
    \le
    2\gamma^2\bbE\bra{s_a^2(\xi)} .
\)
Thus,
\[
    V = \sum_{a \in \Sigma}\frac{\operatorname{Var}[s_a(\xi)]}{p(a)}
    \le
    2\gamma^2
    \sum_{a \in \Sigma}\frac{\bbE\bra{s_a^2(\xi)}}{p(a)}
    =
    2\gamma^2(V+1).
\]
We conclude the proof by rearranging the above inequality.
\end{proof}

\subsection{Estimator construction}
\label{ssec:est_construct_blackbox}

In this section, we develop a basic estimator for solving Problem~\ref{prob:tv-est} under the noisy access afforded by Definition~\ref{def:conditional-sketch}. A na\"ive application of this estimator yields a suboptimal $\approx \frac{n^2\sig^2}{\eps^4}$ query complexity, which is improved by a variance reduction scheme in Section~\ref{ssec:var_reduce_blackbox}.

To begin, we recall the mixture representation of the TV distance \eqref{eq:tv-mixture-representation} from Lemma~\ref{lem:tv-representations}:
\begin{equation}\label{eq:Zdef}
\begin{aligned}
    \TV{\pi,\mu}
    =
    \mathbb E_{X\sim M}[Z(X)],\qquad  M=\frac{\pi+\mu}{2},
    \\
    Z_{\pi,\mu}(x)
    :=
    \begin{cases}
    \dfrac{|\pi(x)-\mu(x)|}{\pi(x)+\mu(x)},
    & \pi(x)+\mu(x)>0,\\[2ex]
    0,
    & \pi(x)=\mu(x)=0.
    \end{cases}
\end{aligned}
\end{equation}
When $(\pi,\mu)$ is clear from context, we write $Z(x)$ instead of $Z_{\pi,\mu}(x)$. Our strategy is to construct an empirical approximation $\widehat{Z}$ to $Z$ and then bound the expected squared error $\bbE_{X\sim M}[(\widehat Z(X) - Z(X))^2]$. 

As a first step, we explain how to construct an approximation to each of the quantities $\pi(x)$ and $\mu(x)$ required by $Z$. Fix some $r \in \N$ and prefix $s \in \Sigma^{<n}$. We define a random estimator
\begin{equation}\label{eq:psr}
    \widehat p_{s,r}
    :=
    \frac1{r}\sum_{j=1}^{r}Y_{s,j}^\pi \in \calP\Par{\Sigma},\qquad Y_{s,j}^\pi \simiid \Onoisy^\pi\Par{s}.
\end{equation}
In other words, $\hp_{s, r}$ simply queries the noisy prefix distribution oracle $\Onoisy^\pi(s)$ for $r$ times in total, and averages the outputs. For any distribution $\pi \in \calP(\Sigma^n)$ and any $r \in \N$, we can therefore define a corresponding empirical distribution $\hpi_r$ as follows: for a string $x \in \Sigma^n$, we let
\begin{equation}\label{eq:hpir}\hpi_r(x) \defeq \prod_{i = 0}^{n - 1} \hp_{x_{1:i}, r}(x_{i + 1})\end{equation}
be a random estimator for $\pi(x)$, where each of the $\hp_{x_{1:i}, r}(x_{i + 1})$ is estimated via \eqref{eq:psr} using $r$ independent queries to $\Onoisy(x_{1:i})$. Thus, one evaluation of $\hpi_r$ requires $nr$ queries to $\Onoisy$.

We first bound the expected $\chi^2$ divergence between $\pi \in \calP(\Sigma^n)$ and its empirical counterpart $\hpi_r$, where expectations are over the randomness defining $\hpi_r$ (e.g., in the oracle queries).

\begin{lemma}
\label{lem:empirical-tree-chi2}
For any $\pi \in \calP(\Sigma^n)$, let $\widehat \pi_r$ be defined in \eqref{eq:hpir}, where for each prefix $s \in \Sigma^{< n}$, we generate an estimate $\hp_{s, r}$ as in \eqref{eq:psr} independently. Then if $\Onoisy$ is a noisy prefix distribution oracle (Definition~\ref{def:conditional-sketch}) with relative variance $\sig^2$, there is a universal constant $C$ such that for any $r \ge Cn\sigma^2$, 
\[
    \mathbb E\left[
        \chi^2(\widehat \pi_r\|\pi)
    \right]
    \le
    \frac{Cn\sig^2}{r}.
\]
\end{lemma}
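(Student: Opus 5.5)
The plan is to write $\chi^2(\hpi_r\|\pi)+1=\sum_x \hpi_r(x)^2/\pi(x)$ and exploit the product structure of $\hpi_r$ over a tree of independent prefix estimates. Concretely, for $x\in\supp(\pi)$,
\[
\frac{\hpi_r(x)^2}{\pi(x)} = \pi(x)\prod_{i=0}^{n-1}\Par{\frac{\hp_{x_{1:i},r}(x_{i+1})}{\pi_{x_{1:i}}(x_{i+1})}}^2 .
\]
Mass of $\hpi_r$ outside $\supp(\pi)$ is not an issue: unbiasedness and nonnegativity of the oracle outputs force $\hp_{s,r}(a)=0$ almost surely whenever $\pi_s(a)=0$.

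The key point is that, along a single path $x$, the factors at different prefixes $x_{1:i}$ are independent, since distinct prefixes use independent queries. Taking expectations therefore factorizes:
\[
\E\Brack{\frac{\hpi_r(x)^2}{\pi(x)}}=\pi(x)\prod_{i}\Par{1+v_{x_{1:i}}(x_{i+1})},
\qquad
v_s(a)\defeq\Var\Par{\frac{\hp_{s,r}(a)}{\pi_s(a)}}.
\]
Averaging $r$ independent oracle outputs gives $v_s(a)=\frac1r\Var\Par{\frac{\widehat p(a)}{\pi_s(a)}}$. By Definition~\ref{def:conditional-sketch}, this yields $\E_{a\sim\pi_s}[v_s(a)]\le \sigma^2/r$ at every prefix $s$.

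The bound $\E[\chi^2(\hpi_r\|\pi)]+1 \le \E_{X\sim\pi}\Brack{\prod_i(1+v_{X_{1:i}}(X_{i+1}))}$ then follows by peeling the product from the last coordinate backward. Conditioning on $X_{1:n-1}$, the inner expectation over $X_n\sim\pi_{X_{1:n-1}}$ of $1+v$ is at most $1+\sigma^2/r$. Induction then gives
\[
\E_{X\sim\pi}\Brack{\prod_i\Par{1+v_{X_{1:i}}(X_{i+1})}}\le (1+\sigma^2/r)^n\le \exp(n\sigma^2/r).
\]
For $r\ge Cn\sigma^2$ we have $n\sigma^2/r\le 1/C\le 1$. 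Using $e^y-1\le (e-1)y$ on $[0,1]$, this gives $\E[\chi^2]\le (e-1)n\sigma^2/r\le Cn\sigma^2/r$ for $C\ge 2$.

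I expect the main obstacle to be justifying the factorization cleanly. Two points need care. First, the factors at different levels of a single path are genuinely independent because they use distinct prefixes, and no prefix is reused along a path; this is exactly the assumption that each $\hp_{s,r}$ is generated independently. Second, once the expectation over oracle randomness is taken pointwise in $x$, the sum over $x$ is an expectation under the true $\pi$. This lets us use only the averaged relative-variance bound at each node, not a worst-case bound over $a$. This is where the $\E_{a\sim p}$ form of Definition~\ref{def:conditional-sketch} is used, and it is why the backward tower-rule induction is needed rather than a crude per-coordinate bound.
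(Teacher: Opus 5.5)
Your proposal is correct and follows essentially the same argument as the paper. Both proofs use three ingredients:
\begin{itemize}
\item absolute continuity of $\hpi_r$ with respect to $\pi$;
\item the per-prefix bound $\E_{a\sim\pi_s}\bigl[(\hp_{s,r}(a)/\pi_s(a))^2\bigr]\le 1+\sigma^2/r$;
\item sequential application of that bound along the path, using independence across distinct prefixes, to obtain $(1+\sigma^2/r)^n$.
\end{itemize}
Your pointwise factorization followed by the backward tower-rule induction is simply a more explicit version of the paper's step of applying the one-step estimate sequentially along $X_1,\ldots,X_n$.
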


\begin{proof}
We observe that almost surely $\widehat\pi_r\ll \pi$, since any token with zero conditional probability under any $\pi(\cdot\mid s)$ receives zero mass under $\hp_{s, r}$. Hence
\[
    1+\chi^2(\widehat\pi_r\|\pi)
    =
    \mathbb E_{X\sim\pi}
    \left[
        \left(
            \frac{\widehat\pi_r(X)}{\pi(X)}
        \right)^2
    \right],
\]
where the expectation also includes the randomness of querying $\Onoisy$.
For any fixed prefix $s \in \Sigma^{< n}$, 
\[
\begin{aligned}
    \mathbb E_{a \sim \pi_s}
    \left[
        \left(\frac{\widehat p_{s,r}(a)}{\pi_s(a)}\right)^2
    \right] =
    \sum_{a:\pi_s(a)>0}
    \frac{\mathbb E[\widehat p_{s,r}(a)^2]}{\pi_s(a)} =
    \sum_{a:\pi_s(a)>0}
    \left(
        \pi_s(a)+\frac{\Var[\widehat p_{s,r}(a)]}{\pi_s(a)}
    \right)  \le
    1+\frac{\sig^2}{r}.
\end{aligned}
\]
Applying this one-step estimate sequentially along the path
$X_1,\ldots,X_n$, and using independence of the estimators $\hp_{s, r}$ across
prefixes $s$, gives the claim:
\[
    \E_{X \sim \pi}\left[\left(\frac{\widehat\pi_r(X)}{\pi(X)}\right)^2 \right]
    \le
    \left(1+\frac{\sig^2}{r}\right)^n \le 1 + \frac{Cn\sig^2}{r}.
\]
\end{proof}
In implementing the estimator $\hpi_r$ in \eqref{eq:hpir}, it is helpful to assume that in the background, a data structure has queried $\Onoisy(s)$ for $r$ times for \emph{every prefix} $s \in \Sigma^{< n}$. When our algorithm requires an estimate $\hpi_r(x)$, the data structure lazily provides the estimates $\hp_{s, r}$ for each $s = x_{1:i}$, and we use these estimates to compute $\hpi_r(x)$. We similarly define an estimator $\hmu_r$ for each $r \in \N$.

We are now ready to define our empirical approximation $\hZ$ to $Z$ in \eqref{eq:Zdef}. For fixed $r \in \N$, define $\hpi_r$ and $\hmu_r$ as in \eqref{eq:hpir}, which we have query access to. We then let
\begin{equation}\label{eq:hzdef}
\hZ(x) \defeq \frac{|\hpi_r(x) - \hmu_r(x)|}{\hpi_r(x) + \hmu_r(x)}
\end{equation} 
for any $x$ where $\pi(x) + \mu(x) > 0$. If we observe $\hpi_r(x) + \hmu_r(x) = 0$, we set the corresponding $\hZ(x) = 0$. In Lemma~\ref{lem:contrast-stability} and~\ref{lem:blackbox-l2-contrast}, we reduce bounding the squared error between $\hZ$ and $Z$ to understanding the $\chi^2$ divergences between $(\hpi_r, \pi)$ and $(\hmu_r, \mu)$.
\begin{lemma}
\label{lem:contrast-stability}
For $u,v\ge 0$, let $\phi(u, v) := \tfrac{|u - v|}{u+v}$.
Then for all $a,b,c,d\ge 0$ with $a+b>0$,
\[
    |\phi(c,d)-\phi(a,b)|
    \le
    4\Par{\frac{|c-a|+|d-b|}{a+b}}.
\]
\end{lemma}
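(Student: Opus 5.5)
We split on the size of the denominator $c+d$ relative to $a+b$, using that $\phi$ takes values in $[0,1]$ and is $0$-homogeneous. Write $S \defeq a+b>0$, $T \defeq c+d$, and $\Delta \defeq |c-a|+|d-b|$, so that $|T-S|\le \Delta$. The target is $|\phi(c,d)-\phi(a,b)|\le 4\Delta/S$.

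\emph{Case $\Delta \ge S/4$.} Both $\phi(c,d)$ and $\phi(a,b)$ lie in $[0,1]$ (the case $c+d=0$ gives $0$, following the convention in \eqref{eq:hzdef}). Hence the difference is at most $1\le 4\Delta/S$, and this case is trivial. It also covers the degenerate situation $T=0$, since then $\Delta\ge S$.

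\emph{Case $\Delta < S/4$.} Here $T\ge S-\Delta>\tfrac34 S>0$, so both ratios are well defined. We use the decomposition
\[
\phi(c,d)-\phi(a,b) = \frac{|c-d|-|a-b|}{T} + |a-b|\left(\frac1T-\frac1S\right).
\]
For the first term, the reverse triangle inequality gives $\big||c-d|-|a-b|\big|\le |c-a|+|d-b|=\Delta$, so the term is at most $\Delta/T\le \tfrac43\Delta/S$. For the second term, we use $|a-b|\le S$ together with $|1/T-1/S|=|S-T|/(ST)\le \Delta/(ST)$, which bounds it by $\Delta/T\le\tfrac43\Delta/S$. Adding the two gives $\tfrac83\Delta/S\le 4\Delta/S$.

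There is no real obstacle. The only care needed is the denominator $T$ possibly being small or zero, and the case split handles it: whenever $T$ could be small, $\Delta$ is already a constant fraction of $S$, so the trivial bound $|\phi|\le 1$ suffices. The threshold $S/4$ is chosen precisely so that both cases fit under the constant $4$.
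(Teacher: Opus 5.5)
Your proof is correct and follows essentially the same route as the paper. Both split into cases, use the trivial bound $|\phi|\le 1$ when $c+d$ may be small, and otherwise use the same add-and-subtract decomposition with each term bounded by $\Delta/(c+d)$; the only cosmetic difference is that you threshold on $\Delta<S/4$ (giving $T>\tfrac34 S$ and constant $\tfrac83$), whereas the paper thresholds directly on $t\ge s/2$.
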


\begin{proof}
Let $s:=a+b$, $t:=c+d$, and $\Delta:=|c-a|+|d-b|$.  If $t\ge \frac s 2$, then
\[
\begin{aligned}
    |\phi(c,d)-\phi(a,b)|
    &\le
    \left|
        \frac{c-d}{t}-\frac{a-b}{s}
    \right| \\
    &\le
    \frac{|(c-d)-(a-b)|}{t}
    +
    |a-b|\frac{|t-s|}{st}        \\
    &\le
    \frac{\Delta}{t}+\frac{\Delta}{t}
    \le
    \frac{4\Delta}{s}.
\end{aligned}
\]
We conclude the proof by noting that when $t< \frac s 2$, $\Delta>\frac s 2$, while
$|\phi(c,d)-\phi(a,b)|\le 1$. 
\end{proof}

\begin{lemma}
\label{lem:blackbox-l2-contrast}
Let $\Osketch^\pi$, $\Osketch^\mu$ respectively be noisy prefix distribution oracles with relative variance $\sigma^2$ for $\pi, \mu \in \calP(\Sigma^n)$. There is a universal constant $C>0$ such that, for every $r \ge Cn\sigma^2$, defining $M$, $Z$ as in \eqref{eq:Zdef} and $\hZ$ as in \eqref{eq:hzdef},
\[
    \mathbb E_{X \sim M}\left[
        \left(\hZ(X)-Z(X)\right)^2
    \right]
    \le
    \frac{Cn\sigma^2}{r}.
\]
\end{lemma}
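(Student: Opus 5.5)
Combine Lemma~\ref{lem:contrast-stability} with Lemma~\ref{lem:empirical-tree-chi2}. Fix $x$ with $\pi(x)+\mu(x)>0$ and apply Lemma~\ref{lem:contrast-stability} with $(a,b,c,d)=(\pi(x),\mu(x),\hpi_r(x),\hmu_r(x))$. We may use the bound $\min(1,\cdot)$, since $\phi\in[0,1]$. The convention $\hZ(x)=0$ when $\hpi_r(x)+\hmu_r(x)=0$ is covered: in that case $|\hZ-Z|\le 1$, and the lemma's proof treats $t<s/2$ the same way. This gives
\[
\Par{\hZ(x)-Z(x)}^2 \le 16\,\frac{\Par{|\hpi_r(x)-\pi(x)|+|\hmu_r(x)-\mu(x)|}^2}{(\pi(x)+\mu(x))^2}
\le 32\,\frac{(\hpi_r(x)-\pi(x))^2+(\hmu_r(x)-\mu(x))^2}{(\pi(x)+\mu(x))^2}.
\]

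Next, take the expectation over $X\sim M$, so that each $x$ is weighted by $\frac{\pi(x)+\mu(x)}{2}$. The weighted $\pi$-term becomes
\[
16\sum_x \frac{(\hpi_r(x)-\pi(x))^2}{\pi(x)+\mu(x)} \le 16\sum_{x:\pi(x)>0}\frac{(\hpi_r(x)-\pi(x))^2}{\pi(x)} = 16\,\chi^2(\hpi_r\|\pi).
\]
Here I use $\pi(x)+\mu(x)\ge\pi(x)$. I also use that $\hpi_r(x)=0$ whenever $\pi(x)=0$, because $\hpi_r\ll\pi$ almost surely, as noted in the proof of Lemma~\ref{lem:empirical-tree-chi2}. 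Hence points with $\pi(x)=0$ contribute nothing. The $\mu$-term is handled symmetrically and gives $16\,\chi^2(\hmu_r\|\mu)$.

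Finally, take expectation over the oracle randomness; Fubini applies since everything is nonnegative. Lemma~\ref{lem:empirical-tree-chi2} applied to each of $\pi$ and $\mu$ gives $\E[\chi^2(\hpi_r\|\pi)]+\E[\chi^2(\hmu_r\|\mu)]\le 2Cn\sigma^2/r$ whenever $r\ge Cn\sigma^2$. Enlarging the universal constant then yields the claim.

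The step needing the most care is the $\chi^2$ step. One has to note that $\chi^2(\hpi_r\|\pi)$ is a genuine sum over $\supp(\pi)$, even though $\hpi_r$ is random and not necessarily normalized. Lemma~\ref{lem:empirical-tree-chi2} bounds exactly $\E_{X\sim\pi}[(\hpi_r/\pi)^2]-1$. Since $\E[\hpi_r(x)]=\pi(x)$ by independence across prefixes and unbiasedness, that quantity equals $\E\sum_x(\hpi_r(x)-\pi(x))^2/\pi(x)$. So the expected-squared-deviation form I use matches the lemma's quantity exactly.
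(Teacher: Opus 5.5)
Your proof is correct and follows essentially the same route as the paper. You apply Lemma~\ref{lem:contrast-stability} pointwise, square using $(a+b)^2\le 2a^2+2b^2$, weight by $M=\frac{\pi+\mu}{2}$ and bound $\pi(x)+\mu(x)$ below by $\pi(x)$ (resp.\ $\mu(x)$) to reach $16\chi^2(\widehat\pi_r\|\pi)+16\chi^2(\widehat\mu_r\|\mu)$, then invoke Lemma~\ref{lem:empirical-tree-chi2}. Your extra care about the $\widehat Z=0$ convention, absolute continuity, and identifying the lemma's second-moment quantity with the squared-deviation sum fills in details the paper leaves implicit; note that $\widehat\pi_r$ is in fact normalized, since each oracle output lies in $\calP(\Sigma)$, though this does not affect your argument.
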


\begin{proof}
Observe that for $X \sim M$, we must have $\pi(X)+\mu(X)>0$ by definition.  Applying Lemma~\ref{lem:contrast-stability} with
\(
    (a,b,c,d)
    \gets 
    (\pi(X),\mu(X),\widehat\pi_{r}(X),\widehat\mu_{r}(X))
\)
shows that
\[
    |\hZ(X)-Z(X)|
    \le
    4\Par{\frac{
        |\widehat\pi_{r}(X)-\pi(X)|
        +
        |\widehat\mu_{r}(X)-\mu(X)|
    }{\pi(X) + \mu(X)}}.
\]
Therefore, using $(a + b)^2 \le 2a^2 + 2b^2$ and $M = \frac{\pi + \mu}{2}$,
\[
\begin{aligned}
    \mathbb E_{X\sim M}
    \left[
        (\hZ(X)-Z(X))^2
    \right]
    &\le
    16\sum_{x\in\Sigma^n}
    \frac{
        (\widehat\pi_{r}(x)-\pi(x))^2
        +
        (\widehat\mu_{r}(x)-\mu(x))^2
    }{\pi(x)+\mu(x)} \\
    &\le
    16\chi^2(\widehat\pi_{r}\|\pi)
    +
    16\chi^2(\widehat\mu_{r}\|\mu).
\end{aligned}
\]
Taking expectations over the randomness of $\hpi_r$, $\hmu_r$ and applying Lemma~\ref{lem:empirical-tree-chi2} proves the claim.
\end{proof}

Lemma~\ref{lem:blackbox-l2-contrast} already implies a nontrivial algorithm for solving Problem~\ref{prob:tv-est}. In particular, taking $r = O(\frac{n\sigma^2}{\eps^2})$ in Lemma~\ref{lem:blackbox-l2-contrast} implies that $\hZ$ has bias $\eps$ with respect to $Z$, and each evaluation of $\hZ$ requires $O(nr)$ queries. Combining this calculation with the $O(\frac 1 {\eps^2})$ evaluations of $\hZ$ required to estimate its expectation to $\eps$ additive error gives a total query complexity of
\[O\Par{\frac{n^2\sigma^2}{\eps^2}} \cdot O\Par{\frac 1 {\eps^2}} = O\Par{\frac{n^2\sigma^2}{\eps^4}}.\]
This bound already improves upon the $O(\frac{n^3}{\eps^5})$ queries required by the prior work of \cite{meel2025distance} in the regime $m = |\Sigma| = O(1)$. In Section~\ref{ssec:var_reduce_blackbox}, we show how to use the squared error bound in Lemma~\ref{lem:blackbox-l2-contrast} in a black-box way within a variance reduction scheme to improve this complexity bound.

\begin{remark}[Asymmetric noise]\label{rem:noise}
Lemma~\ref{lem:blackbox-l2-contrast} generalizes straightforwardly to settings where there is asymmetry in the relative variance bounds for $\Onoisy^\pi$, $\Onoisy^\mu$. In particular, if these bounds are respectively $\sigma_\pi^2$, $\sigma_\mu^2$, and we instead use $r_\pi \in \N$ oracle queries at each prefix to define $\hpi_{r_\pi}$ (similarly, $r_\mu \in \N$ queries to define $\hmu_{r_\mu}$), it is simple to extend Lemma~\ref{lem:blackbox-l2-contrast} to yield an expected squared error of
\[\frac{Cn\sigma_\mu^2}{r_\mu} + \frac{Cn\sigma_\pi^2}{r_\pi}.\]
This asymmetry could be useful in saving samples when, e.g., we have exact query access to one of the autoregressive models $(\pi, \mu)$, but we have noisy or sample access to the other. To ease notational burden, we assume $\sigma_\mu = \sig_\pi$ for simplicity henceforth.
\end{remark}

\subsection{Variance reduction through multilevel Monte Carlo}
\label{ssec:var_reduce_blackbox}
In this subsection, we improve upon a na\"ive application of Lemma~\ref{lem:blackbox-l2-contrast} by estimating a telescoping sequence of increasingly-accurate approximations in a \emph{multilevel Monte Carlo} (MLMC) variance reduction scheme. The analysis of our variance-reduced scheme uses Lemma~\ref{lem:blackbox-l2-contrast} in a black-box way.

Fix an integer $L = O(\log \frac 1 \eps)$ in the following discussion, and for all levels $\ell = 0, 1, \ldots, L$, define
\begin{equation}
    \label{eq:def_r_l}
    r_\ell
    :=
    1 + \left\lceil cn\sig^2\,2^\ell\right\rceil,
\end{equation}
where $c$ is a large constant and $\sigma$ parameterizes the relative variance of $\Onoisy^\pi$, $\Onoisy^\mu$. 
The key to our variance reduction is to couple the external draw $X \sim M$ in adjacent levels, so that the correction from level $\ell-1$ to level $\ell$ has variance proportional to its approximation error. 

More concretely, for all $\ell = 0, 1, \ldots, L$, let $Z_\ell$ denote a random independent copy of the estimator $\hZ$ in \eqref{eq:hzdef}, with $r \gets r_\ell$ as set in \eqref{eq:def_r_l}. Also, define
\begin{equation*}
    \theta_\ell:=\mathbb E[Z_\ell(X)],
    \qquad
    \theta:=\TV{\pi,\mu} = \E_{X \sim M}\Brack{Z(X)},
\end{equation*}
where whenever we take expectations over $Z_\ell$, we also include the randomness from the noisy queries to $\Onoisy$.
By Lemma~\ref{lem:blackbox-l2-contrast}, choosing a large enough $c$ yields
\[
    \mathbb E\left[(Z_\ell(X)-Z(X))^2\right]
    \le
    C2^{-\ell}.\]
Therefore, by the Cauchy-Schwarz inequality,
\begin{equation}
\label{eq:level-bias-bound}
    |\theta_\ell-\theta|
    =
    \left|\mathbb E[Z_\ell(X)-Z(X)]\right|
    \le
    \sqrt{\mathbb E[(Z_\ell(X)-Z(X))^2]}
    \le
    C2^{-\frac \ell 2},
\end{equation}
so that if $L = O(\log \frac 1 \eps)$ is large enough, $|\theta_L - \theta| \le \frac \eps 2$. Our goal is thus to estimate $\theta_L$ to $\frac \eps 2$ additive error. To do so, we write it as a telescoping sum:
\begin{equation}\label{eq:telescope}
    \theta_L =\theta_0+\sum_{\ell=1}^L(\theta_\ell-\theta_{\ell-1}).
\end{equation}
Our goal is now to design estimators for each element $\theta_\ell - \theta_{\ell - 1}$ within the sum \eqref{eq:telescope}. To do so, we couple the outer draw $X \sim M$ used to define these estimators. In particular, define $Z_{-1}(\cdot) \equiv 0$ for simplicity, and for each $\ell = 0, 1, \ldots, L$, define an estimator
\begin{equation}\label{eq:yl_def}Y_\ell \defeq Z_\ell(X) - Z_{\ell - 1}(X),\text{ where } X \sim M. \end{equation}
It is clear that this construction yields
\[\E[Y_0] = \theta_0,\qquad \E[Y_\ell] = \theta_\ell - \theta_{\ell - 1} \text{ for } \ell = 0, 1, \ldots, L.\]
We now bound the variance of each estimator $Y_\ell$.
\begin{lemma}
\label{lem:blackbox-level}
For every $\ell = 0, \ldots, L$, we have $\Var[Y_\ell] \le \min(4, C2^{-\ell})$ for a universal constant $C > 0$. Moreover, we can obtain an i.i.d.\ copy of $Y_\ell$ using $O(n + n^2 \sig^2 2^\ell)$ queries to $\Onoisy^\pi$ and $\Onoisy^\mu$.

\end{lemma}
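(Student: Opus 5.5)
The plan has two parts: the variance bound and the query count.

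\textbf{Variance bound.} First, $Z_\ell(X)$ and $Z_{\ell-1}(X)$ both lie in $[0,1]$, since $\phi(u,v)=\frac{|u-v|}{u+v}\in[0,1]$ and we set $\hZ=0$ in the degenerate case. Hence $|Y_\ell|\le 1$. This already gives $\Var[Y_\ell]\le \E[Y_\ell^2]\le 1\le 4$, which settles the constant branch of the minimum.

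For the decaying branch, take $\ell\ge 1$. The coupling in \eqref{eq:yl_def} uses the same $X\sim M$ for both levels. Insert and subtract $Z(X)$ and use $(a+b)^2\le 2a^2+2b^2$:
\[
\Var[Y_\ell]\le \E[Y_\ell^2]\le 2\,\E\bigl[(Z_\ell(X)-Z(X))^2\bigr]+2\,\E\bigl[(Z_{\ell-1}(X)-Z(X))^2\bigr].
\]
Lemma~\ref{lem:blackbox-l2-contrast} applies at both levels because $r_{\ell-1}\ge cn\sig^2 2^{\ell-1}\ge Cn\sig^2$ once $c$ is large. It bounds each term by $\frac{Cn\sig^2}{r_{\ell-1}}\le \frac{C}{c}2^{-(\ell-1)}$, so the sum is $O(2^{-\ell})$.

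At $\ell=0$ we have $Y_0=Z_0(X)\in[0,1]$, so $\Var[Y_0]\le 1\le C2^{0}$ for any $C\ge 1$. Combining the two branches gives $\min(4,C2^{-\ell})$ after enlarging $C$.

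There is one subtle point. Lemma~\ref{lem:blackbox-l2-contrast} is stated for a single $\hZ$, and we use it at each level separately. This needs no independence between $Z_\ell$ and $Z_{\ell-1}$, because the triangle-type bound handles each level on its own. The only thing that matters is that both estimators are evaluated at the same $X$.

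\textbf{Query count.} To produce one copy of $Y_\ell$, draw $X\sim M$: flip a fair coin, then sample a full trajectory from $\pi$ or from $\mu$. Each step is sampled by querying $\Onoisy$ at the current prefix and drawing the next token from the returned vector. This is a valid sample because $\E[\Onoisy(s)]=\pi_s$, so the marginal law of the drawn token is exactly $\pi_s$. This step costs $n$ queries.

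Next, evaluate $\hpi_{r}(X)$ and $\hmu_{r}(X)$ for $r\in\{r_\ell,r_{\ell-1}\}$. Each evaluation costs $nr$ queries, using fresh queries along the $n$ prefixes of $X$. The total is
\[
O\bigl(n(r_\ell+r_{\ell-1})\bigr)=O\bigl(n(1+n\sig^2 2^\ell)\bigr)=O(n+n^2\sig^22^\ell),
\]
using $r_\ell\le 2+cn\sig^2 2^\ell$. Since every copy uses fresh oracle randomness and a fresh $X$, the copies are i.i.d.

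\textbf{Main obstacle.} The main point to watch is that Lemma~\ref{lem:blackbox-l2-contrast} remains valid when $\hZ$ is evaluated at a random $X$ whose sampling used oracle outputs. To avoid any dependence between the sample of $X$ and the estimators, I would generate $X$ with separate queries, independent of those used to build $\hpi_r$ and $\hmu_r$. Then the expectation over the estimator randomness, conditioned on $X$, is exactly what that lemma controls.
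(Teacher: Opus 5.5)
Your proof is correct and follows the paper's argument. Boundedness of $Z_\ell(X)\in[0,1]$ gives the constant branch. For $\ell\ge 1$ you insert $Z(X)$ at the shared sample $X$, apply $(a+b)^2\le 2a^2+2b^2$, and invoke Lemma~\ref{lem:blackbox-l2-contrast} at levels $\ell$ and $\ell-1$. The query count is $n + O\bigl(n(r_\ell+r_{\ell-1})\bigr) = O(n+n^2\sig^2 2^\ell)$. You also handle $\ell=0$ explicitly and draw $X$ with queries independent of those used to build $\hpi_r,\hmu_r$; the paper leaves both points implicit, but they do not change the route.
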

\begin{proof}
The bound $\Var[Y_\ell] \le 4$ is immediate because each $Z_\ell(X)$ lies in $[0, 1]$ by definition, so each $Y_\ell$ lies in $[-1, 1]$. For the remaining bound, letting $\ell \ge 1$,
\begin{equation}\label{eq:cancel_z}
    Y_\ell
    =
    Z_\ell(X)-Z_{\ell-1}(X)
    =
    (Z_\ell(X)-Z(X))+(Z(X)-Z_{\ell-1}(X)),
\end{equation}
where $X \sim M$. Thus, applying $(a + b)^2 \le 2a^2 + 2b^2$,
\[
\begin{aligned}
    \Var[Y_\ell]
    &\le
    \mathbb E[Y_\ell^2] \le 
    2\mathbb E[(Z_\ell(X)-Z(X))^2]
    +
    2\mathbb E[(Z_{\ell-1}(X)-Z(X))^2] \le
    C2^{-\ell},
\end{aligned}
\]
where the last step follows from adjusting the constant $C$ in Lemma~\ref{lem:blackbox-l2-contrast}. The query complexity is immediate from the definition of $\hZ$ and our sample bound $r_\ell$ in \eqref{eq:def_r_l}.
\end{proof}
The proof of Lemma~\ref{lem:blackbox-level} reveals why it is important that the same $X \sim M$ is used to define $Z_\ell$ and $Z_{\ell - 1}$ in the coupled increment $Y_\ell$. Indeed, if different samples were used, then the identity \eqref{eq:cancel_z} would not hold and we would not inherit the separate squared error bounds from Lemma~\ref{lem:blackbox-l2-contrast}.

Lemma~\ref{lem:blackbox-level} shows that the coupled increments $Y_\ell$ have
geometrically-decaying variance across levels. This suggests a non-uniform
sampling strategy, using more i.i.d.\ draws of $Y_\ell$ on coarse levels (small $\ell$),
where the variance is larger, and fewer samples on fine levels (large $\ell$), where each sample is more expensive. By balancing the
per-level variance contribution against the per-level sampling cost, we obtain
an MLMC estimator that reduces the total
query complexity. 

We now explain how to allocate samples across levels.  Let
\begin{equation}
\label{eq:def_N_l}
    N_\ell
    :=
    \left\lceil
        A \cdot \frac{L+1}{\eps^2} \cdot 2^{-\ell}
    \right\rceil,
    \qquad
    \ell=0,1,\ldots,L,
\end{equation}
where $A>0$ is a universal constant.  Let
$Y_{\ell,1},\ldots,Y_{\ell,N_\ell}$ be independent copies of $Y_\ell$, and
define 
\begin{equation}
\label{eq:def-mlmc-estimator}
     T
    :=
    \frac1{N_0}\sum_{j=1}^{N_0}Y_{0,j}
    +
    \sum_{\ell=1}^L
    \frac1{N_\ell}\sum_{j=1}^{N_\ell}Y_{\ell,j}.
\end{equation}
Our final estimator simply takes the median of $\log(\frac 1 \delta)$ copies of $T$ defined in \eqref{eq:def-mlmc-estimator}. We provide pseudocode for the overall method in Algorithms~\ref{alg:coupled-increment} and~\ref{alg:blackbox-tv}, and analyze its correctness in Theorem~\ref{thm:conditional-sketch-mlmc}.

\begin{algorithm}[h]
\caption{$\csketch(\Osketch^\pi,\Osketch^\mu,\ell, \sigma,n)$}
\label{alg:coupled-increment}
\begin{algorithmic}[1]
    \State $X \sim M = \half(\pi + \mu)$ \Comment{Uses $n$ queries to $\Osketch^\nu$ for $\nu \sim_{\text{unif.}} \{\mu, \pi\}$, as $\Osketch^\nu$ simulates $\Osamp^\nu$}
    \State $Z_\ell(X) \gets $ copy of $\hZ$ in \eqref{eq:hzdef} with $r \gets r_\ell$ in \eqref{eq:def_r_l} \Comment{Uses $nr_\ell$ queries to $(\Osketch^\pi, \Osketch^\mu)$}
    \If{$\ell = 0$}
    \State $Z_{\ell - 1}(X) \gets 0$
    \Else
    \State $Z_{\ell - 1}(X) \gets $ copy of $\hZ$ in \eqref{eq:hzdef} with $r \gets r_{\ell - 1}$ in \eqref{eq:def_r_l} \Comment{Uses $nr_{\ell - 1}$ queries to $(\Osketch^\pi, \Osketch^\mu)$}
    \EndIf
    \State \Return $Z_\ell(X)-Z_{\ell-1}(X)$
\end{algorithmic}
\end{algorithm}

\begin{algorithm}[h]
\caption{$\tvnoise(\Osketch^\pi,\Osketch^\mu,\eps,\delta,\sig,n)$}
\label{alg:blackbox-tv}
\begin{algorithmic}[1]
    \State $R\gets \lceil 10\log \frac 1 \delta\rceil$
    \State $L\gets \left\lceil C+2\log_2 \frac 1 \eps \right\rceil$
    \For{$s=1,\ldots,R$}
        \State $T_s\gets 0$ \Comment{Running sum for $s^{\text{th}}$ evaluation of $T$ in \eqref{eq:def-mlmc-estimator}}
        \For{$\ell=0,\ldots,L$}
            \State $S_\ell\gets 0$
            \For{$j=1,\ldots,N_\ell$}
                \State $Y_{\ell,j}\gets\csketch(\Onoisy^\pi,\Onoisy^\mu,\ell, \sigma,n)$
                \State $S_\ell\gets S_\ell+Y_{\ell,j}$
            \EndFor
            \State $T_s\gets T_s+\frac{S_\ell}{N_\ell}$ where $N_\ell$ is as in \eqref{eq:def_N_l}
        \EndFor
    \EndFor
    \State \Return $\mathrm{median}\{T_1,\ldots,T_R\}$
\end{algorithmic}
\end{algorithm}

We are now ready to prove Theorem~\ref{thm:conditional-sketch-mlmc-intro}, stated slightly more formally here:

\begin{theorem}
\label{thm:conditional-sketch-mlmc}
If $(\Osketch^\pi, \Osketch^\mu)$ are respectively noisy prefix distribution oracles with relative variance $\sigma^2$ for $(\pi, \mu)$,
Algorithm~\ref{alg:blackbox-tv} solves Problem~\ref{prob:tv-est} with $O(\frac{n + n^2\sig^2}{\eps^2}\log^2(\frac 1 \eps)\log(\frac 1 \delta))$
queries to $(\Osketch^\pi, \Osketch^\mu)$.
\end{theorem}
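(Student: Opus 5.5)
We prove this by a standard MLMC analysis of one copy of $T$ from \eqref{eq:def-mlmc-estimator}, followed by median amplification. The argument has three parts: bias, variance, and query cost.

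\textbf{Bias.} Independence of the copies and linearity give
\[
\E[T]=\E[Y_0]+\sum_{\ell=1}^L \E[Y_\ell]=\theta_L .
\]
By \eqref{eq:level-bias-bound}, $|\theta_L-\theta|\le C2^{-L/2}$. The choice $L=\lceil C+2\log_2\frac1\eps\rceil$ in Algorithm~\ref{alg:blackbox-tv}, with constants adjusted, makes this at most $\frac\eps2$.

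\textbf{Variance.} The level-wise averages are independent, and Lemma~\ref{lem:blackbox-level} gives $\Var[Y_\ell]\le C2^{-\ell}$ (for $\ell=0$ use the bound $4$). Therefore
\[
\Var[T]=\sum_{\ell=0}^L \frac{\Var[Y_\ell]}{N_\ell}\le \sum_{\ell=0}^L \frac{C2^{-\ell}\eps^2}{A(L+1)2^{-\ell}}=\frac{C\eps^2}{A}.
\]
Taking $A$ large makes this at most $\frac{\eps^2}{40}$. Chebyshev then gives $\Pr[|T-\theta_L|>\frac\eps2]\le\frac1{10}$, so each $T_s$ lies within $\eps$ of $\theta$ with probability at least $\frac9{10}$. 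With $R=\lceil 10\log\frac1\delta\rceil$ independent copies, a Chernoff bound on the number of failed copies shows the median fails with probability at most $\delta$, since a failure requires at least half the copies to fail.

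\textbf{Query cost.} By Lemma~\ref{lem:blackbox-level}, one draw of $Y_\ell$ costs $O(n+n^2\sig^2 2^\ell)$ queries, which includes the $n$ queries to sample $X\sim M$ and the $nr_\ell+nr_{\ell-1}$ queries for the two $\hZ$ evaluations. One copy of $T$ therefore costs
\[
\sum_{\ell=0}^L N_\ell\cdot O(n+n^2\sig^22^\ell)\le \sum_{\ell=0}^L\Par{\frac{A(L+1)}{\eps^2}2^{-\ell}+1}O(n+n^2\sig^2 2^\ell).
\]
\begin{itemize}
\item The $n$-terms sum geometrically to $O(\frac{n(L+1)}{\eps^2})$.
\item The $n^2\sig^2$-terms give $O(\frac{(L+1)^2 n^2\sig^2}{\eps^2})$, since each level contributes the same amount.
\item The extra "$+1$" from the ceiling contributes $\sum_\ell O(n+n^2\sig^22^\ell)=O(nL+n^2\sig^2 2^L)=O(nL+\frac{n^2\sig^2}{\eps^2})$, which is dominated by the previous terms.
\end{itemize}
With $L=O(\log\frac1\eps)$ and the factor $R$, the total is $O(\frac{n+n^2\sig^2}{\eps^2}\log^2\frac1\eps\log\frac1\delta)$.

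\textbf{Main obstacle.} The only delicate point is the applicability of Lemma~\ref{lem:blackbox-l2-contrast}, which requires $r\ge Cn\sig^2$. The choice \eqref{eq:def_r_l} with $c$ large guarantees this, including at $\ell=0$. The case $\sig=0$ is handled by the "$1+$" in $r_\ell$: there $r_\ell=1$ for every $\ell$, the estimator is exact, and all $Y_\ell$ with $\ell\ge1$ vanish. Each cost bound above is therefore valid uniformly in $\sig$.
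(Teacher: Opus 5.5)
Your proposal is correct and follows the paper's proof essentially step for step: telescoping gives $\E[T]=\theta_L$ with the bias controlled by \eqref{eq:level-bias-bound}, and Lemma~\ref{lem:blackbox-level} balanced against $N_\ell$ plus Chebyshev gives the variance bound. The rest is median amplification, followed by summing $N_\ell\cdot O(n+n^2\sig^2 2^\ell)$ using $2^L=O(1/\eps^2)$. Your bookkeeping is slightly sharper than the paper's: the $n$-term picks up only one $\log\frac1\eps$ factor, and targeting per-copy failure $\frac1{10}$ makes the stated $R=\lceil 10\log\frac1\delta\rceil$ suffice with explicit constants in the median step.
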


\begin{proof}
We follow the notation in Algorithm~\ref{alg:blackbox-tv}. Observe that for any $s = 1, \ldots, R$, we have $\E[T_s] = \theta_L$ by using the calculation \eqref{eq:telescope}. Thus, for a large enough constant in the choice of $L$, \eqref{eq:level-bias-bound} implies $|\E[T_s] - \theta| \le \frac \eps 2$. Moreover, by independence across calls to $\csketch$, we have by applying the variance bound in Lemma~\ref{lem:blackbox-level} that
\[
\begin{aligned}
    \Var[T_s]
    &=
    \sum_{\ell=0}^L
    \frac{\Var[Y_\ell]}{N_\ell} \le
    \frac4 {N_0}
    +
    \sum_{\ell=1}^L
    \frac{C2^{-\ell}}{N_\ell}.
\end{aligned}
\]
Using $N_\ell$ as defined in~\eqref{eq:def_N_l}, each summand is at most $O(\frac{\eps^2}{A(L+1)})$. Hence, by choosing $A$ sufficiently large, we obtain that $\Var[T_s]\le \frac{\eps^2}{12}$. Chebyshev's inequality then gives the single-estimate bound
\[
\begin{aligned}
    \Pr\left[
        |T_s-\TV{\pi,\mu}|>\eps
    \right]
    &\le
    \Pr\left[
        |T_s-\theta_L|>\frac{\eps}{2}
    \right] \le
    \frac{4\Var[T_s]}{\eps^2}
    \le
    \frac13.
\end{aligned}
\]
Finally, a standard application of Hoeffding's inequality shows that the median of $R$ independent copies of an estimator $T$ obeying the above bound lies within $\eps$ of $\theta = \TV{\pi, \mu}$ with probability at least $1 - \delta$. The query complexity follows from Lemma~\ref{lem:blackbox-level} with our definition \eqref{eq:def_N_l}:
\begin{align*}
    \sum_{\ell=0}^L
    RN_\ell\cdot O(n + n^2\sig^2 2^\ell)
    =
    O\left(
        \frac{(n + n^2\sig^2)(L+1)^2\log \frac{1}{\delta}}{\eps^2}
    \right)
    +
    O\paren{(n+n^2\sig^2)2^L\log\frac
    1\delta},
\end{align*}
where we conclude by using that $2^L = O(\frac 1 {\eps^2})$ by our choice of $L$.
\end{proof}

As a corollary of Theorem~\ref{thm:conditional-sketch-mlmc} and the oracle reduction in Lemma~\ref{lem:oracle_reduction}, we obtain a query complexity bound for Problem~\ref{prob:tv-est} under prefix sampling oracle access.

\begin{corollary}
\label{cor:blackbox-mlmc}
Algorithm~\ref{alg:blackbox-tv} solves Problem~\ref{prob:tv-est} with 
\(
    O(
        \frac{n^2K}{\eps^2}
        \log^2(\frac1\eps)
        \log\frac1\delta
    )
\) queries to $(\Osamp^\pi, \Osamp^\mu)$.
\end{corollary}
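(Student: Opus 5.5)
The plan is to derive Corollary~\ref{cor:blackbox-mlmc} directly from Theorem~\ref{thm:conditional-sketch-mlmc} via the oracle reduction in Lemma~\ref{lem:oracle_reduction}. We simulate a noisy prefix distribution oracle from each prefix sampler oracle. On a query at prefix $s$, we draw $a \sim \Osamp^\pi(s)$ and return the one-hot vector $e_a \in \calP(\Sigma)$, and we do the same for $\mu$.

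\textbf{Unbiasedness.} We have $\E[e_a] = \pi_s$. \textbf{Relative variance.} The computation in Lemma~\ref{lem:oracle_reduction} gives
\[
\E\Brack{\CS{e_a}{\pi_s}} = \sum_{b \in \supp(\pi_s)} (1 - \pi_s(b)) = |\supp(\pi_s)| - 1 \le K - 1 < K.
\]
Here we use the paper's convention that $K$ bounds the support size of every next-token distribution of both $\pi$ and $\mu$; the ``$k$'' in Lemma~\ref{lem:oracle_reduction} should be read as $K$. The simulated oracles therefore satisfy Definition~\ref{def:conditional-sketch} with $\sigma^2 = K$. Independence across calls is inherited from the independence of the sampler's draws.

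Each simulated query costs exactly one sampler query. In addition, the sampling step $X \sim M$ inside Algorithm~\ref{alg:coupled-increment} only needs samples, which the sampler provides directly. Running Algorithm~\ref{alg:blackbox-tv} with $\sigma^2 = K$ and invoking Theorem~\ref{thm:conditional-sketch-mlmc} gives correctness with probability at least $1-\delta$, using
\[
O\Par{\frac{n + n^2 K}{\eps^2}\log^2\Par{\frac1\eps}\log\frac1\delta}
\]
queries. Since $K \ge 1$, this is $O\Par{\frac{n^2K}{\eps^2}\log^2(\frac1\eps)\log\frac1\delta}$.

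There is no real obstacle. The one point to check is that the algorithm's parameters, for instance $r_\ell$ in \eqref{eq:def_r_l}, need only an upper bound on $\sigma^2$, so substituting $K$ is legitimate. This holds because Lemma~\ref{lem:empirical-tree-chi2} and the results built on it are stated in terms of an upper bound $\sigma^2$ on the relative variance.
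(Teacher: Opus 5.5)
Your proposal is correct and takes exactly the paper's route: the paper obtains Corollary~\ref{cor:blackbox-mlmc} by combining the one-hot simulation of Lemma~\ref{lem:oracle_reduction} (relative variance $|\supp(\pi_s)| - 1 \le K$) with Theorem~\ref{thm:conditional-sketch-mlmc} at $\sigma^2 = K$, and then absorbs the $n$ term into $n^2K$. Your side remarks are all consistent with the paper's analysis: that the $k$ in the lemma should be read as $K$, that only an upper bound on $\sigma^2$ is needed to set $r_\ell$, and that $X \sim M$ is drawn directly from the samplers.
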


\subsection{Lower bound}
\label{ssec:lb_general}
In Section~\ref{ssec:lower_whit_box}, we proved an information-theoretic $\Omega(\frac{n}{\eps^2})$ lower bound for Problem~\ref{prob:tv-est} under prefix logit access. Since exact logit access is more informative than noisy conditional access, this immediately implies the same lower bound for noisy oracles. However, this argument does not explain whether the dependence on the noise level $\sig^2$ and the effective support size $K$ (i.e., in the case of prefix sampler oracles) in the upper bound of Section~\ref{ssec:var_reduce_blackbox} is unavoidable.

In this section, we reduce our noisy-prefix estimation problem to classical finite-domain TV estimation under i.i.d.\ sample access. The reduction is two-fold. First, we embed any marginal distributions \(P,Q \in \calP(\Omega)\) into rare-escape autoregressive distributions \(\pi_P,\mu_Q\): the process usually emits a deterministic ``stay'' token, and with probability \(\Theta(\frac 1 n)\) escapes and reveals one draw from \(P\) or \(Q\). This preserves the target up to constants, \(\TV{\pi_P,\mu_Q}=\Theta(\TV{P,Q})\). Second, we show that calibrated noisy prefix queries to \(\pi_P,\mu_Q\) can be simulated using i.i.d. samples from \(P,Q\). Since useful information appears only through the rare escape event, each prefix query carries only \(\Theta(\frac 1 n)\) effective samples. Hence a fast noisy-prefix estimator would imply a fast i.i.d.\ estimator for \(\TV{P,Q}\), contradicting a known single-marginal lower bound on distance estimation in finite domains, unless the number of prefix queries is larger by a factor of \(n\).

We first recall the classical sample complexity lower bound for TV estimation problem between two distributions on a finite domain in~\cite{jiao2018minimax}.
\begin{proposition}[Theorem 1, \cite{jiao2018minimax}]
\label{prop:dist_est_supp_k}
For every \(m\ge 2\) and \(\eta\in(0,\half)\), any algorithm that solves Problem~\ref{prob:tv-est} for arbitrary \(P,Q\in\calP(\Omega)\) with $|\Omega| = m$ under i.i.d.\ sample access with constant success probability must use \(\Omega(\frac{m}{\eps^2\log m})\) total samples. 
\end{proposition}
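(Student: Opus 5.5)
The plan is to follow the standard minimax route for nonsmooth functionals, namely the method of two fuzzy hypotheses combined with Poissonization and moment matching, as in \cite{jiao2018minimax}.

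\textbf{Poissonization and the lower-bound reduction.} First I would Poissonize: an algorithm using $N$ samples from each of $P,Q$ is simulated, with constant loss in success probability, by one that observes independent counts $A_i\sim\mathrm{Poi}(Np_i)$ and $B_i\sim\mathrm{Poi}(Nq_i)$ for $i\in[m]$. I would also allow ``approximate distributions'' whose total mass is $1\pm o(1)$ and renormalize at the end, which changes $\TV{P,Q}$ by only $o(\eps)$. By the two-fuzzy-hypotheses lemma, it then suffices to build two priors $\nu_0,\nu_1$ on vectors $(p_i,q_i)_{i\in[m]}$ satisfying three conditions: (i) under each prior the functional $\frac12\sum_i|p_i-q_i|$ concentrates around values $d_0,d_1$ with $d_1-d_0\ge 4\eps$; (ii) both priors are supported (with high probability) on near-distributions; (iii) the induced mixtures of Poisson observations are within small TV distance of each other.

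\textbf{Construction of the priors.} I would use i.i.d. coordinates. Fix $q_i=\frac1m$ deterministically and set $p_i=\frac1m(1+\lambda U_i)$, where $U_i\sim\rho_0$ or $U_i\sim\rho_1$ are distributions on $[-1,1]$ with the following properties. They match their first $L\asymp\log m$ moments, both have mean zero (which keeps the total mass near $1$), and $\E_{\rho_1}|U|-\E_{\rho_0}|U|\ge c/L$. Such a pair exists by duality with the best uniform polynomial approximation of $|x|$ on $[-1,1]$, whose error of order $1/L$ is Bernstein's classical bound. The TV gap between the hypotheses is then $\approx \frac{\lambda}{2}\cdot\frac{c}{L}$. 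Setting this equal to $4\eps$ forces $\lambda\asymp\eps L\asymp\eps\log m$, which is admissible when $\eps\lesssim1/\log m$. For larger $\eps$ the bound is weaker than the trivial $\Omega(m/\log m)$, which I would obtain separately from the $\eps=\Theta(1/\log m)$ instance by monotonicity.

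\textbf{Indistinguishability: the main obstacle.} The key step is bounding the TV distance between the per-coordinate Poisson mixtures $\E_{U\sim\rho_j}\,\mathrm{Poi}\!\left(\frac Nm(1+\lambda U)\right)$. Matching $L$ moments lets me Taylor-expand the Poisson likelihood in $U$; the squared Hellinger or $\chi^2$ distance per coordinate is bounded by roughly $\left(\frac{e\,(N/m)\lambda^2}{L}\right)^{L}$, up to the usual $\chi^2$-of-Poisson-mixture computation. For this to be $o(1/m)$, so that tensorizing over $m$ coordinates keeps the total distance small, I need $\frac{N}{m}\lambda^2\lesssim L$. This gives
\[
N\lesssim \frac{mL}{\lambda^2}\asymp\frac{m}{\eps^2\log m}.
\]
The delicate points are the following.
\begin{itemize}
\item Carrying out the mixture-distance estimate with exactly the right dependence on $\lambda^2$. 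The fluctuation scale is $\lambda\sqrt{N/m}$ relative to the Poisson standard deviation $\sqrt{N/m}$, and this is precisely where the $\eps^{-2}$ factor arises.
\item Verifying concentration (i) of $\sum_i|p_i-q_i|$ and of the total mass by Hoeffding over the $m$ independent coordinates. Both deviations are $O(\lambda/\sqrt m)$, which is $\ll\eps$.
\end{itemize}
I expect the moment-matching $\chi^2$ bound to be the main obstacle. To get the constants right I would first try the standard ``$\chi^2$ of Poisson mixtures with matching moments'' bound from Wu and Yang's entropy lower bound.
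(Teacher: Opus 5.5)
\textbf{There is a genuine gap: your construction does not cover the range $c/\log m<\eps<c_0$.}

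The paper gives no proof here; it imports Theorem~1 of \cite{jiao2018minimax}. That proof uses the same fuzzy-hypotheses, Poissonization and moment-matching scheme you outline.

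Your centered construction ($q_i=\frac1m$, $p_i=\frac1m(1+\lambda U_i)$, $L\asymp\log m$) works when $\eps\le c/\log m$, since then $\lambda\asymp\eps\log m\le 1$. The paper needs the remaining range $c/\log m<\eps<c_0$, because Theorem~\ref{thm:noisy_lb_general} invokes the proposition for every $\eps\in(0,c_0)$. Your fallback for that range fails on three counts.
\begin{enumerate}
\item In that range the target $\frac{m}{\eps^2\log m}$ lies between $\frac{m}{\log m}$ and $m\log m$. It exceeds $\frac{m}{\log m}$ by the factor $\eps^{-2}$, which can be as large as $\log^2 m$. So it is not implied by an $\Omega(\frac{m}{\log m})$ bound.
\item Monotonicity runs the other way. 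An estimator accurate to $\eps$ is also accurate to every $\eps'\ge\eps$. A lower bound at accuracy $\Theta(1/\log m)$ therefore transfers only to smaller accuracies, never to larger ones.
\item $\Omega(\frac{m}{\log m})$ at constant accuracy is not trivial; it is the Valiant--Valiant bound. Your centered family cannot produce it. With $\lambda\le 1$ the functional gap is at most of order $1/L$. At $N\asymp m/\log m$, your per-coordinate requirement $(e\lambda^2N/(mL))^{L}=o(1/m)$ still forces $L\gtrsim\log m/\log\log m$, so the gap is $o(1)$.
\end{enumerate}

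\textbf{The missing idea is a second, one-sided construction for the small-count regime $N\lesssim m\log m$.} There the expected count $N/m$ per symbol is smaller than the width $\sqrt{(N/m)\log m}$ of the window where $|p_i-\frac1m|$ cannot be resolved. So you should not perturb around $\frac1m$. Instead, let $Np_i$ range over all of $[0,c\log m]$: put priors on $p_i\in[0,w]$ with $w=c\log m/N$, matching $L\asymp\log m$ moments.

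Two facts then drive the bound.
\begin{itemize}
\item The standard bound for Poisson mixtures with matched moments and rates in $[0,c\log m]$ gives per-coordinate distance at most $(Cc\log m/L)^{L}\le m^{-2}$ for small $c$.
\item The best degree-$L$ approximation error of $|x-a|$ on $[0,w]$, with the kink $a=\frac1m$ near the endpoint, is of order $\sqrt{aw}/L$ rather than $a/L$.
\end{itemize}
This gives a per-coordinate gap of order $(mN\log m)^{-1/2}$ and a total gap of order $\sqrt{m/(N\log m)}$. Hence $N\gtrsim\frac{m}{\eps^2\log m}$ on the whole range $1/\log m\lesssim\eps<c_0$, recovering $\frac{m}{\log m}$ at constant $\eps$.

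The total-mass constraint needs more care than in your centered family. Generic extremal measures on $[0,w]$ have mean of order $w\gg\frac1m$. One workable fix is a Wu--Yang-style size-biased prior with an atom at $0$, plus a deterministic block of symbols that absorbs the leftover mass. Alternatively, simply cite \cite{jiao2018minimax}, as the paper does.
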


\paragraph{Hard case construction.} We now embed the finite-domain problem into our autoregressive TV estimation task. Suppose that $m \ge K \ge 3, n \ge 2$. Let $\Sigma = \cbra{0} \cup [m]$ and $\Omega=[K-1]$. Fix arbitrary $P,Q\in\calP(\Omega)$. We only put probability mass on the sufficient support $\tilde{\Omega} = \cbra{0} \cup \Omega$ and treat $0$ as a stay token and symbols in $\Omega$ as escape labels. Set
\[
    q:=\frac1n,
    \qquad
    \alpha_n := 1-(1-q)^n .
\]
Note that $\alpha_n=\Theta(1)$ for $n\ge 2$. We construct two autoregressive distributions $\pi_P,\mu_Q\in\calP(\Sigma^n)$.
For every alive prefix $0^i$, $0\le i\le n-1$, define
\[
    \pi_P(0\mid 0^i)=\mu_Q(0\mid 0^i)=1-q, 
    \quad     
    \pi_P(a\mid 0^i)=qP(a),
    \quad
    \mu_Q(a\mid 0^i)=qQ(a), \quad \forall a \in \Omega.
\]
In other words, at each all-zeroes prefix, with probability $1 - q$ the next sample is a $0$, and with probability $q$ it draws from the relevant finite-domain distribution $P$ or $Q$.
After the process emits an escape label $a\in\Omega$, all remaining tokens are deterministically $0$ under both $\pi_P$ and $\mu_Q$. On prefixes with zero probability under both laws, we define the two conditional distributions identically; such answers contain no information about $P$ or $Q$ and can only make the instance easier.

The only strings with nonzero probability are $0^n$ and strings of the form
\[
    0^i \oplus a \oplus 0^{n-i-1},
    \qquad
    i=0,\ldots,n-1,\quad a\in\Omega .
\]
Both $\pi_P$ and $\mu_Q$ assign probability $(1-q)^n$ to $0^n$. For $s=0^i\oplus a\oplus 0^{n-i-1}$, we have
\[
    \pi_P(s)=(1-q)^i q P(a),
    \qquad
    \mu_Q(s)=(1-q)^i q Q(a).
\]
Therefore,
\begin{align*}
    \TV{\pi_P,\mu_Q}
    &=
    \frac12\sum_{s\in\Sigma^n}\Abs{\pi_P(s)-\mu_Q(s)}
    =
    \frac12\sum_{i\in[n-1]}(1-q)^i q
    \sum_{a\in\Omega}\Abs{P(a)-Q(a)} \\
    &=
    \paren{1-(1-q)^n}\TV{P,Q}
    =
    \alpha_n\TV{P,Q}.
\end{align*}
Since  $\alpha_n = \Theta(1)$, any estimator for $\TV{\pi_P, \pi_Q}$ can be transferred into an estimator of $\TV{P, Q}$ with only a factor of constant loss. 

\paragraph{Simulating noisy prefix queries from i.i.d.\ samples.}
We then give an explicit way to simulate the noisy prefix oracles of $\pi_P, \mu_Q$ with sample oracles of $P, Q$. Set \(M:=\left\lceil \frac{K-1}{\sig^2}\right\rceil \).  At an alive prefix, letting $e_\omega$ be the point mass on state $\omega \in \Sigma$, the true next-token distributions are
\[
    r_P := (1-q)e_0 + qP,
    \qquad
    r_Q := (1-q)e_0 + qQ .
\]
A noisy query to \(\pi_P\) returns the empirical distribution
\[
    Y^\pi := \frac1M\sum_{j\in[M]} e_{A_j},
    \qquad
    A_j\simiid r_P.
\]
Equivalently, each \(A_j\) can be generated by first flipping a \(\mathrm{Bern}(q)\) coin: if it is \(0\), output the stay-alive symbol \(0\); otherwise draw one sample from \(P\). Thus one alive query to \(\pi_P\) uses at most \(M\) i.i.d.\ samples from \(P\). Similarly, a noisy query to \(\mu_Q\) returns
\[
    Y^\mu := \frac1M\sum_{j\in[M]} e_{B_j},
    \qquad
    B_j\simiid r_Q,
\]
which can be simulated using at most \(M\) i.i.d.\ samples from \(Q\). After escape, the conditional distribution is deterministic, and the oracle simply returns the corresponding point mass.

The oracle is unbiased since \(\bbE [Y^\pi] = r_P, \bbE [Y^\mu] = r_Q \). Moreover, for the relative variance, we have 
\[
    \bbE\chi^2(Y^\pi\|r_P)
    =
    \sum_{a:r_P(a)>0}\frac{\Var(Y^\pi(a))}{r_P(a)}
    =
    \frac1M\sum_{a:r_P(a)>0}(1-r_P(a))
    =
    \frac{|\supp(r_P)|-1}{M}
    \le
    \frac{K-1}{M}
    \le
    \sig^2 .
\]
A similar argument holds for $(Y^\mu, r_Q)$. Therefore the above simulation defines a valid noisy prefix distribution oracle with relative variance at most \(\sig^2\).

\begin{theorem} 
\label{thm:noisy_lb_general} 
Fix $m \ge K\ge 3$, $n\ge \Omega(\log \tfrac{1}{\eps})$, and $\eps\in(0,c_0)$. Any algorithm that solves Problem~\ref{prob:tv-est} with constant success probability under noisy prefix distribution oracle access with relative variance at most $\sig^2$, requires \(\Omega(n\cdot \frac{1 \lor \sig^2 }{\eps^2 \log K})\) oracle queries.
\end{theorem}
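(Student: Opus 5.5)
The bound has two parts. The $\Omega(n/\eps^2)$ part is already proved: by Theorem~\ref{thm:prefix-conditional-lower-bound} and Remark~\ref{rem:lower-bound-weaker-oracles}, exact logit access is at least as informative as any noisy oracle. Since $\frac{1\lor\sig^2}{\log K}\le 1\lor\sig^2$, this settles the regime $\sig^2\le 1$ up to the $\log K$ factor. It remains to show the $\Omega(\frac{n\sig^2}{\eps^2\log K})$ term when $\sig^2\ge 1$ (we may assume $\sig^2\le K-1$, since larger $\sig^2$ only weakens the oracle). For this I would use the rare-escape construction above together with the simulation by i.i.d.\ samples, and reduce to Proposition~\ref{prop:dist_est_supp_k} on $\Omega=[K-1]$.

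\textbf{Reduction.} Suppose an algorithm $\calA$ estimates $\TV{\pi,\mu}$ to within $\eps'=\alpha_n\eps$ using $B$ noisy prefix queries. Given i.i.d.\ sample access to $P,Q$, we run $\calA$ on $(\pi_P,\mu_Q)$ and answer each query as follows. At an alive prefix $0^i$, return $Y^\pi$ or $Y^\mu$ built from $M=\lceil (K-1)/\sig^2\rceil$ coin-mixed draws. At a post-escape prefix or a null prefix, return the deterministic answer, which needs no samples. This simulated oracle is valid with relative variance at most $\sig^2$, as verified above. We then output $\hat d/\alpha_n$. Since $\TV{\pi_P,\mu_Q}=\alpha_n\TV{P,Q}$ and $\alpha_n\ge 1-e^{-1}$, this estimates $\TV{P,Q}$ to within $\eps$. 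Rescaling $\eps$ by a constant is harmless.

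\textbf{Counting samples.} A sample from $P$ is consumed only when a $\Bern(q)$ coin with $q=1/n$ lands heads. The coins are drawn independently of $\calA$'s adaptivity, so the number of $P$ and $Q$ samples used has expectation at most $BMq=B\lceil (K-1)/\sig^2\rceil/n$. For $\sig^2\le K-1$, this is at most $2BK/(n\sig^2)$. By Markov's inequality, we can truncate at $100$ times this expectation, losing only a $\frac1{100}$ success probability; the constant success margin absorbs this. We then apply Proposition~\ref{prop:dist_est_supp_k} with support size $K-1\ge 2$, which gives $BK/(n\sig^2)=\Omega\bigl(\frac{K}{\eps^2\log K}\bigr)$, that is, $B=\Omega\bigl(\frac{n\sig^2}{\eps^2\log K}\bigr)$. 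Taking the maximum with the first bound yields $\Omega\bigl(n\cdot\frac{1\lor\sig^2}{\eps^2\log K}\bigr)$.

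\textbf{Main obstacle.} The delicate step is the sample accounting. Bounding the \emph{expected} number of heads under an adaptive stopping rule requires Wald's identity or a linearity argument over at most $BM$ potential coins. The cases $\sig^2<1$, where $M$ would exceed $K-1$ and the bound must switch to the logit lower bound, and $\sig^2>K-1$, must also be handled. The condition $n=\Omega(\log\frac1\eps)$ is needed only for the first part, to invoke Theorem~\ref{thm:prefix-conditional-lower-bound}.
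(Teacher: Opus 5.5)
Your proposal is correct and follows the paper's proof essentially step for step. Both arguments take the $\Omega(n/\eps^2)$ term from Theorem~\ref{thm:prefix-conditional-lower-bound}, embed into the rare-escape instance with alive queries simulated from $M=\lceil (K-1)/\sig^2\rceil$ coin-mixed draws, bound the expected sample usage by $BMq=BM/n$ with Markov truncation, and invoke Proposition~\ref{prop:dist_est_supp_k} on $[K-1]$. On the $\sig^2>K-1$ case you flagged: the paper's step $M=O(K/\sig^2)$ tacitly uses the same convention $\sig^2\in[0,K]$ that you make explicit, and no further handling is possible, since any noisy oracle simulates the sampler oracle and so the $O(n^2K\eps^{-2}\log^2\frac1\eps)$ upper bound rules out the stated bound once $\sig^2$ is much larger than $K$.
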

\begin{proof}
The \(\Omega(\frac n {\eps^2})\) term is already implied by
Theorem~\ref{thm:prefix-conditional-lower-bound}. It remains to prove the noise-dependent term. We work with the hard instances constructed above, and suppose that there is an algorithm \(\calA\) that estimates \(\TV{\pi_P,\mu_Q}\) to additive error \(\eps\) using at most \(B\) noisy prefix queries with constant success probability. We use \(\calA\) to build an ordinary i.i.d.\ sample estimator for \(\TV{P,Q}\).

To answer a query to the \(\pi_P\) oracle at an alive prefix, the simulator draws \(M\) samples from \(r_P=(1-q)e_0+qP\). Each such sample is generated by first drawing \(C_j\sim\operatorname{Bern}(q)\): if \(C_j=0\), the simulator outputs \(0\); if \(C_j=1\), it uses one fresh i.i.d.\ sample from \(P\) and outputs that label. Thus one noisy prefix query to \(\pi_P\) consumes \(\operatorname{Bin}(M,q)\) samples from \(P\), with expectation \(Mq\). The simulation for \(\mu_Q\) is identical, using samples from \(Q\). All escaped or off-path queries are answered deterministically and consume no samples.

Therefore, if \(\calA\) makes at most \(B\) oracle queries, the expected total number \(N\) of ordinary i.i.d.\ samples used from \(P\) and \(Q\) satisfies \(\bbE[N]\le BMq=BM/n\). By Markov's inequality, we may truncate the simulation after a sufficiently large constant multiple of \(BM/n\) samples while losing only a constant amount of success probability. Let \(\widehat D\) be the output of \(\calA\), and return \(\widehat d:=\widehat D/\alpha_n\). Since \(\TV{\pi_P,\mu_Q}=\alpha_n\TV{P,Q}\) and \(\alpha_n=\Theta(1)\), this gives an i.i.d.\ sample estimator for \(\TV{P,Q}\) to additive error \(O(\eps)\), using \(O(BM/n)\) total samples, with constant success probability. By Proposition~\ref{prop:dist_est_supp_k}, this is possible only if
\[
    \frac{BM}{n}
    =
    \Omega\paren{\frac{K}{\eps^2 \log K}} .
\]
Since \(M=O(K/\sig^2)\) , we obtain \(B=\Omega(n\sig^2/(\eps^2\log K))\). Combining this with the \(\Omega(n/\eps^2)\) lower bound, and weakening the latter to
\(\Omega(n/(\eps^2\log K))\), gives \(B=\Omega(n\cdot \frac{1\lor \sig^2}{\eps^2\log K}),\) as claimed.
\end{proof}
As a corollary of Theorem~\ref{thm:noisy_lb_general} and the oracle reduction in Lemma~\ref{lem:oracle_reduction}, we obtain a query complexity bound for Problem~\ref{prob:tv-est} under noisy prefix distribution  oracle access.

\begin{corollary}
\label{cor:sampler-lower}
Fix $m\ge 3$, $n\ge 2$, and $\eps\in(0,c_0)$. Any algorithm that solves Problem~\ref{prob:tv-est} with constant success probability under prefix sampler oracle access, over instances whose next-token supports have size at most $K$, requires \(\Omega(\frac{nK}{\eps^2 \log K})\) oracle queries.
\end{corollary}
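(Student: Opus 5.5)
The plan is to derive Corollary~\ref{cor:sampler-lower} from Theorem~\ref{thm:noisy_lb_general} by passing through the oracle reduction of Lemma~\ref{lem:oracle_reduction}. The key point is that the hard instances $\pi_P,\mu_Q$ of Section~\ref{ssec:lb_general} have next-token supports of size at most $K$. At alive prefixes the support is contained in $\{0\}\cup[K-1]$, and at all other prefixes it is a single point. So they are legitimate instances of the restricted class. It therefore suffices to show that a prefix sampler algorithm with $B$ queries can be converted into a noisy-prefix-oracle algorithm with $O(B)$ queries at a relative variance $\sig^2$ for which Theorem~\ref{thm:noisy_lb_general} already gives $\Omega(nK/(\eps^2\log K))$.

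\textbf{Step 1: simulate the sampler from the noisy oracle.} Given a noisy prefix distribution oracle $\Onoisy$ for $\pi_P$ or $\mu_Q$, answer each sampler query at prefix $s$ as follows. Draw $\widehat p=\Onoisy(s)$, then output $a\sim\widehat p$. Unbiasedness $\E[\widehat p]=\pi_s$ makes $a$ an exact sample from $\pi_s$, and fresh oracle calls give independence across queries. So the sampler algorithm's transcript has the correct law, and one sampler query costs one noisy query.

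\textbf{Step 2: choose $\sig^2$ and apply the lower bound.} I would instantiate Theorem~\ref{thm:noisy_lb_general} with $\sig^2=K-1$. Then $M=\lceil (K-1)/\sig^2\rceil=1$, and the noisy oracle from the simulation is itself a one-hot sample, exactly as in Lemma~\ref{lem:oracle_reduction}. Running the proof of Theorem~\ref{thm:noisy_lb_general} with this $\sig^2$ gives $B\cdot 1/n=\Omega(K/(\eps^2\log K))$, i.e., $B=\Omega(nK/(\eps^2\log K))$. Since $K\ge 3$, we have $1\lor\sig^2=\Theta(K)$, which matches the claimed bound. Equivalently, and more directly, I could repeat that proof verbatim with $M=1$: each sampler query at an alive prefix consumes one $\Bern(1/n)$-thinned sample of $P$ or $Q$, so the expected number of i.i.d.\ samples is $B/n$. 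Proposition~\ref{prop:dist_est_supp_k} on domain $[K-1]$ then applies, together with the TV transfer $\TV{\pi_P,\mu_Q}=\alpha_n\TV{P,Q}$.

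\textbf{Step 3: bookkeeping.} The remaining work is to reconcile hypotheses. The corollary assumes only $n\ge 2$ and $m\ge 3$, whereas Theorem~\ref{thm:noisy_lb_general} also assumes $n=\Omega(\log\frac1\eps)$, which it needs only for its $\Omega(n/\eps^2)$ term. The $K$-dependent term uses only the rare-escape embedding, which works for $n\ge 2$, so I would invoke just that part of the argument. The class is also parameterized by the support bound $K\le m$, with $K\ge 3$ needed so that $[K-1]$ is nontrivial for Proposition~\ref{prop:dist_est_supp_k}. I expect this hypothesis-matching to be the only real obstacle; the reduction itself is routine.
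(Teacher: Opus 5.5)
Your proposal is correct and follows the paper's route: the paper obtains the corollary by instantiating Theorem~\ref{thm:noisy_lb_general} at $\sig^2 \approx K$ and invoking the sampler/noisy-oracle correspondence of Lemma~\ref{lem:oracle_reduction}. Your extra points make explicit what the paper leaves implicit: at $\sig^2 = K-1$ the hard oracle has $M=1$ and is literally a prefix sampler; simulating samples from a noisy oracle is the direction actually needed to transfer a lower bound; and only the $K$-dependent term, which holds for all $n\ge 2$, is needed.
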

\section{Case study: vllm vs.\ sglang}
\label{sec:case-study}

We return to the motivating question of whether we can measure how far the served distribution moves when the serving stack changes. In this section, we fix one serving configuration and measure the TV distance between the output distributions of \texttt{vllm} and \texttt{sglang} serving identical \texttt{Qwen3-0.6B} weights (bf16, top-$k = 20$) under that configuration (Appendix~\ref{app:exp-setup}).

Our estimators need each engine's log-probability for every sampled sequence. The engines' built-in scoring APIs compute these on a different code path from generation and are biased relative to the sampling distribution (Appendix~\ref{app:exp-replay} quantifies this bias). We instead \emph{replay} each sequence through the engine: at every step, a logit processor overrides the sampled token with the next token of the given sequence, and we record the log-probability the engine computed at that step.

Figure~\ref{fig:engine-tv} reports the result: $\TV{\pi,\mu} = 0.586$ at length $n = 500$ on the reference prompt. The measured distance does not depend solely on the engine pair: it grows with the sequence length, its \emph{support mismatch} component (trajectories containing a token to which one engine's top-$k$ assigns zero mass, on which an empirical KL is infinite) grows to about half of the estimate at $n = 500$, and it varies across prompts by an amount comparable to its own magnitude. The takeaway from this experiment is that cross-engine distances should be reported together with the length, truncation, and serving configuration under which they were measured.

\begin{figure}[t]
    \centering
    \includegraphics[width=\linewidth]{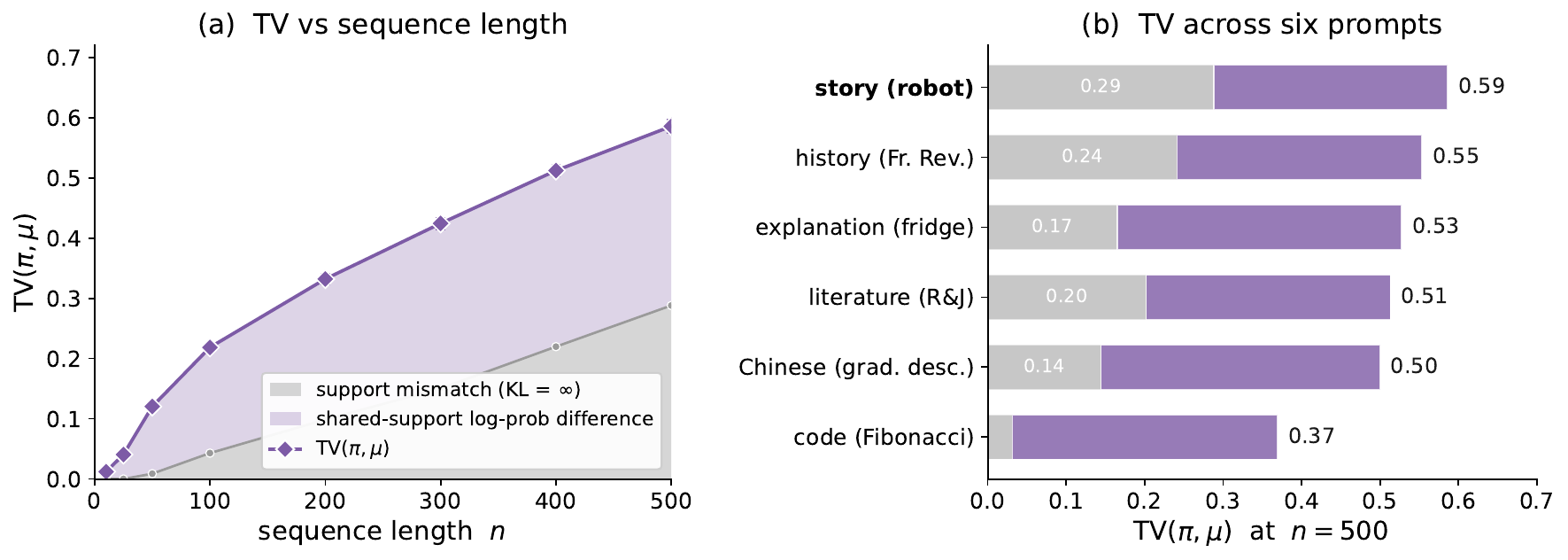}
    \caption{\textbf{The total variation distance between two inference engines.} \texttt{vllm} and \texttt{sglang} serving identical \texttt{Qwen3-0.6B} weights (bf16, top-$k=20$), measured at the sampling condition ($B = N = 512$, replay scoring; Appendix~\ref{app:exp-replay}). The estimate decomposes at the trajectory level into a \emph{shared-support log-probability difference} (both engines retain every sampled token and only the probabilities differ; purple) and \emph{support mismatch} (some sampled token receives zero mass from one engine's top-$k$, so the empirical KL diverges; gray). (a) The distance grows with sequence length, with the mismatch component reaching about half of the estimate at $n = 500$. (b) The same decomposition across six prompts; the distance varies with the token-confidence profile of the content (Appendix~\ref{app:exp-entropy}).}
    \label{fig:engine-tv}
\end{figure}

At the reference measurement, each engine generated its $N = 512$ sequences in a single batch; we call this the \emph{sampling condition}. Rescoring the same trajectories while changing only the scoring-time batch construction changes the estimate substantially: fixed scoring batches of $B = 256$ and $B = 1024$ give $0.363$ and $0.423$, compared with $0.586$ under the sampling condition (protocol and estimand in Appendix~\ref{app:exp-protocol}). The sensitivity to the serving condition is of the same order as the sensitivity to the prompt. The oracle noise depends on the condition as well: the per-call relative standard deviation $\sigma$ is ${\approx}3\times10^{-4}$ at the sampling condition, rises to $\sigma = 0.019$ for \texttt{vllm} at $B = 256$ (\texttt{sglang}'s replays there are bit-identical across repeats), and is at most $0.006$ for either engine at $B = 1024 \ge N$.

Two of the measurements above provide noisy instances on which to test the multilevel schedule of Algorithm~\ref{alg:blackbox-tv}: the $B = 256$ rescoring oracle, and the long-sequence measurement at $n = 2000$ (Appendix~\ref{app:exp-length}), for which the required number of repeated queries per position, $r \gtrsim n\sigma^2$, is substantial even at small per-call noise. The best schedule depends on the oracle noise, so we choose it from the data: the first $64$ trajectories serve as a pilot, from which we estimate the variance at each level and select the schedule; the pilot's cost is included in every budget we report (protocol in Appendix~\ref{app:exp-mlmc}). Figure~\ref{fig:mlmc-real}(a) then compares budgets at equal accuracy: to reach a given accuracy, the pilot-chosen schedule needs up to ${\approx}1.3\times$ less budget than the best single-level estimator, i.e., a single fixed repetition count $r$ chosen with hindsight separately for each accuracy. Figure~\ref{fig:mlmc-real}(b) shows that the method is robust to the choice of schedule: whatever the inner levels, the multilevel estimator is unbiased for the value defined by its top level, which sets the bias floor exactly as the fixed $r$ of a single-level estimator does. Schedules sharing the same top level therefore converge to the same value and differ only in how quickly.

\begin{figure}[t]
    \centering
    \includegraphics[width=\linewidth]{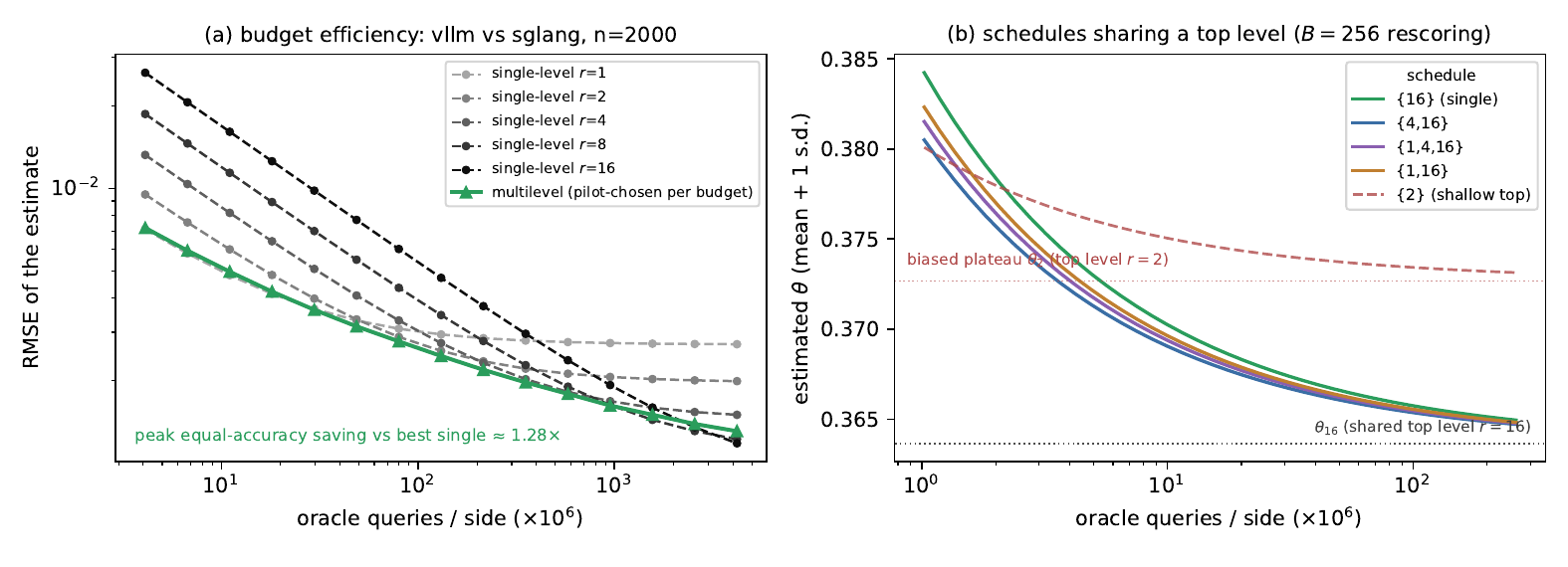}
    \caption{\textbf{Multilevel variance reduction on two noisy conditions from the case study.} (a) Engine pair, $n = 2000$: dashed curves are fixed single-level configurations (repetition count $r$), each flattening at its bias floor; the green curve is the pilot-chosen schedule at each budget, all pilot costs included (protocol in Appendix~\ref{app:exp-mlmc}). (b) Engine pair, $B = 256$ rescoring: schedules sharing the top level $r = 16$ converge to the same answer, with inner levels setting only the speed of convergence; a shallow top level ($r = 2$, dashed) converges to a different, biased value.}
    \label{fig:mlmc-real}
\end{figure}

\section*{Acknowledgments}
This work is supported by the NSF AI Institute for Foundations of Machine Learning (IFML).

\newpage
\bibliography{ref}
\bibliographystyle{alpha}
\newpage
\appendix
\section{Comparison between TV and KL}
\label{sec:TVKL}

The comparison between TV and KL has a long history in probability, statistics, and information theory \cite{cover1999elements,gibbs2002choosing,sason2016f,canonne2022short}. Here we sketch the aspects most relevant to our setting. We first recall several standard comparisons between total variation distance and KL divergence. A classical relation is Pinsker's inequality,
\[
    \TV{P,Q} \le \sqrt{\frac12 \KL{P}{Q}} ,
\]
which shows that small KL divergence implies small TV distance. However, this
relation is only one-sided and can be loose in several regimes. For example,
two distributions may be arbitrarily close in TV while having infinite KL
divergence if their supports do not match. Concretely, considering $P_\eps = \Bern(\eps), Q = \Bern(0)$, we have
\[
    \TV{P_\varepsilon,Q}=\varepsilon,
    \qquad
    \KL{P_\varepsilon}{Q}=+\infty .
\]
This illustrates that KL is highly sensitive to support mismatch, whereas TV
only charges the amount of probability mass on which the two distributions
differ. Conversely, when two distributions are already very far apart, TV
saturates at its maximum value \(1\), while KL, being unbounded, can still
reflect different degrees of separation.

The two quantities also differ in their operational meanings. TV admits the optimal coupling characterization
\[
    \TV{P,Q}
    =
    \inf_{\gamma\in\Gamma(P,Q)}
    \Pr_{(X,Y)\sim\gamma}[X\neq Y],
\]
where \(\Gamma(P,Q)\) denotes the set of all couplings of \(P\) and \(Q\).
Equivalently, there exists a maximal coupling such that
\[
    \Pr[X=Y]=1-\TV{P,Q}.
\]
Thus, a statement such as \(\TV{P,Q}=0.01\) has a clear sample-level meaning:
the two distributions can be coupled so that their samples are exactly
identical with probability \(0.99\). This makes TV particularly natural for
measuring observable output-level discrepancy between two generative
distributions.

TV also has a direct decision-theoretic interpretation. Consider the binary
Bayesian testing problem where one first samples a hidden label
\(H\in\{0,1\}\) uniformly at random, and then observes \(X\sim P\) if
\(H=0\) and \(X\sim Q\) if \(H=1\). The minimum possible probability of error
over all decision rules is
\[
    P_{\mathrm{err}}^\star
    =
    \frac{1-\TV{P,Q}}{2}.
\]
Thus, TV directly measures the distinguishability of two distributions.In other words, if two APIs are treated as black-box generators and we want to
detect whether their output distributions differ, TV gives the fundamental
statistical limit of distinguishability. A small TV implies that no black-box
detector can reliably tell the two APIs apart from a single output sample,
whereas a large TV implies that the two APIs are distinguishable with small
Bayesian error.

Compared with TV, KL divergence and its variants are more widely used in
empirical machine learning. For example, the cross-entropy loss used to train
language models satisfies
\[
    \mathbb E_{X\sim P}[-\log Q_\theta(X)]
    =
    H(P)+\KL{P}{Q_\theta},
\]
and perplexity is an exponential transformation of this average log-loss. Thus,
standard language-model training can be viewed as minimizing a KL-type
objective.

This popularity is partly due to two practical advantages. First, KL admits an
exact chain rule under autoregressive factorization:
\[
    \KL{P}{Q}
    =
    \sum_{t=1}^n
    \mathbb E_{X_{<t}\sim P}
    \left[
        \KL{
            P_t(\cdot\mid X_{<t})
        }{
            Q_t(\cdot\mid X_{<t})
        }
    \right].
\]
Thus, a sequence-level KL decomposes into token-level terms that are directly
accessible from prefix probabilities. Second, KL-based objectives are smooth
and naturally compatible with gradient-based training. In contrast, TV is
bounded and operationally meaningful, but it lacks an analogous chain rule and
is less convenient as a direct differentiable loss.


\section{Experimental details and additional results}
\label{app:experiments}

This appendix records the experimental setup behind Figures~\ref{fig:oracle-sigma}--\ref{fig:mlmc-real}: hardware and software (Appendix~\ref{app:exp-setup}), the synthetic validation of Algorithm~\ref{alg:blackbox-tv} (Appendix~\ref{app:exp-synthetic}), the calibration of the noise parameter $\sig$ (Appendix~\ref{app:exp-sigma-cal}), the empirical top-$k$ support size (Appendix~\ref{app:exp-topk-union}), and the construction and validation of the replay scoring oracle (Appendix~\ref{app:exp-replay}). It also gives the measurement and evaluation protocols of the case study (Appendices~\ref{app:exp-protocol} and~\ref{app:exp-mlmc}) and four additional measurements referenced from the main text: the dependence of the cross-engine distance on the sequence length (Appendix~\ref{app:exp-length}), its dependence on the top-$k$ truncation (Appendix~\ref{app:exp-topk-dependence}), its dependence on the prompt (Appendix~\ref{app:exp-entropy}), and the empirical cost of sample-only access (Appendix~\ref{app:exp-access}).

\subsection{Setup}
\label{app:exp-setup}

\paragraph{Hardware and software.}
All GPU experiments ran on single NVIDIA GH200 nodes (96\,GB HBM3e) of a shared SLURM-managed cluster, with toolchain \texttt{gcc 14.2.0} / \texttt{cuda 12.8} / \texttt{python 3.11.8}. The engine experiments use \texttt{vllm 0.19.1} (\texttt{enforce\_eager=True}, prefix caching disabled) with \texttt{torch 2.10.0+cu128} and \texttt{transformers 5.7.0}, and \texttt{sglang 0.5.10.post1} (CUDA graph capture disabled) with \texttt{torch 2.9.1+cu128} and \texttt{transformers 5.3.0}, in isolated virtual environments; graph capture and prefix caching are disabled so that determinism properties of the measurement reflect the model computation rather than request-level caching. The attention-backend experiments of Figures~\ref{fig:oracle-sigma} and~\ref{fig:denoise} use HuggingFace Transformers (\texttt{torch 2.10.0+cu128}, \texttt{transformers 4.57.6}) with the PyTorch SDPA backend pinned to cuDNN, FlashAttention-2, or math, on \texttt{Qwen3-0.6B}, \texttt{Qwen3-1.7B}, \texttt{Qwen3-8B}, \texttt{Qwen2.5-7B}, and \texttt{Qwen3-30B-A3B} (a 30B mixture-of-experts model), all in bf16. Synthetic experiments run on CPU with NumPy, master seed $10$.

\paragraph{Synthetic hard instance.}
We use the binary autoregressive lower-bound construction of Section~\ref{ssec:lower_whit_box}: each sequence factors as $x = u \oplus z$ with $u \in \Sigma^r$ a block index and $z \in \Sigma^{n-r}$ a within-block leaf, and within each active block the discrepancy between $\pi$ and $\mu$ is concentrated at the final bit. Throughout the synthetic experiments we fix $n = 128$, $r = 12$ (the block-index length of Section~\ref{ssec:lower_whit_box}; not a repetition count), $p_{\rm active} = 0.4$, $\alpha = 0.49$, giving $\TV{\pi, \mu} = 0.388$ by direct enumeration. The noisy oracle of Definition~\ref{def:conditional-sketch} is realized by adding zero-mean Gaussian noise of standard deviation $\sig\sqrt{\hat p (1 - \hat p)}$ to each realized next-token probability (deterministic transitions remain deterministic, and the average over repeated queries converges to the true probability); at $\sig = 0$ this reduces to the exact prefix logit oracle of Definition~\ref{def:next_token_logprob}. The perturbation is applied to the realized-token probability only (a scalar sketch of the vector oracle), and individual draws are left unclipped (they may leave $[0,1]$), so the construction satisfies the unbiasedness and relative-variance conditions of Definition~\ref{def:conditional-sketch} exactly while relaxing simplex membership; the only guard is that the $r$-averaged probability is clipped to $[10^{-12}, 1]$ before the logarithm. The sample oracle is realized as the one-hot empirical frequency of resampled tokens.

\paragraph{Cross-engine sampling protocol.}
The shared input is the system prompt \texttt{"You are a helpful assistant."} followed by a user prompt, tokenized by the Qwen3 chat template to a $30$-token prefix. Each side samples $N = 512$ continuations of length $n = 500$ in a single \texttt{generate} call ($B = N = 512$; top-$k = 20$, temperature $1$, EOS suppressed for fixed-length sampling); the $\pi$ side (\texttt{vllm}) uses base seed $0$ and the $\mu$ side (\texttt{sglang}) base seed $1$, with an independent per-call seed offset for each repeated scoring pass. Sampling and scoring for one engine take place within a single engine session, since reloading kernels between passes can produce spuriously byte-identical repeats. The reference prompt is \texttt{"Tell me a story about a robot learning to paint."}; the cross-prompt experiment of Figure~\ref{fig:engine-tv}(b) repeats the identical protocol on five further prompts (an explanation, a history question, a literature summary, a Chinese-language explanation, and a coding task). We store $R = 16$ scoring passes per side for the reference measurement, $R = 64$ for the $n = 2000$ run, and $R = 128$ and $R = 32$ for the $B = 256$ and $B = 1024$ rescoring conditions of Appendix~\ref{app:exp-protocol}.

\paragraph{Repeat-axis hygiene.}
Serving stacks batch queries internally, so consecutive scoring passes can share scheduler state and be serially correlated along the repeat axis: on the stored pools we measure short-lag autocorrelations of up to $\pm 0.08$ (\texttt{vllm}) and slow drift (\texttt{sglang}). All statistics computed along the repeat axis (variances, half-splits, level increments) are therefore evaluated after an independent uniform permutation of the repeats at each (trajectory, position) cell, and multilevel quantities are averaged over $20$ such permutations. The permutation does not make dependent repeats independent: it makes them exchangeable, so that at the level of second moments the residual dependence is a common per-cell component, which cancels to leading order in the difference-based statistics used here (the half-split calibration of Appendix~\ref{app:exp-sigma-cal} and the coupled level increments). The diagnostics are sensitive to this step: evaluating the same quantities in raw serving order inflates the variance ratios $\rho_\ell$ by roughly $2\times$ and the peak saving of Figure~\ref{fig:mlmc-real}(a) from ${\approx}1.3\times$ to ${\approx}1.8\times$. The permutation is therefore part of the protocol; one dataset-specific exception is noted in Appendix~\ref{app:exp-mlmc}.

\subsection{Synthetic validation}
\label{app:exp-synthetic}

\paragraph{Exact-logit sanity check.}
At $\sig = 0$, Algorithm~\ref{alg:tv-test-white-box} reduces to a standard outer-trajectory Monte Carlo on the bounded estimator $R = (1 - \exp(L_\mu - L_\pi))_+$, with $\E[R] = \TV{\pi, \mu}$ by Lemma~\ref{lem:tv-representations}. Figure~\ref{fig:exact-logit-convergence} plots the empirical mean absolute error against $1/\sqrt{N}$ and overlays the Gaussian-asymptotic reference $\sqrt{2/\pi}\,\sqrt{\Var(R)/N}$. The fitted slope is $-0.493$ against the theoretical $-0.500$, and the empirical curve sits slightly below the reference because $R \in [0, 1]$ has shorter-than-Gaussian tails. This confirms that the synthetic instance, the true-TV computation, and the estimator are mutually consistent before any noise is added.

\begin{figure}[t]
    \centering
    \includegraphics[width=0.6\textwidth]{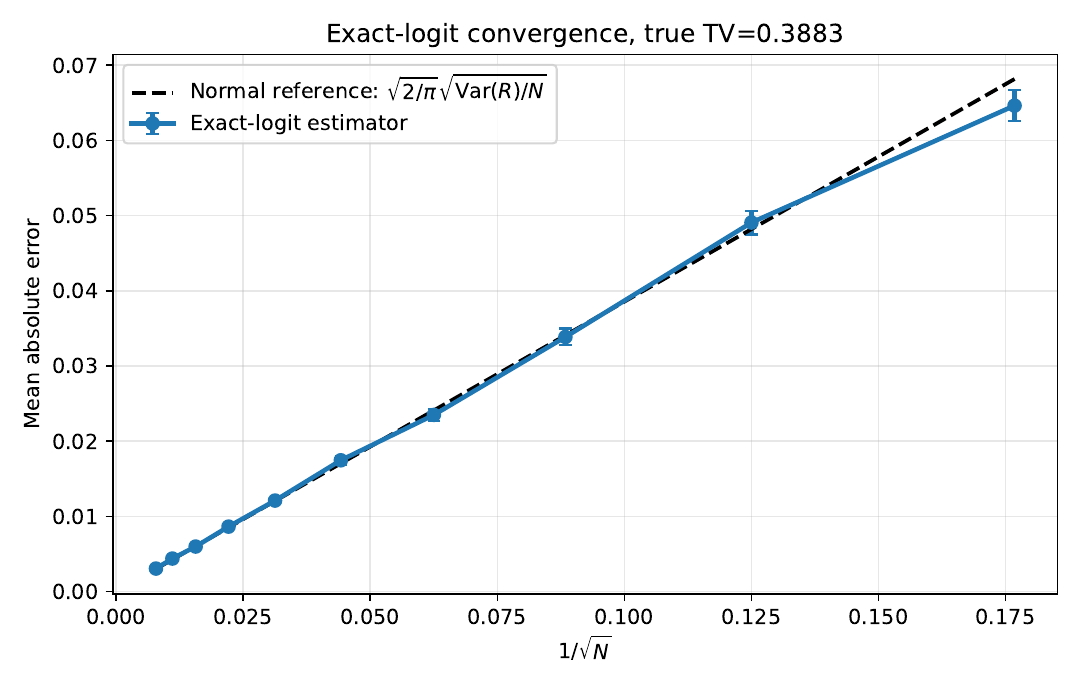}
    \caption{Mean absolute error of Algorithm~\ref{alg:tv-test-white-box} on the synthetic hard instance at $\sig = 0$ (exact logits), against $1/\sqrt{N}$, with $\TV{\pi, \mu} = 0.388$ known by enumeration. The dashed line is the Gaussian-asymptotic reference $\sqrt{2/\pi}\,\sqrt{\Var(R)/N}$. The fitted slope is $-0.493$ against the theoretical $-0.500$.}
    \label{fig:exact-logit-convergence}
\end{figure}

\paragraph{Multilevel depth and the variance-ratio diagnostic.}
On the same instance with noise level $\sig > 0$, the benefit of including each additional level of Algorithm~\ref{alg:blackbox-tv} is captured by the empirical variance ratio
\begin{equation*}
    \rho_\ell \defeq \frac{\Var\bra{Y_\ell}}{\Var\bra{Z_\ell}}
        = \frac{\Var\paren{Z_\ell - Z_{\ell-1}}}{\Var\paren{Z_\ell}},
\end{equation*}
where $Y_\ell$ and $Z_\ell$ are the coupled level-$\ell$ increment and estimator defined in \eqref{eq:hzdef}--\eqref{eq:yl_def}. When $\rho_\ell \ll 1$, the cheaper level $\ell - 1$ provides an essentially free variance reduction at level $\ell$; when $\rho_\ell$ is of order one, level $\ell - 1$ adds cost without reducing variance and should be dropped. Figure~\ref{fig:synth-rho-regime} sweeps $\sig \in \{0.04, 0.5, 0.8\}$ across four schedule depths under matched query budget, with $\rho_\ell$ for each level reported in the column titles. All schedules converge to the shared $r = 256$ target at every $\sig$ (its residual bias sets the error floor visible at $\sig = 0.5, 0.8$), and the empirical winner tracks $\rho_\ell$ throughout: at $\sig = 0.04$ all $\rho_\ell < 0.02$ and the deepest ($4$-level) schedule matches the accuracy of the single-level estimator at $r = 256$ with roughly $7\times$ less budget; as $\sig$ grows, $\rho_1$ rises past $1$ and the optimal depth shrinks. Since $\rho_\ell$ is measurable on a small calibration sample, the depth can be chosen before the estimation budget is spent; the pilot protocol of Appendix~\ref{app:exp-mlmc} operationalizes this on real data.

\begin{figure}[t]
    \centering
    \includegraphics[width=\linewidth]{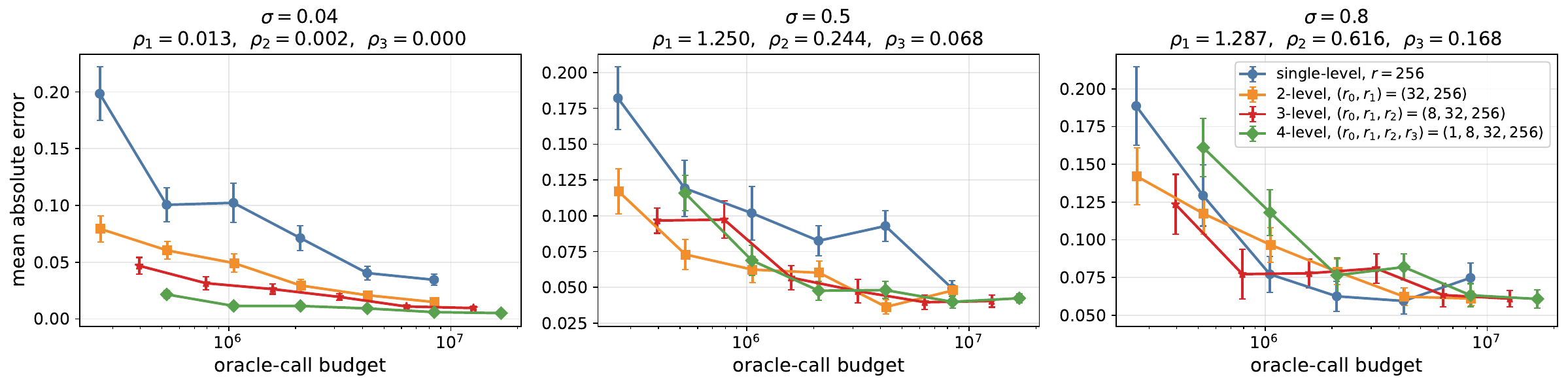}
    \caption{Mean absolute error against oracle-call budget for $1$-, $2$-, $3$-, and $4$-level instantiations of Algorithm~\ref{alg:blackbox-tv} on the synthetic hard instance, at three noise levels $\sig \in \{0.04, 0.5, 0.8\}$. Column titles report the variance ratio $\rho_\ell$ at each level of the deepest schedule.}
    \label{fig:synth-rho-regime}
\end{figure}

\subsection{Calibration of the noise parameter $\sig$}
\label{app:exp-sigma-cal}

The parameter $\sig^2$ of Definition~\ref{def:conditional-sketch} is the relative variance of a single oracle output against the \emph{true} prefix-conditional distribution $p$. On a real model, $p$ is unavailable, so we substitute an empirical proxy and correct for its residual noise. On a cache of $R_{\max}$ repeated queries at each of a large sample of (trajectory, position) cells, we split the repeats into two disjoint halves $A$ and $B$ of size $R_{\rm half}$, let $\hat p_{r,A}$ be the average of the first $r$ repeats of $A$ and $\hat p_B$ the full average over $B$ (both unbiased, and approximately independent; see Appendix~\ref{app:exp-setup}), and use the fact that, by Lemma~\ref{lem:empirical-tree-chi2}, to leading order
\[
    \mathbb{E}\Brack{\chi^2(\hat p_{r,A} \,\|\, \hat p_B)}
    \;\approx\;
    \sig^2 \paren{\frac{1}{r} + \frac{1}{R_{\rm half}}}.
\]
The $\chi^2$ statistic is evaluated on the union of the top-$k$ sets across repeats, so it includes probability mass that moves across the truncation boundary between calls. Measuring the left-hand side, averaging over cells, and dividing by the prefactor yields one $\sig^2$ estimate per averaging depth $r$; if the noisy-oracle model holds, these estimates must coincide. On \texttt{Qwen3-0.6B} with $R_{\max} = 248$ ($100 \times 500$ cells, $r \in \{1, 2, 4, 8, 16, 32, 64, 124\}$) the eight estimates agree within $\pm 15\%$, indicating that $\sig^2$ is a property of the serving configuration rather than an artifact of the chosen depth. The five-model measurement of Figure~\ref{fig:oracle-sigma}(a,c) applies the same estimator with $r = 1$ on $1{,}500$ subsampled cells per configuration, under the two batching conditions defined there: \emph{isolated} (the query alone in a fixed batch, on a warmed-up model) and \emph{served} (the query batched with other, varying concurrent requests within a serving session).

Two remarks on the functional. First, the union-support $\chi^2$ is the conservative choice: a variant restricted to the probability of the realized token only, which measures the noise the estimator experiences on shared support, is smaller (by about $3\times$ at $k = 20$ on \texttt{Qwen3-0.6B}); the theorems consume the conservative value. Second, single-call deviations are heavy-tailed (excess kurtosis ${\approx}9$; Figure~\ref{fig:oracle-sigma}(b)), so variance-type aggregation across cells is essential; median-type aggregates measure the width of the central peak and underestimate $\sig^2$ by roughly $4\times$. The distribution across cells is also heavy-tailed: at $k = 20$, the 99th percentile of the per-cell estimate is roughly $12\times$ the mean and the largest observed cell is roughly $100\times$, so a worst-case per-prefix bound is correspondingly larger than the averages reported here.

\subsection{Empirical top-$k$ support size}
\label{app:exp-topk-union}

The bounds in the introduction are stated in terms of the support size $K$ of the next-token distribution. When the API exposes a stated top-$k$ parameter, the relevant $K$ is the union of per-call top-$k$ sets across the $r$ repeated queries used at a fixed prefix; nondeterminism in which tokens reach the truncation threshold can make this union strictly larger than $k$. Figure~\ref{fig:topk-union} measures the union size at $k = 20$ on all five models over $r$ up to $64$: the union exceeds the stated $k$ by $3$--$5\%$ on the dense models and by ${\sim}10\%$ on the MoE (whose expert routing perturbs the truncation boundary), and endpoints for $k = 5$ and $k = 100$ lie in the same $3$--$10\%$ range. The per-cell tails are moderate as well: the 99th-percentile union stays below $1.2k$ on the dense models and $1.35k$ on the MoE, and the largest union observed in any cell is $1.75k$. The practical $K$ is therefore comparable to the API-exposed $k$ and far below $2k$.

\begin{figure}[t]
    \centering
    \includegraphics[width=0.62\textwidth]{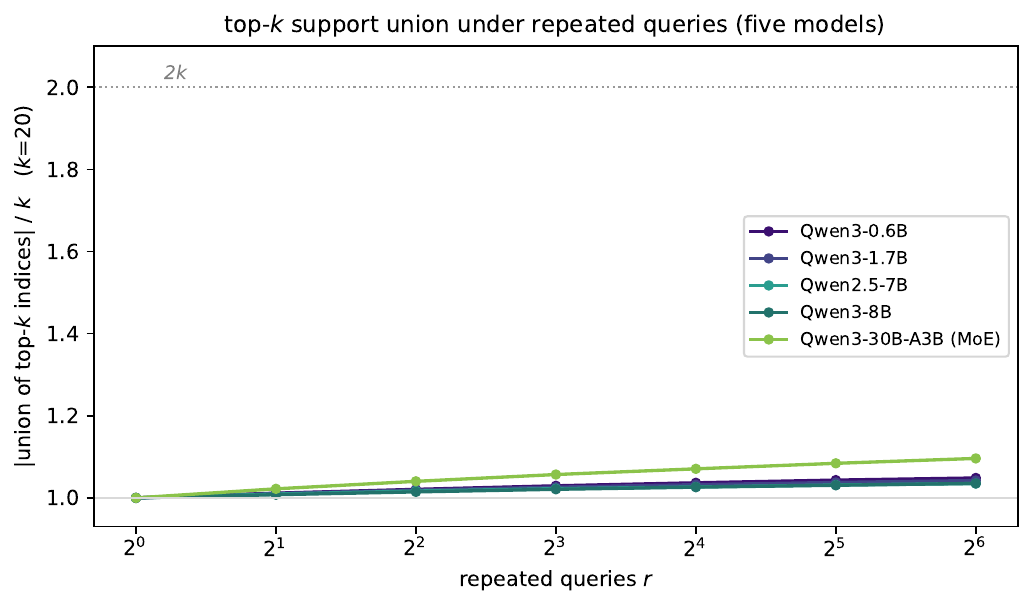}
    \caption{Mean empirical top-$k$ support union $|\bigcup_{i \le r} \mathrm{top}\text{-}k(i)| / k$ against the number of repeated queries $r$ at a fixed prefix ($k = 20$; $1{,}500$ (trajectory, position) cells per model). Across five models the union exceeds the stated $k$ by at most ${\sim}10\%$, far below $2k$.}
    \label{fig:topk-union}
\end{figure}

\subsection{The replay scoring oracle}
\label{app:exp-replay}

\paragraph{Construction.}
Given a production engine and a sampled trajectory $X$, the case study requires an oracle whose output is distributed as the engine's \emph{own} next-token distribution at every prefix of $X$. We obtain one in three steps. \emph{(i)} Trajectories are sampled natively as in Appendix~\ref{app:exp-setup}. \emph{(ii)} A per-request logit processor forces each sampling step to the corresponding token of $X$ (teacher forcing), so the engine decodes along $X$ through its ordinary decode path; the processor records the step's top-$k$ log-softmax \emph{before} the forcing is applied, since log-probabilities returned by the engine are computed after the processor and are contaminated by it. \emph{(iii)} The captured top-$k$ log-probabilities are renormalized over the top-$k$ set, giving one unbiased draw of the next-token distribution; $r$ independent replays give $r$ approximately i.i.d.\ draws (Appendix~\ref{app:exp-setup} measures the serial correlation across passes).
Because the replayed computation matches generation position-by-position (same decode kernels, same batch shape), the replay oracle scores the same distribution the engine samples from.

\paragraph{Validation.}
Three checks confirm the construction. The forcing is exact: the rate at which an engine assigns zero mass to its own sampled token is $0$ under replay scoring. The captured values agree with the log-probabilities observed at sampling time to ${\sim}4 \times 10^{-5}$. Repeated replays under the sampling-time batch shape agree to a per-call relative standard deviation of $\sigma \approx 3\times10^{-4}$, more than three orders of magnitude below the distances we report.

\paragraph{Sequence-level scoring APIs are biased.}
The engines' sequence-level log-probability API scores an entire given sequence in one call and is the interface a practitioner would reach for first. It is not faithful to the sampling distribution: its execution path and tie-breaking differ from those of generation. On the same trajectories, it assigns zero mass to $37\%$ of \texttt{sglang}'s own sampled trajectories ($0.6\%$ for \texttt{vllm}; both rates are $0$ under replay scoring), and it systematically compresses the shared-support log-probability differences; the net effect moves the estimated TV from $0.586$ to $0.550$ (Figure~\ref{fig:seq-api}). This failure is a bias rather than noise (its repeat variance is ${\sim}10^{-8}$ and does not accumulate along the sequence), so black-box TV measurements must verify that the scoring path matches the generation path, which the replay construction provides.

\begin{figure}[t]
    \centering
    \includegraphics[width=0.56\textwidth]{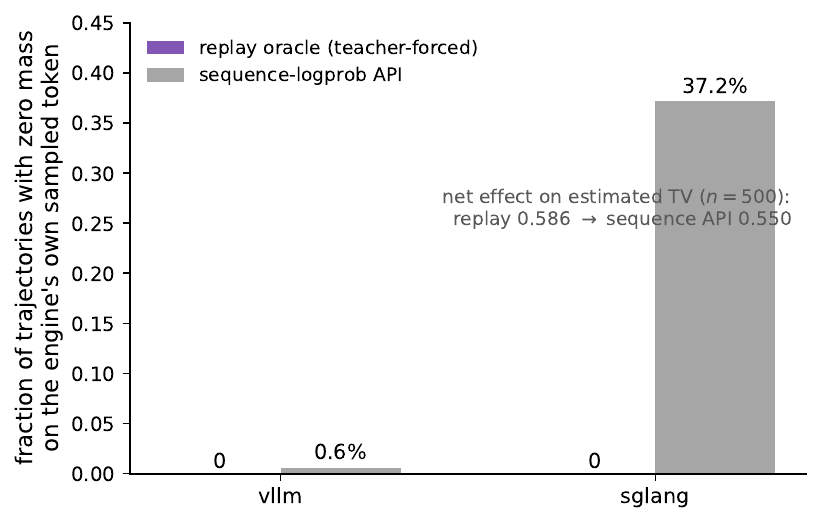}
    \caption{Self-scoring faithfulness of two scoring interfaces. Because oracle noise at this condition is small, a faithful scoring oracle should almost never assign zero mass to a token the engine itself just sampled. Under replay scoring the rate is $0$ for both engines; under the sequence-level log-probability API it is $0.6\%$ for \texttt{vllm} and $37\%$ for \texttt{sglang}, a deterministic bias (repeat variance ${\sim}10^{-8}$) caused by the mismatch between the scoring and generation paths. The net effect moves the estimated cross-engine TV from $0.586$ (replay) to $0.550$ (sequence API).}
    \label{fig:seq-api}
\end{figure}

\subsection{Case-study measurement protocol}
\label{app:exp-protocol}

\paragraph{Estimator decomposition.}
The plug-in estimator $\hZ$ of \eqref{eq:hzdef} splits at the trajectory level into the two components shown in Figures~\ref{fig:engine-tv} and~\ref{fig:topk-sweep}. If both engines assign positive $r$-averaged mass to every sampled token of the trajectory, then $\hZ = \tanh(|\log \hpi_r - \log \hmu_r|/2)$ (the \emph{shared-support log-probability difference}); if at least one position receives zero mass from one engine's $r$-averaged top-$k$, then $\hZ = 1$ exactly (the \emph{support mismatch} component, on which the empirical KL divergence is $+\infty$). At the reference condition, the mismatch component accounts for $0.29$ of the $0.586$ estimate at $n = 500$; its growth with sequence length is shown in Appendix~\ref{app:exp-length}.

\paragraph{Symmetric scoring.}
The mixture representation gives $\E_{X \sim \pi}[\hZ] + \E_{X \sim \mu}[\hZ] = 2\,\TV{\pi,\mu}$ in the exact limit, so averaging the two one-sided expectations cancels the second-order bias that either incurs alone. All reported distances use this symmetric average, with one exception noted in Table~\ref{tab:mlmc-landscape}; on the reference data the two one-sided estimates differ from their mean by at most $\pm 0.011$ (${\sim}2\%$), and the residual effect of the averaging depth ($r = 8$ against a held-out deep reference) is ${\sim}0.002$.

\paragraph{Serving-condition rescoring.}
The rescoring experiments of Section~\ref{sec:case-study} keep the trajectories and the engines fixed and change only the scoring-time batch construction: the $N = 512$ targets are partitioned into fixed batches of size $B \in \{256, 1024\}$ (padding sequences complete the batch at $B = 1024$), with the batch composition independently resampled at every repeat, and both scoring-side truncations $k \in \{20, 100\}$ stored; the reported rescoring values and the multilevel evaluation of Appendix~\ref{app:exp-mlmc} use the $k = 100$ readout. Because the trajectories remain those of the sampling condition, the estimand of a rescoring dataset is $\tfrac{1}{2}\,\E_{X \sim \pi}[\hZ] + \tfrac{1}{2}\,\E_{X \sim \mu}[\hZ]$ with the oracles evaluated at batch size $B$; it is not $\TV{\pi_B, \mu_B}$, which would require resampling the trajectories under each condition. Holding the trajectory measure fixed makes the comparison a controlled one, in which only the scoring oracle changes. The resulting oracle noise is a property of the condition: at $B = 256$, \texttt{vllm}'s scheduler is active and its oracle carries $\sigma = 0.019$ while \texttt{sglang}'s repeats are bit-identical; at $B = 1024 \ge N$ the noise is small for both engines ($\sigma \le 0.006$).

\subsection{Multilevel evaluation protocol}
\label{app:exp-mlmc}

Figure~\ref{fig:mlmc-real} evaluates the multilevel schedule of Algorithm~\ref{alg:blackbox-tv} under a protocol in which every quantity the method needs is estimated from the data it would see. The first $P = 64$ trajectories serve as a pilot whose queries count against the budget; the pilot estimates per-level variances and biases, selects among all schedules of at most four levels on a fixed grid of repetition counts, and allocates samples across the selected levels proportionally to $\sqrt{V_\ell / r_\ell}$. Reported curves average over $40$ independent pilot draws and $20$ repeat-axis permutations (Appendix~\ref{app:exp-setup}). RMSE is computed analytically per configuration: the bias of each level is measured against a deep reference built from a held-out half of the stored repeats (floored at its paired standard error), and variances are taken from the full data, so the pilot affects selection only; budget points beyond the stored cache ($6.6 \times 10^{7}$ queries per side) extrapolate the $1/N$ variance scaling around the measured bias floors. The pilot's queries are recycled into the final estimate; a replay-based control ($300$ simulations per budget) shows that recycling introduces no detectable selection bias (both arms carry the same plug-in bias of the selected level) while improving RMSE by ${\approx}45\%$ at small budgets.

Two baselines are reported. The \emph{pilot-chosen single level} is given the same pilot information and must likewise pick one repetition count; the \emph{hindsight-best single-level family} is the pointwise lower envelope of all fixed single-level curves (the dashed family in Figure~\ref{fig:mlmc-real}(a)), which no realizable procedure can attain. The headline saving of ${\approx}1.3\times$ in the main text is measured against the latter, stronger baseline; against the pilot-chosen single level the peak saving on the same dataset is ${\sim}4.4\times$. Table~\ref{tab:mlmc-landscape} summarizes the three case-study datasets under this protocol. The $n = 2000$ dataset evaluates the schedule on the $\pi$-side arm of the estimator: the full estimator is the average of the two one-sided arms, and the multilevel schedule operates on the repeat axis within each arm, so arm-level comparisons carry over to the full estimator. The degenerate row is informative: at $B = 1024$ both oracles are nearly deterministic, so there is no noise for a multilevel schedule to average and the pilot correctly degrades to a single level (this row is evaluated in serving order: its residual deviations are rare, serially clustered serving-state switches rather than per-query noise, and a repeat-exchangeable permutation would misread them as averageable). A mismatch-dominated distance degenerates for a structural reason: a trajectory locked at $\hZ = 1$ carries no repeat variance, so repetition and multilevel schedules have nothing to average, and at tight truncations, where mismatch dominates the distance (Appendix~\ref{app:exp-topk-dependence}), multilevel offers no gain.

\begin{table}[t]
\centering
\begin{tabular}{lccc}
\toprule
\textbf{Dataset (condition)} & $\theta$ & \textbf{RMSE ratio} & \textbf{Peak saving} \\
\midrule
Engine pair, $n = 2000$ (sampling condition) & $0.68$ (one-sided) & $0.80$ & $4.4\times$ \\
Engine pair, $B = 256$ rescoring & $0.363$ & $0.91$ & $1.5\times$ \\
Engine pair, $B = 1024$ rescoring & $0.423$ & $1.00$ & --- \\
\bottomrule
\end{tabular}
\vspace{.4em}
\caption{Multilevel evaluation on the case-study datasets. $\theta$ is the estimand of the dataset (a TV only for the sampling-condition rows; Appendix~\ref{app:exp-protocol}); the RMSE ratio is the minimum, over budgets, of the pilot-chosen multilevel RMSE divided by the pilot-chosen single-level RMSE (values below $1$ favor multilevel); the peak saving is the equal-accuracy budget ratio against the pilot-chosen single level (the same baseline as the RMSE ratio; Figure~\ref{fig:mlmc-real}(a) additionally reports the saving against the stronger hindsight-best family). The $n = 2000$ dataset is scored one-sided; the identity of Appendix~\ref{app:exp-protocol} bounds the deviation from the symmetric value. Multilevel helps where the oracle carries averageable noise and degrades to a single level where the oracle is deterministic ($B = 1024$).}
\label{tab:mlmc-landscape}
\end{table}

\subsection{Dependence on the sequence length}
\label{app:exp-length}

A separate campaign repeats the reference configuration with declared length $n = 2000$. It gives $0.541$ at $n = 500$, against the reference value $0.586$, and $0.677$ at $n = 2000$, where the support-mismatch component covers $48\%$ of trajectories.

The gap to the reference value traces to \texttt{sglang}'s admission policy: it reserves KV cache at the declared length, and $512 \times 2000$ tokens do not fit one GH200 ($112$\,KiB per token, ${\approx}111$\,GB), so its running batch is smaller from the first decode step and differs again between the sampling and scoring passes. Its rate of assigning zero mass to its own sampled tokens, $0$ in the reference campaign, is here $19\%$ of trajectories at $n = 500$ and $52\%$ at $n = 2000$. \texttt{vllm} is unaffected: its trajectories agree bit-for-bit with the reference campaign over the first $500$ tokens.

The $n = 2000$ values are therefore specific to this campaign, and part of the mismatch component reflects the scoring-environment split rather than the engine difference. The multilevel evaluation of Table~\ref{tab:mlmc-landscape} is unaffected, since its estimand is defined by the campaign's own oracles.

\subsection{Dependence on the top-$k$ truncation}
\label{app:exp-topk-dependence}

The cross-engine distance is a property of the served distribution at a stated API truncation. Resampling and rescoring the reference configuration at top-$k \in \{5, 20, 100\}$ (the same truncation used for sampling and scoring) moves the estimate from $0.86$ to $0.59$ to $0.45$ at $n = 500$: the support-mismatch component collapses from $0.73$ to $0.06$ as the truncation loosens, while the shared-support component grows from $0.13$ to $0.39$ (Figure~\ref{fig:topk-sweep}). A substantial part of the distance at tight truncations therefore consists of probability mass moving across the truncation boundary; since the three measurements do not exhibit a plateau, we do not extrapolate a truncation-free limit, and we report cross-engine distances at a stated top-$k$; larger values of $k$ reduce the truncation-sensitive component.

\begin{figure}[t]
    \centering
    \includegraphics[width=0.5\textwidth]{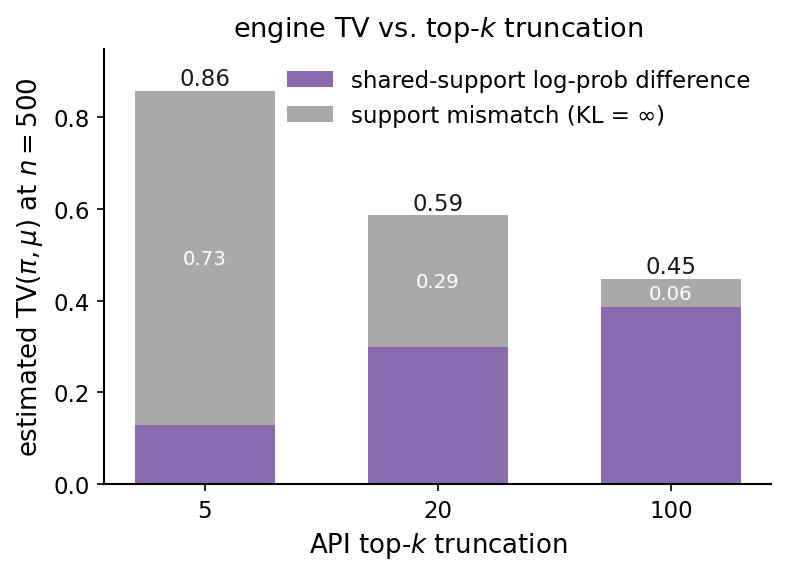}
    \caption{The cross-engine TV depends strongly on the API top-$k$ truncation ($n = 500$; sampling and scoring use the same truncation). As $k$ grows from $5$ to $100$ the estimate falls from $0.86$ to $0.45$: the support-mismatch component (gray; empirical $\mathrm{KL} = \infty$) collapses from $0.73$ to $0.06$, while the shared-support log-probability difference (purple) grows from $0.13$ to $0.39$.}
    \label{fig:topk-sweep}
\end{figure}

\subsection{Prompt dependence and token confidence}
\label{app:exp-entropy}

Figure~\ref{fig:engine-tv}(b) shows the cross-engine distance varying from $0.37$ (code) to $0.59$ (story) across prompts. A comparison of the two extremes suggests that this variation is largely attributable to the token-confidence profile of the content rather than to prompt-specific engine behavior (Figure~\ref{fig:entropy-mechanism}). The per-token log-probability differences of the two prompts belong to the same distribution family, the code prompt being only ${\sim}21\%$ narrower ($\sigma_\Delta = 0.040$ against $0.051$); the code prompt concentrates its sampled-token probabilities near $1$ (mean $0.70$ against $0.54$); and after binning per-token differences by the token probability $\hat p$, both prompts fall on a single curve $\sigma(\Delta \mid \hat p) \approx 0.09\,(1 - \hat p)$. On this pair of prompts, per-token engine disagreement is a function of token confidence alone, with the prompt entering only through its confidence profile: high-confidence tokens both disagree less and sit far from the truncation boundary (making support mismatch rare), so high-confidence content is smaller in both components. We report this as a suggestive observation from the two extreme prompts rather than a validated mechanism.

\begin{figure}[t]
    \centering
    \includegraphics[width=\linewidth]{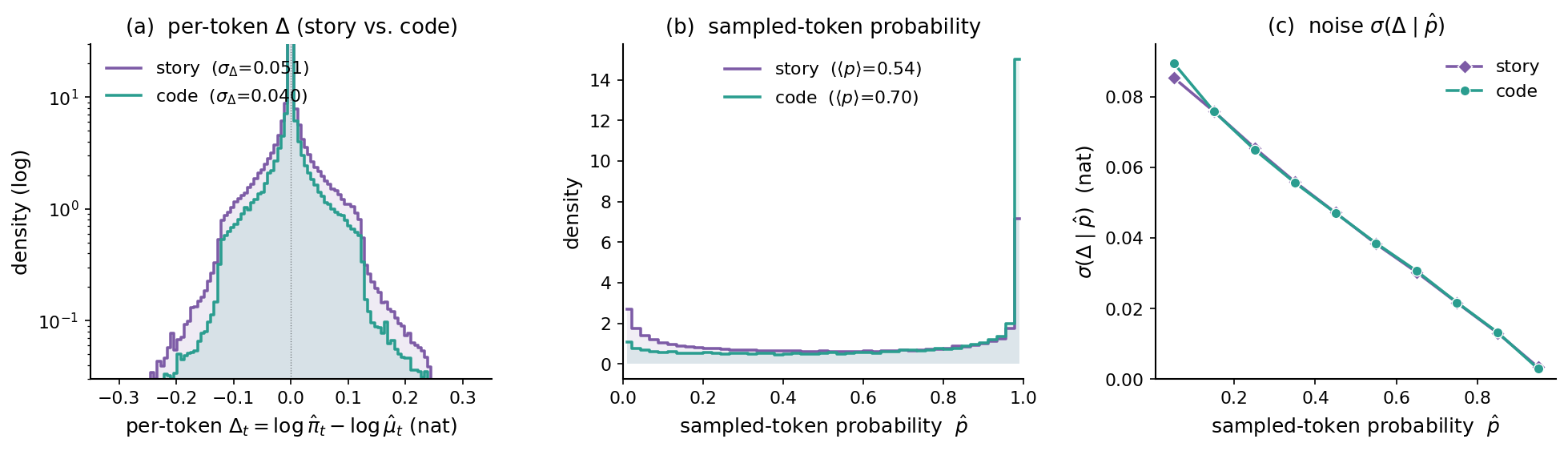}
    \caption{Why the cross-engine distance varies with the prompt (story vs.\ code; positions where both engines retain the sampled token). \emph{(a)} The per-token log-probability differences $\Delta_t$ of the two prompts belong to the same distribution family; code is ${\sim}21\%$ narrower ($\sigma_\Delta = 0.040$ vs.\ $0.051$). \emph{(b)} Sampled-token probabilities: the code prompt concentrates near $\hat p \approx 1$ (mean $0.70$ vs.\ $0.54$). \emph{(c)} Binning $\Delta_t$ by token probability, both prompts follow one curve $\sigma(\Delta \mid \hat p) \approx 0.09\,(1-\hat p)$: on this pair of prompts, per-token disagreement is a function of token confidence alone, with the prompt entering only through its confidence profile.}
    \label{fig:entropy-mechanism}
\end{figure}

\subsection{The cost of sample-only access}
\label{app:exp-access}

The access hierarchy of the introduction has a direct empirical counterpart. On the synthetic instance, running the same estimator with the three oracles of Definitions~\ref{def:next_token_logprob}--\ref{def:conditional-sketch} shows exact-logit access converging at the clean Monte Carlo rate (fitted slope $-0.50$), sample access converging markedly more slowly (slope $-0.32$), and noisy-logit access interpolating between them as $\sig^2$ varies; at RMSE $0.05$, sample access requires roughly $600\times$ more queries than logit access, and the gap widens with the accuracy requirement (Figure~\ref{fig:access-gap}). On the real engine data the cost is more extreme: the median per-sequence minimum realized-token probability is ${\sim}8\times10^{-4}$, so resolving every token from samples requires ${\gtrsim}10^3$ resamples per position, and a naive fixed-$r$ product estimator recovers only $0.16$ of the replay-oracle value $0.586$ even at $2.5\times10^{5}$ resamples per sequence. When the distributional difference lies partly on low-probability tokens, logit access is necessary in practice.

\begin{figure}[t]
    \centering
    \includegraphics[width=0.62\textwidth]{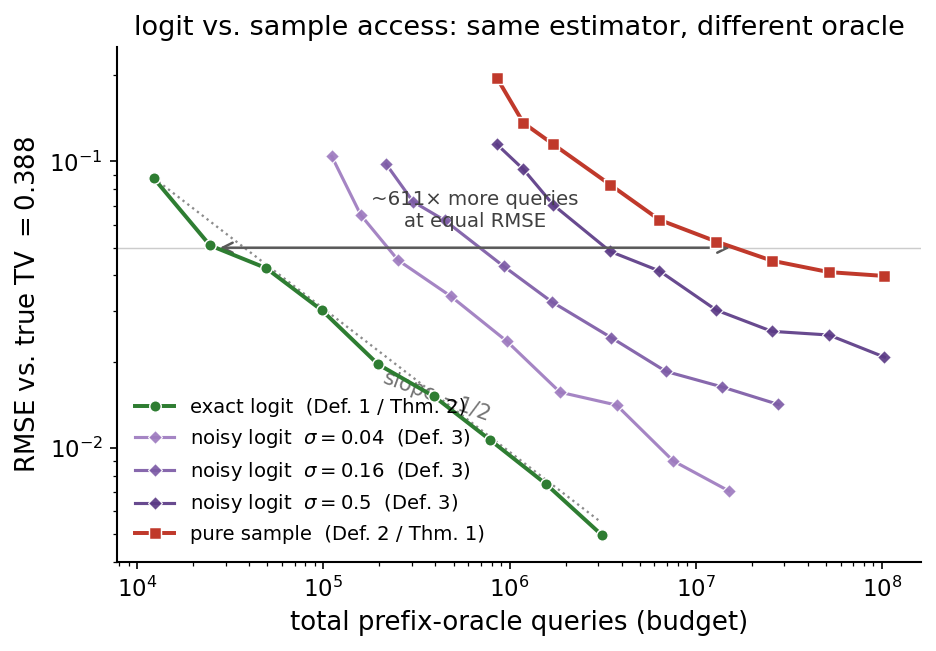}
    \caption{Logit versus sample access: the same estimator run with different oracles on the synthetic hard instance (true $\mathrm{TV} = 0.388$ by enumeration; $64$ repetitions per point). Exact-logit access (Definition~\ref{def:next_token_logprob}) converges at the Monte Carlo rate (slope $-0.50$); sample access (Definition~\ref{def:next_token_sample}) converges more slowly (slope $-0.32$); noisy-logit access (Definition~\ref{def:conditional-sketch}, $\sig \in \{0.04, 0.16, 0.5\}$) interpolates between them. At equal RMSE $0.05$, sample access requires ${\approx}600\times$ more queries.}
    \label{fig:access-gap}
\end{figure}

\end{document}